\newtheorem{theorem}{Theorem}[section]
\newtheorem{lemma}{Lemma}[section]
\newtheorem{proposition}{Proposition}[section]
\newtheorem{corollary}{Corollary}[section]
\newtheorem{definition}{Definition}[section]
\newtheorem{remark}{Remark}[section]
\def \Vap{\varepsilon}
\def \C{\mathcal{C}}
\def \PC{\mathcal{C}^{\perp}}
\def \OL{\overline{L}}
\def \mziu{\mathbf{z}_{i,u}}
\def \bniu{\bar{\mathbf{\nu}}_{i,u}}
\def \aiu{\alpha_{i,u}}
\def \biu{\beta_{i,u}}
\def \leqc{\leq_{c}}
\def \Qiu{\mathbf{Q}_{i,u}}
\def \Giu{\mathcal{G}_{i,u}}
\def \niu{\mathbf{\nu}_{i,u}}
\def \hg{\widehat{\gamma}}
\def \hvap{\widehat{\varepsilon}}
\def \bQ{\overline{\mathbf{Q}}}
\def \hL{\widehat{L}}
\def \hmu{\overline{\mathbf{U}}}
\def \hmj{\overline{\mathbf{J}}}
\def \oz{\overline{z}}
\def \wmz{\widehat{\mathbf{z}}}
\def \wmu{\widehat{\mathbf{U}}}
\def \wmj{\widehat{\mathbf{J}}}
\def \bG{\overline{\mathcal{G}}}
\def \bGiu{\overline{\mathcal{G}}_{i,u}}
\def \RXU{\mathbb{R}^{|\mathcal{X}\times\mathcal{U}|}}
\def \sziu{z_{i,u}}
\def \bviu{\overline{\Vap}_{i,u}}
\def \bQiu{\overline{Q}_{i,u}}
\def \hQiu{\widehat{Q}_{i,u}}
\def \pdelta{\delta^{\prime}}
\def \wziu{\widehat{z}_{i,u}}
\title{$QD$-Learning: A Collaborative Distributed Strategy for Multi-Agent Reinforcement Learning Through Consensus + Innovations}
\author{Soummya Kar$^{*}$, Jos\'e M.~F.~Moura$^{*}$ and H. Vincent Poor$^\dagger$
\thanks{$*$ Soummya Kar and Jos\'e M.~F.~Moura are with the Department of Electrical and Computer Engineering,
Carnegie Mellon University, Pittsburgh, PA, USA 15213 (e-mail:
soummyak@andrew.cmu.edu, moura@ece.cmu.edu).}
\thanks{$\dagger$ H. Vincent Poor is with the Department of Electrical Engineering, Princeton University, Princeton, NJ 08544, USA (e-mail: poor@princeton.edu).}
\thanks{The work of Jos\'e M.~F.~Moura was supported in part by the National Science Foundation under Grant \#~CCF--1011903 and in part by the Air Force Office of Scientific Research under Grant \#~FA--95501010291.

The work of H. Vincent Poor was supported by the National Science Foundation under Grant \#~DMS--1118605.}}
\begin{document}
\maketitle \thispagestyle{empty} \maketitle

\begin{abstract} The paper considers a class of multi-agent Markov decision processes (MDPs), in which the network agents respond differently (as manifested by the instantaneous one-stage random costs) to a global controlled state and the control actions of a remote controller. The paper investigates a distributed reinforcement learning setup with no prior information on the global state transition and local agent cost statistics. Specifically, with the agents' objective consisting of minimizing a network-averaged infinite horizon discounted cost, the paper proposes a distributed version of $Q$-learning, $\mathcal{QD}$-learning, in which the network agents collaborate by means of local processing and mutual information exchange over a sparse (possibly stochastic) communication network to achieve the network goal. Under the assumption that each agent is only aware of its local online cost data and the inter-agent communication network is \emph{weakly} connected, the proposed distributed scheme is almost surely (a.s.) shown to yield asymptotically the desired value function and the optimal stationary control policy at each network agent. The analytical techniques developed in the paper to address the mixed time-scale stochastic dynamics of the \emph{consensus + innovations} form, which arise as a result of the proposed interactive distributed scheme, are of independent interest.
\end{abstract}

\begin{keywords} Multi-agent stochastic control, Multi-agent learning, Distributed $Q$-learning, Distributed reinforcement learning, Collaborative network processing, Consensus + innovations, Mixed time-scale stochastic approximation
\end{keywords}

\section{Introduction}
\label{introduction}

\subsection{Background and Motivation}
\label{backmot} This paper is motivated by problems of multi-agent decision-making in dynamic and uncertain environments. The basic setup consists of a network of \emph{agents} and a controlled global state process or signal (a finite state Markov chain with controlled transitions). The state process is actuated by a remote controller whose actions and the resulting controlled state influence the statistical distribution of the random instantaneous costs incurred at the agents. The Markov decision process (MDP) that we consider pertains to collaborative welfare, i.e., specifically, the agent network is interested in obtaining the optimal stationary control strategy that minimizes the network-averaged infinite horizon discounted cost. Our multi-agent setup, for instance, resembles that of a thermostatically controlled \emph{smart} building, in which the global state represents environmental dynamics affecting the spatial temperature distribution and the agents correspond to sensors distributed throughout the building. In this application, the objective of the building thermostatic controller is possibly of the reference tracking form, i.e., for example, to minimize the average of the squared deviations of the measured temperatures at the sensing locations from a desired reference value. It is important to note that the term agent has a generic usage here, whose scope varies from one application to the other. As another example, in which the agents correspond to social or organizational entities, consider a financial market setting. Here, the global signal may often be related to the dynamic market interest rate affecting, for example, the investment patterns of the agents, in which case the economic policies (actions) of the regulator (controller) may be shaped by the welfare motive to sustain an overall economic growth. The scope of our formulation is not limited to the above examples, and several practical scenarios, ranging from large-scale load control for efficient demand-side management in energy networks~\cite{CH} to collaborative decision-making in multi-agent robotic networks~\cite{Veloso-RoboSoccer,Yuta}, abound that motivate our setup.

Reinforcement learning, of which $Q$-learning~\cite{Watkins-Q, Tsitsiklis-Q, Jaakkola-Q} is an instance, has proved to be a valuable practically applicable solution methodology for MDPs in scenarios involving lack of prior information on the problem statistics, that includes the transition behavior of the controlled state process and, in our multi-agent setting, the statistical distributions of the agents' instantaneous costs (generally varying from one agent to the other). Based on a reformulation of the Bellman equation, the class of $Q$-learning methods generate sequential (stochastic) approximations of the value function using instantiations of state-action trajectories, as opposed to relying on exact problem statistics. The state-action trajectory instantiations for value function learning may correspond to online real-time data obtained while implementing the control, for example,~\cite{Tsitsiklis-Q}, in which case the resulting $Q$-learning methods are, in fact, instances of direct adaptive control~\cite{Sutton-Q}, or, may correspond to training data obtained through simulated state-action responses, see~\cite{Barto-Q} for various exploration methods. However, a direct application of the above classical reinforcement learning techniques to our proposed multi-agent setting with possibly geographically distributed agents would correspond to the requirement that there exists a centralized computing architecture having access to the instantaneous one-stage costs of all the agents at all times (see Section~\ref{sys-model} for a more detailed and formal discussion). Since the instantaneous one-stage costs may only be observed locally at the agents, this, in turn, requires each network agent to forward its one-stage cost to the remote central location at all times, which may not be feasible due to limited energy resources at the agents and a bit-budgeted communication medium. This motivates us to consider a fully distributed alternative, the $\mathcal{QD}$-learning, in which the agents participate in autonomous in-network learning by means of local computation and communication over a \emph{sparse} possibly time-varying communication network.

There has been extensive research on multi-agent reinforcement learning (see~\cite{Shoham-Survey,Busoniu-Survey} for surveys). Various formulations, ranging from general competitive dynamic stochastic games~\cite{Littman-SG,Littman-SG1,Hu-SG,Bowling-SG} to so called fully cooperative~\cite{Claus-SG, Busoniu-38, Busoniu-41, Busoniu-45, Busoniu-46, Busoniu-48, Busoniu-78, Busoniu-63}, have been investigated (see~\cite{Busoniu-Survey} for a more complete taxonomy). From the network objective viewpoint, the fully cooperative formulations are somewhat similar in spirit to our setup, in that both consider the optimization of a unique global quantity (the one-stage global cost corresponding to the network average of the random one-stage local agent costs in the current setting) -- the key difference being that in the current formulation we impose the additional constraint that the instantaneous random realizations of the one-stage global costs are not directly observable at the agents. More specifically, at a given time instant, each agent has access to its local instantaneous one-stage cost only and not their network average; whereas (often by problem definition), the fully cooperative formulations mentioned above (see also~\cite{Veloso-elsevier} for several decentralized variants) assume that the global one-stage costs are available at the agents at all times. Although, not directly comparable as the afore-mentioned approaches often involve decentralized actuation at the agent level as opposed to a remote process controller in our framework, we emphasize that, in the current context, they would require the network-average of the local instantaneous one-stage costs to be available at all agents at all times, which, given that the agents may be geographically distributed, would correspond to all-to-all agent communication at all times. What contrasts our proposed distributed approach from the existing literature is that we consider a fully distributed setting in which the agents disseminate the locally sensed costs through mutual neighborhood communication over a (pre-specified) sparse communication graph.

However, we also point out two aspects of generic multi-agent reinforcement learning that are not addressed by the current formulation, namely, that of partial state observation and decentralized actuation. Specifically, we assume that each network agent can perfectly access or observe the global state. Moreover, in contrast to setups with local decentralized agent actuations, our framework applies to models in which the control actions are generated by a remote (global) controller. Further, we assume that the remote control actions are perfectly known at the agents\footnote{Note that, for our setup involving a remote controller, the assumption that each network agent has access to the control actions may not be restrictive for applications of interest. For instance, often the control actions or decisions of the remote global policy maker (controller) may be directly observable to the network entities, for example, in financial or social network applications, the market or network entities are typically informed about the policies of the global welfare organization. In situations, where such direct observability is not possible, the control information may often be disseminated through network-wide broadcasts by the remote controller; given that the remote controller, being a global entity, has sufficient energy resources and that the action broadcasts are finite-bit (due to the finiteness of the action space), the action observability assumption may not be restrictive in many scenarios of interest.}, which could be a limitation in some applications.

Our distributed approach is of the \emph{consensus + innovations} type~\cite{KarMouraRamanan-Est}, in which the agents simultaneously incorporate the information received from their communicating neighbors and the instantaneous locally sensed costs in the same update rule (see also~\cite{JSTSP-Kar-Moura,SICON-Kar-Moura-Poor,Sayed-LMS,Bajovic-detection} and~\cite{dimakiskarmourarabbatscaglione-11} for related literature). As such, the resulting value function update processes at the agents are mixed time-scale, in which the distinct potentials of consensus (corresponding to information mixing through neighborhood communication~\cite{jadbabailinmorse03,SensNets:Olfati04,olfatisaberfaxmurray07,dimakiskarmourarabbatscaglione-11,karmoura-randomtopologynoise,Nedic,tsitsiklisbertsekasathans86}) and local innovation (corresponding to the instantaneous locally sensed one-stage cost) are traded off appropriately. Without inter-agent communication (the consensus potential), the locally sensed one-stage costs at the agents are not sufficient to provide an observable approximation of the desired global cost functional. On the other hand, given that the inter-agent communication is not all-to-all, exact reconstruction of the instantaneous global one-stage cost is not possible, and, hence, it is imperative to appropriately balance the two potentials so that in the long term the network information diffuses sufficiently to guarantee \emph{asymptotic} global cost observability at the agents. By suitably designing the time-varying weight sequences associated with the consensus and innovation potentials, we show that the $\mathcal{QD}$-learning achieves optimal learning performance asymptotically, i.e., the network agents reach consensus on the desired value function and the corresponding optimal stationary control strategy, under minimal connectivity assumptions on the underlying communication graph (see Section~\ref{alg_DQ} for details). Similar to direct adaptive control formulations (see, for example,~\cite{Tsitsiklis-Q}), we allow generic statistical dependence on the state-action trajectories (processes) that drive the learning, which, in turn, in our distributed setting, leads to mixed time-scale stochastic evolutions that are non-Markovian. The analysis methods developed in the paper are of independent interest and we expect our techniques to be applicable to broader classes of distributed information processing and control problems with memory. From a technical viewpoint, in centralized or single-agent operation scenarios, the connection between $Q$-learning and stochastic approximation was made explicit in~\cite{Tsitsiklis-Q}. In this paper, we develop a distributed generalization of $Q$-learning, $\mathcal{QD}$-learning, along the lines of consensus and innovations, thus extending the above connection to distributed multi-agent scenarios.

On another note, the work in this paper is also related to problems of distributed optimization in multi-agent networks. The existing literature on distributed optimization (see, for example,~\cite{jakoveticxaviermoura-11,Giannakis-opt,Nedic-opt,rabbatnowakbucklew05,Ram-Nedich-Siam}) mostly consider static scenarios, in which, broadly the network goal is to minimize the sum (or average) of static (deterministic) local objectives, with each agent only aware of its local objective function. Our formulation and results may be viewed as an extension of the above to dynamic uncertain scenarios, in which the environmental dynamics is modeled as a finite-state Markov chain, and, instead of optimizing over a static variable, the agents are interested in obtaining a control policy that minimizes a long-term running cost. Further, in contrast to the static distributed optimization scenarios, the current formulation assumes no prior information on the statistics of the local one-stage costs and the transition probabilities of the controlled state process; instead learns them from sequentially sensed data (costs).

The rest of the paper is organized as follows. Section~\ref{notgraph} sets notation to be used in the sequel. The multi-agent learning setup is formulated in Section~\ref{sys-model}. Section~\ref{alg_DQ} presents the proposed distributed version of $Q$-learning, $\mathcal{QD}$-learning, in which we also formalize our assumptions on the system model and inter-agent communication. Intermediate results on the properties of distributed and mixed time-scale stochastic recursions are presented in Section~\ref{sec:int_app}, whereas, Section~\ref{sec:conv} is devoted to the convergence analysis of $\mathcal{QD}$-learning and the proof of the main result of the paper as stated in Section~\ref{alg_DQ}. Simulation studies comparing the convergence rate of the proposed distributed learning scheme with that of centralized $Q$-learning are presented in Section~\ref{sec:sim}. Finally, Section~\ref{conclusion} concludes the paper and discusses avenues for further research.

\subsection{Notation}
\label{notgraph} We denote the $k$-dimensional Euclidean space by
$\mathbb{R}^{k}$. The set of reals is denoted by $\mathbb{R}$, whereas, $\mathbb{R}_{+}$ denotes the non-negative reals. The partial order on $\mathbb{R}^{k}$ induced by component-wise ordering will be denoted by $\leqc$, i.e., for $\mathbf{x}$ and $\mathbf{y}$ in $\mathbb{R}^{k}$, the notation $\mathbf{x}\leqc\mathbf{y}$ will be used to indicate that each component of $\mathbf{x}$ is less than or equal to the corresponding of $\mathbf{y}$. The set of $k\times k$ real matrices is denoted by $\mathbb{R}^{k\times k}$. The corresponding subspace of symmetric matrices is denoted by $\mathbb{S}^{k}$. The cone of positive semidefinite matrices is denoted by $\mathbb{S}_{+}^{k}$, whereas, $\mathbb{S}_{++}^{k}$ denotes the subset of positive definite matrices. The $k\times k$ identity matrix is
denoted by $I_{k}$, while $\mathbf{1}_{k}$ and $\mathbf{0}_{k}$ denote
respectively the column vector of ones and zeros in
$\mathbb{R}^{k}$. Often the symbol $0$ is used to denote the $k\times p$ zero matrix, the dimensions being clear from the context. The operator
$\left\|\cdot\right\|$ applied to a vector denotes the standard
Euclidean $\mathcal{L}_{2}$ norm, while applied to matrices denotes the induced
$\mathcal{L}_{2}$ norm, which is equivalent to the matrix spectral radius for symmetric
matrices. The $\mathcal{L}_{\infty}$ norm for vectors and matrices is denoted by $\left\|\cdot\right\|_{\infty}$. For a matrix $A\in\mathbb{S}^{k}$, the ordered eigenvalues will be denoted by $\lambda_{1}(A)\leq\lambda_{2}(A)\leq\cdots\leq\lambda_{k}(A)$. The notation $A\otimes B$, whenever applicable, is used to denote the Kronecker product of matrices $A$ and $B$.

Time is assumed to be discrete or slotted throughout the paper. We reserve the symbols $t$ and $s$ to denote time, $\mathbb{T}_{+}$ denoting the discrete index set $\{0,1,2,\cdots\}$.

Throughout, we will assume the existence of a probability space $(\Omega,\mathcal{F})$ that is rich enough to support all the proposed random objects. For an event $\mathcal{B}\in\mathcal{F}$, the notation $\mathbb{I}(\mathcal{B})$ will be used to denote the corresponding indicator random variable, i.e., $\mathbb{I}(\mathcal{B})$ takes the value one on the event $\mathcal{B}$ and zero otherwise. Probability and expectation on $(\Omega,\mathcal{F})$ will be denoted by $\mathbb{P}(\cdot)$ and $\mathbb{E}[\cdot]$, respectively. All inequalities involving random objects are to be interpreted almost surely (a.s.), unless stated otherwise.

\textbf{Spectral graph theory}: The inter-agent communication topology may be described by an \emph{undirected} graph $G=(V,E)$, with $V=\left[1\cdots N\right]$ and~$E$ denoting the set of agents (nodes) and communication links (edges) respectively. The unordered pair $(n,l)\in E$ if there exists an edge between nodes~$n$ and~$l$. We only consider simple graphs, i.e., graphs devoid of self-loops and multiple edges. A graph is connected if there exists a path\footnote{A path between nodes $n$ and $l$ of length $m$ is a sequence
$(n=i_{0},i_{1},\cdots,i_{m}=l)$ of vertices, such that, $(i_{k},i_{k+1})\in E$ for all $0\leq k\leq m-1$.}, between each pair of nodes. The neighborhood of node~$n$ is
\begin{equation}
\label{def:omega} \Omega_{n}=\left\{l\in V\,|\,(n,l)\in
E\right\} 
\end{equation}
Node~$n$ has degree $d_{n}=|\Omega_{n}|$ (number of edges with~$n$ as one end point.) The structure of the graph can be described by the symmetric $N\times N$ adjacency matrix, $A=\left[A_{nl}\right]$, $A_{nl}=1$, if $(n,l)\in E$, $A_{nl}=0$, otherwise. Let the degree matrix  be the diagonal matrix $D=\mbox{diag}\left(d_{1}\cdots d_{N}\right)$. By definition, the positive semidefinite matrix $L=D-A$ is called the graph Laplacian matrix. The eigenvalues of $L$ can be ordered as $0=\lambda_{1}(L)\leq\lambda_{2}(L)\leq\cdots\leq\lambda_{N}(L)$, the eigenvector corresponding to $\lambda_{1}(L)$ being $(1/\sqrt{N})\mathbf{1}_{N}$. The multiplicity of the zero eigenvalue equals the number of connected components of the network; for a connected graph, $\lambda_{2}(L)>0$. This second eigenvalue is the algebraic connectivity or the Fiedler value of
the network; see \cite{FanChung,Mohar} for detailed treatment of graphs and their spectral theory.

\section{System Model}
\label{sys-model} Let $\{\mathbf{x}_{t}\}$ be a controlled Markov chain taking values in a finite state space $\mathcal{X}=[1,\cdots,M]$. Denoting by $\mathcal{U}$ the set (finite) of control actions $\mathbf{u}$, we assume\footnote{The letters $i$ and $j$ will be reserved mostly to denote a generic element of the state space $\mathcal{X}$, whereas, $u$ will denote a generic element of the control space $\mathcal{U}$. Also, note that the state and control stochastic processes are denoted by bold symbols, $\{\mathbf{x}_{t}\}$ and $\{\mathbf{u}_{t}\}$ respectively, although they assume a finite number of values only.} that the state transition is governed by
\begin{equation}
\label{sys-trans}
\mathbb{P}\left(\mathbf{x}_{t+1}=j~|~\mathbf{x}_{t}=i,\mathbf{u}_{t}=u\right)=p_{i,j}^{u}
\end{equation}
for every $i,j\in\mathcal{X}$ and $u\in\mathcal{U}$, where the state transition probabilities satisfy $\sum_{j\in\mathcal{X}}p_{i,j}^{u}=1$ for all $i\in\mathcal{X}$.

We further assume that there are $N$ agents, with agent $n$ incurring a random one-stage cost\footnote{Note that the instantaneous costs $c_{n}(\cdot)$ depend only on the current state of the process and the control applied, but not on the successor state as is the case with some control problems. However, the latter formulations may often be reduced to the former (i.e., current state and control dependence only) by proper state augmentation.} $c_{n}(i,u)$ whenever control $u$ is applied at state $i$. For a stationary control policy $\pi$, i.e., where $\{\mathbf{u}_{t}\}$ satisfies $\mathbf{u}_{t}=\pi(\mathbf{x}_{t})$ for some function $\pi:\mathcal{X}\mapsto\mathcal{U}$, the state process $\{\mathbf{x}^{\pi}_{t}\}$ (the superscript $\pi$ is used to indicate the dependence on the control policy $\pi$) evolves as a homogenous Markov chain with\footnote{Note that, in general, the set of actions $\mathcal{U}$ is state-dependent, which can be accommodated in our formulation by redefining $\mathcal{U}$ to be the union of all state-dependent action sets and modifying the one-stage costs appropriately.}
\begin{equation}
\label{sys-trans-stat}
\mathbb{P}\left(\mathbf{x}^{\pi}_{t+1}=j~|~\mathbf{x}^{\pi}_{t}=i\right)=p_{i,j}^{\pi(i)}.
\end{equation}
For a stationary policy $\pi$ and initial state $i$ of the process $\{\mathbf{x}^{\pi}_{t}\}$, the infinite horizon discounted cost is given by
\begin{equation}
\label{inf-cost}
V_{i}^{\pi}=\limsup_{T\rightarrow\infty}\mathbb{E}\left[\frac{1}{N}\sum_{n=1}^{N}\sum_{t=0}^{T}\gamma^{t}c_{n}\left(\mathbf{x}^{\pi}_{t},\pi(\mathbf{x}^{\pi}_{t})\right)~|~\mathbf{x}^{\pi}_{0}=i\right],
\end{equation}
where $0<\gamma<1$ is the discounting factor. Note that the cost $V_{i}^{\pi}$, defined as such, is a global (centralized) cost, as it involves the one-stage costs of all the agents. The Markov decision problem (MDP) that we consider in this paper concerns the evaluation of the optimal infinite horizon discounted cost
\begin{equation}
\label{opt-cost}
V^{\ast}_{i}=\inf_{\pi}V_{i}^{\pi}
\end{equation}
and the associated stationary policy $\pi^{\ast}$, provided the latter exists.

Let $\mathbf{V}^{\ast}\in\mathbb{R}^{M}$ denote $[V^{\ast}_{1},\cdots,V^{\ast}_{M}]^{T}$. Denote by $\mathcal{T}:\mathbb{R}^{M}\mapsto\mathbb{R}^{M}$ the (centralized) dynamic programming operator with
\begin{equation}
\label{def_T} \mathcal{T}_{i}(\mathbf{V})=\min_{u\in\mathcal{U}}\left\{\frac{1}{N}\sum_{n=1}^{N}\mathbb{E}[c_{n}(i,u)]+\gamma\sum_{j\in\mathcal{X}}p_{i,j}^{u}V_{j}\right\},
\end{equation}
$\mathcal{T}_{i}(\cdot)$ denoting the $i$-th component functional of $\mathcal{T}(\cdot)$, such that, $\mathcal{T}(\mathbf{V})=[\mathcal{T}_{1}(\mathbf{V}),\cdots,\break\mathcal{T}_{M}(\mathbf{V})]^{T}$ for each $\mathbf{V}\in\mathbb{R}^{M}$. The Bellman equation~\cite{Bertsekas-DP} asserts that $\mathbf{V}^{\ast}$ is a fixed point of $\mathcal{T}(\cdot)$, i.e., $\mathcal{T}(\mathbf{V}^{\ast})=\mathbf{V}^{\ast}$. Further, for discounting factors $\gamma$ that are strictly less than one, it may be readily seen~\cite{Bertsekas-DP} that the dynamic programming operator $\mathcal{T}(\cdot)$ is a strict contraction, thus implying the value function $\mathbf{V}^{\ast}$ to be its unique fixed point. As such, starting with an arbitrary initial approximation $\mathbf{V}_{0}\in\mathbb{R}^{M}$, one obtains a sequence of iterates $\{\mathbf{V}_{t}\}$ of $\mathcal{T}(\cdot)$, with $\mathbf{V}_{t}=\mathcal{T}^{t}(\mathbf{V}_{0})$, such that, $\mathbf{V}_{t}\rightarrow\mathbf{V}^{\ast}$ as $t\rightarrow\infty$. The above iterative construction forms the basis of classical policy iteration methods for evaluating the desired value function $V^{\ast}$ (and hence the corresponding optimal policy $\pi^{\ast}(\cdot)$), at least for the considered scenario with $\gamma<1$. However, in doing so, i.e., in constructing successive iterates of $\mathcal{T}(\cdot)$, the value iteration techniques assume that the problem statistics (the expected one-stage costs and the state transition probabilities $p_{i,j}^{u}$) are perfectly known apriori.

Reinforcement learning methods are motivated by scenarios involving lack of information about the problem statistics. Based on a reformulation of the Bellman equation, $\mathcal{T}(\mathbf{V}^{\ast})=\mathbf{V}^{\ast}$, the class of $Q$-learning methods generate sequential (stochastic) approximations of the value function\footnote{To be precise, as will be shown later, instead of generating successive approximations of the state-value function $V^{\ast}_{i}$, $i\in\mathcal{X}$, $Q$-learning methods generate approximations of the so-called state-action value functions $Q^{\ast}_{i,u}$, $(i,u)\in\mathcal{X}\times\mathcal{U}$, (often known as the $Q$-matrices or factors) from which the desired value functions may be recovered.} using instantiations of state-action trajectories, as opposed to relying on exact problem statistics. The state-action trajectory instantiations for value function learning may correspond to online real-time data obtained while implementing the control, in which case the resulting $Q$-learning methods are, in fact, instances of direct adaptive control~\cite{Sutton-Q}, or, may correspond to offline training data obtained through simulated state-action responses. As far as analysis is concerned, the former subsumes the latter, as trajectories that are obtained in the process of real-time control implementation incur temporal statistical dependencies due to memory in the sequential control selection task. While the $Q$-learning techniques discussed above are appealing as they relax the requirement of prior system model knowledge, in the context of our multi-agent setting, they rely on a centralized architecture that requires the instantaneous agent one-stage costs $c_{n}(\mathbf{x}_{t},\mathbf{u}_{t})$ (for each network agent $n$) to be available at a centralized computing resource at all times $t$ with a view to obtaining an approximation of the sum of expectations in~\eqref{def_T}. Since, the instantaneous one-stage costs may only be observed at the agents, this, in turn, requires each network agent to transmit its one-stage cost to the remote central location at all times, which may not be feasible due to limited energy resources at the agents and a bit-budgeted communication medium. This motivates us to consider a fully distributed alternative, in which the agents autonomously engage in the learning process through collaborative local communication and computation.

\section{$\mathcal{QD}$-learning: Distributed Collaborative $Q$-Learning}
\label{alg_DQ} In this section, we present a distributed scheme for multi-agent $Q$-learning, the $\mathcal{QD}$-learning. Like its centralized counterpart, $\mathcal{QD}$-learning is based on instantiations of state-action trajectories. In general, the state-action trajectories are sample paths of stochastic processes $\{\mathbf{x}_{t}\}$ and $\{\mathbf{u}_{t}\}$ taking values in $\mathcal{X}$ and $\mathcal{U}$, respectively. In addition, we have the local one-stage cost processes, $\{c_{n}(\mathbf{x}_{t},\mathbf{u}_{t})\}$ for each agent $n$, as a result of the randomly generated actions $\mathbf{u}_{t}$ and states $\mathbf{x}_{t}$ that are accessible to the corresponding agents. The goal of $\mathcal{QD}$-learning scheme is to ensure that each agent eventually learns the value function $\mathbf{V}^{\ast}$ based on the stochastic processes $\{\mathbf{x}_{t}\}$, $\{\mathbf{u}_{t}\}$, and the one-stage cost processes. To formalize the distributed agent learning, we impose the following measurability requirements that characterize the locally accessible agent information over time for decision-making.

\textbf{(M.1)}: \emph{There exists a complete probability space $\left(\Omega,\mathcal{F},\mathbb{P}\right)$ with a filtration $\{\mathcal{F}_{t}\}$, such that the state and control processes, $\{\mathbf{x}_{t}\}$ and $\{\mathbf{u}_{t}\}$, respectively, are adapted to $\mathcal{F}_{t}$. The conditional probability law governing the controlled transitions of $\{\mathbf{x}_{t}\}$ satisfies
\begin{equation}
\label{M1state}
\mathbb{P}\left(\mathbf{x}_{t+1}=j~|~\mathcal{F}_{t}\right)=p^{\mathbf{u}_{t}}_{\mathbf{x}_{t},j},
\end{equation}
and we require for each $n$
\begin{equation}
\label{M1cost}
\mathbb{E}\left[c_{n}(\mathbf{x}_{t},\mathbf{u}_{t})~|~\mathcal{F}_{t}\right]=\mathbb{E}\left[c_{n}(\mathbf{x}_{t},\mathbf{u}_{t})~|~\mathbf{x}_{t},\mathbf{u}_{t}\right],
\end{equation}
which equates to $\mathbb{E}[c_{n}(i,u)]$ on the event $\{\mathbf{x}_{t}=i,\mathbf{u}_{t}=u\}$, i.e., conditioned on the current state-action pair, the one-stage random costs are independent of $\mathcal{F}_{t}$. Further, we assume that the random cost $c_{n}(\mathbf{x}_{t},\mathbf{u}_{t})$ is adapted to $\mathcal{F}_{t+1}$ for each $t$. Note that~\eqref{M1state}-\eqref{M1cost} is a formal restatement of the fact that the online state-action trajectories and the associated costs that are to be used for value function learning, satisfy the controlled Markov transitions in accordance with the MDP. The obvious choice of such a filtration would be the natural one induced by the processes\footnote{For a collection $\mathcal{J}$ of random objects, the notation $\sigma(\mathcal{J})$ denotes the smallest $\sigma$-algebra with respect to which all the random objects in $\mathcal{J}$ are measurable.}, i.e.,
\begin{equation}
\label{M1filt}\mathcal{F}_{t}=\sigma\left(\{\mathbf{x}_{s},\mathbf{u}_{s}\}_{s\leq t},\{c_{n}(\mathbf{x}_{s},\mathbf{u}_{s})\}_{n\in\mathbb{N},s<t}\right),
\end{equation}
provided the state-actions are generated according to the given MDP dynamics.}

\emph{Also, we assume that the one-stage random costs possess super-quadratic moments, i.e., in particular, we assume there exists a constant $\Vap_{1}>0$ (could be arbitrarily small) such that
\begin{equation}
\label{M1moment}
\mathbb{E}\left[c_{n}^{2+\Vap_{1}}(i,u)\right]<\infty
\end{equation}
for all $n$, $i$, and $u$.}

Note that $\mathcal{F}_{t}$, as defined above, represents the global network information at each time instant $t$. In the sequel, we will also need to characterize the local information $\mathcal{F}_{n}(t)$ available at each agent $n$ at time $t$ on which the agent's instantaneous local decision-making is based. The local information at an agent $n$ reflects its locally sensed cost data and the messages or information it obtains from its neighbors over time among other locally observed variables, such as the instantaneous state and control data. To formalize, let $m_{n,l}(t)$ denote the message that agent $n$ obtains from its neighbor $l\in\Omega_{n}(t)$ at time $t$, where $\Omega_{n}(t)$ denotes the time-varying (possibly stochastic) communication neighborhood of agent $n$ at time $t$. The local information $\mathcal{F}_{n}(t)$ at agent $n$ at time $t$ (up to the beginning of time slot $t+1$) is then formally represented by the $\sigma$-algebra
\begin{equation}
\label{def_locinf}\mathcal{F}_{n}(t)=\sigma\left(\left\{\mathbf{x}_{s},\mathbf{u}_{s}, \{m_{n,l}(s)\}_{l\in\Omega_{n}(s)}\right\}_{s\leq t}, \left\{c_{n}(\mathbf{x}_{s},\mathbf{u}_{s})\right\}_{s<t}\right).
\end{equation}
Further, for the inter-agent message exchange process to be consistent with respect to (w.r.t.) the local information sequences $\{\mathcal{F}_{n}(t)\}$, we require
\begin{equation}
\label{req_loc_mes} m_{l,n}(t)\in\mathcal{F}_{n}(t)
\end{equation}
for each pair of agents $(n,l)$, such that, $n\in\Omega_{l}(t)$ at all times $t$. The key difference between the global network information $\mathcal{F}_{t}$ (as would be available to a fictitious center for decision-making) and the local agent information $\mathcal{F}_{n}(t)$ is in terms of accessibility of the reward information -- the latter consists of only the locally sensed reward data, whereas the former involves the sum-total network reward information from all agents at all times. The lack of global information at the local agent level justifies the need for collaboration, in which the agents engage in mutual neighborhood message exchanges with a view to eventually disseminating the required reward statistics across the network. With the above formalism of distributed collaboration, in particular~\eqref{def_locinf}-\eqref{req_loc_mes}, it is readily seen that (as expected),
\begin{equation}
\label{loc-glob-filt}
\mathcal{F}_{t}=\bigvee_{n=1}^{N}\mathcal{F}_{n}(t),
\end{equation}
where $\bigvee$ denotes the `join' of $\sigma$-algebras, i.e., the global information at an instant $t$ is the sum-total of the local agent information, provided $\mathcal{F}_{t}$ corresponds to the natural filtration induced by the state-action pairs and the instantaneous rewards as in~\eqref{M1filt}. Moreover, in general, we have $\mathcal{F}_{n}(t)\subset\mathcal{F}_{t}$ for each $n$ and $t$, the inclusion being strict usually if the inter-agent communication graph is not complete. In general, we are interested in applications with sparse inter-agent connectivity in which, even with agent collaboration, the local information sets $\mathcal{F}_{n}(t)$ are strict subsets of the global $\mathcal{F}_{t}$ as explained above, and the fundamental goal of this paper is to design distributed message exchange and local processing policies that in the long-run lead to sufficient network-wide information dissemination, such that, each agent eventually obtains an accurate estimate of the desired value function $\mathbf{V}^{\ast}$. As will be seen, a necessary condition for successful eventual information dissemination involves \emph{long-term connectivity} of the inter-agent communication graph. To this end, we assume that the time-varying stochastic inter-agent communication graphs (generating the neighborhoods $\Omega_{n}(t)$ for each agent $n$ at every instant $t$) satisfies the following weak connectivity condition:

\textbf{(M.2)}:  \emph{To account for possible random packet losses or infrastructure failures, as is commonly encountered in wireless multi-agent communication settings, we assume that the agent network at time~$t$ is modeled as
an undirected graph, $G_{t}=(V,E_{t})$, with the graph Laplacians being a
sequence of i.i.d.~Laplacian matrices $\{L_{t}\}$. Specifically, we assume that $L_{t}$ is $\mathcal{F}_{t+1}$ adapted and is independent of $\mathcal{F}_{t}$. We do
not make any distributional assumptions on the link failure model.
Although the link failures, and so the Laplacians, are independent
at different times, during the same iteration, the link failures
can be spatially dependent, i.e., correlated. This is more general
and subsumes the erasure network model, where the link failures
are independent over space \emph{and} time. Wireless agent
networks motivate this model since interference among the wireless
communication channels correlates the link failures over space,
while, over time, it is still reasonable to assume that the
channels are memoryless or independent. Finally, note that we do not
require that the random instantiations~$G_{t}$ of the graph be
connected; in fact, it is possible to have all these
instantiations to be disconnected. We only require that the graph
stays connected on \emph{average}. Denoting $\mathbb{E}[L_{t}]$ by $\overline{L}$, this is captured by assuming
$\lambda_{2} \left(\overline{L}\right)>0$. This weak connectivity requirement enables us to
capture a broad class of asynchronous communication models; for
example, the random asynchronous gossip protocol analyzed
in~\cite{Boyd-Gossip} satisfies $\lambda_{2}\left(\overline{L}\right)>0$ and hence falls under this framework. On the other hand, we assume that the inter-agent communication is noise-free and unquantized in the event of an active communication link; the problem of quantized data exchange in networked control systems (see, for example,~\cite{tm04,ms03,Li-Baillieul,karmoura-quantized}) is an active research topic.}

\textbf{(M.3)}: \emph{At each $t$, the Laplacian $L_{t}$ is assumed to be independent of the instantaneous costs $c_{n}(\mathbf{x}_{t},\mathbf{u}_{t})$ conditioned on the state-action action pair $(\mathbf{x}_{t},\mathbf{u}_{t})$.}

We now consider $\mathcal{QD}$-learning, in which network agents engage in mutual collaboration with a view to learning the true value function $\mathbf{V}^{\ast}$ eventually.

Before presenting the distributed update rule, for each pair $(i,u)$, let us introduce the sequence of random times $\{T_{i,u}(k)\}$, such that, $T_{i,u}(k)$ denotes the $(k+1)$-th sampling instant of the state-action pair $(i,u)$, i.e.,
\begin{equation}
\label{def_Tiu}T_{i,u}(k)=\inf\left\{t\geq 0~|~\sum_{s=0}^{t}\mathbb{I}_{(\mathbf{x}_{t},\mathbf{u}_{t})=(i,u)}=k+1\right\},
\end{equation}
for each $k\geq 0$, in which we adopt the convention that the infimum of an empty set is $\infty$. It can be shown that the random time $T_{i,u}(k)$, for each $k$ and pair $(i,u)$, is a stopping time w.r.t. the filtration $\{\mathcal{F}_{t}\}$. Further, note that, since we assume that the state-action pairs $(\mathbf{x}_{t},\mathbf{u}_{t})$ are accessible to the agents also, see~\eqref{def_locinf}, $T_{i,u}(k)$, for each $k$, qualifies as a stopping time w.r.t the local filtrations $\{\mathcal{F}_{n}(t)\}$ as well. The following requirement that ensures each state-action pair $(i,u)$ is observed (simulated) infinitely often is imposed:

\textbf{(M.4)}: \emph{For each state-action pair $(i,u)$ and each $k\geq 0$, the stopping time $T_{i,u}(k)$ is a.s. finite, i.e.,}
\begin{equation}
\label{cond-stopping}\mathbb{P}\left(T_{i,u}(k)<\infty\right)=1.
\end{equation}
It is to be noted, that~\textbf{(M.4)} is required in all forms of centralized $Q$-learning, either real-time direct adaptive control based or simulation based approaches, for desired convergence with generic initial conditions (approximations).

\textbf{$\mathcal{QD}$-learning}: In $\mathcal{QD}$-learning, each network agent $n$ maintains a $\mathbb{R}^{|\mathcal{X}\times\mathcal{U}|}$-valued sequence $\{\mathbf{Q}^{n}_{t}\}$ (approximations of the so-called $Q$ matrices) with components $Q^{n}_{i,u}(t)$ for every possible state-action pair $(i,u)$.
With this, the sequence $\{Q^{n}_{i,u}(t)\}$ at each agent $n$ for each pair $(i,u)$ evolves in a collaborative distributed fashion as follows:
\begin{align}
\label{up_Q} Q^{n}_{i,u}(t+1)=Q^{n}_{i,u}(t)-\beta_{i,u}(t)\sum_{l\in\Omega_{n}(t)}\left(Q^{n}_{i,u}(t)-Q^{l}_{i,u}(t)\right)\\+\alpha_{i,u}(t)\left(c_{n}(\mathbf{x}_{t},\mathbf{u}_{t})+\gamma\min_{v\in\mathcal{U}}Q^{n}_{\mathbf{x}_{t+1},v}(t)-Q^{n}_{i,u}(t)\right),
\end{align}
where the weight sequences $\{\beta_{i,u}(t)\}$ and $\{\alpha_{i,u}(t)\}$ are $\mathcal{F}_{n}(t)$-adapted stochastic processes for each pair $(i,u)$ and given by
\begin{equation}
\label{def_beta}
\beta_{i,u}(t)=\left\{
\begin{array}{cc}
                    \frac{b}{(k+1)^{\tau_{2}}} & \mbox{if $t=T_{i,u}(k)$ for some $k\geq 0$} \\
                    0 & \mbox{otherwise},
                   \end{array}
          \right.
\end{equation}
and
\begin{equation}
\label{def_alpha}
\alpha_{i,u}(t)=\left\{
\begin{array}{cc}
                    \frac{a}{(k+1)^{\tau_{1}}} & \mbox{if $t=T_{i,u}(k)$ for some $k\geq 0$} \\
                    0 & \mbox{otherwise},
                   \end{array}
          \right.
\end{equation}
$a$ and $b$ being positive constants. In other words, as reflected by the weight sequences~\eqref{def_beta}-\eqref{def_alpha}, at each agent $n$, the component $Q^{n}_{i,u}(t)$ is updated at an instant\footnote{Note that the phrase \emph{updated at an instant $t$} refers to the possible transition of $Q^{n}_{i,u}(t)$ to $Q^{n}_{i,u}(t+1)$, an event which actually occurs after the one-stage cost $c_{n}(\mathbf{x}_{t},\mathbf{u}_{t})$ has been incurred and the successor state $\mathbf{x}_{t+1}$ has been reached. In terms of implementation, such an update may be realized at the end of the time slot $t$.} $t$ \emph{iff} the current state-action pair $(\mathbf{x}_{t},\mathbf{u}_{t})$ corresponds to $(i,u)$, and otherwise stays constant.

In addition to the processes $\{\mathbf{Q}^{n}_{t}\}$, each agent $n$ maintains another $\{\mathcal{F}_{n}(t)\}$-adapted $\mathbb{R}^{|\mathcal{X}|}$-values process $\{\mathbf{V}^{n}_{t}\}$, that serves as an approximation of the desired value function $\mathbf{V}^{\ast}$. The $i$-th component of $\mathbf{V}^{n}_{t}$, $V^{n}_{i}(t)$, is successively refined as
\begin{equation}
\label{up_V} V^{n}_{i}(t)=\min_{u\in\mathcal{U}}Q^{n}_{i,u}(t),
\end{equation}
for $i=1,\cdots,M$.

The $\{\mathcal{F}_{n}(t)\}$-adaptability of the weight sequences, for each $n$, follows from the fact that the random times $T_{i,u}(k)$, for all $k$, are stopping times w.r.t. $\{\mathcal{F}_{n}(t)\}$. With the identification that
\begin{equation}
\label{mess_id} m_{n,l}(t)
\doteq \mathbf{Q}^{l}_{t}~~\mbox{if $l\in\Omega_{n}(t)$},
\end{equation}
where $m_{n,l}(t)$ denotes the message sent to agent $n$ by agent $l$ at time $t$, it is readily seen that, for each $n$, the process $\{\mathbf{Q}^{n}_{t}\}$ is well-defined and adapted to the local filtration $\{\mathcal{F}_{n}(t)\}$. We note that the update rule in~\eqref{up_Q} is of the consensus + innovations form, in that it consists of the interplay between an agreement or consensus potential reflecting agent collaboration, and a local innovation potential that involves the incorporation of newly obtained intelligence through local sensing of the instantaneous cost. The convergence of the resulting algorithm may only be achieved by intricately trading off these potentials, which, in turn, imposes further restrictions on the algorithm weight sequences as follows:

\textbf{(M.5)}: \emph{The constants $\tau_{1}$ and $\tau_{2}$ in~\eqref{def_beta}-\eqref{def_alpha} are assumed to satisfy $\tau_{1}\in (1/2, 1]$ and $0<\tau_{2}<\tau_{1}-1/(2+\Vap_{1})$, with $\Vap_{1}$ being defined in~\eqref{M1moment}. The above together with assumption~\textbf{(M.4)} guarantee that the excitations from the consensus and innovation potentials are persistent, i.e., the (stochastic) sequences $\{\alpha_{i,u}(t)\}$ and $\{\beta_{i,u}(t)\}$ sum to $\infty$, for each state-action pair $(i,u)$. They further guarantee that the innovation weight sequences are square summable, i.e., $\sum_{t\geq 0}\alpha_{i,u}^{2}(t)<\infty$ a.s., and that, the consensus potential dominates the innovation potential eventually, i.e., $\beta_{i,u}(t)/\alpha_{i,u}(t)\rightarrow\infty$ a.s. as $t\rightarrow\infty$ for each pair $(i,u)$.}

\begin{remark}
\label{rem:weights} 
We comment on the choice of the weight sequences $\{\beta_{i,u}(t)\}$ and $\{\alpha_{i,u}(t)\}$ associated with the consensus and innovation potentials respectively. From~\textbf{(M.5)} (and~\textbf{(M.4)}) we note that both the excitations for agent-collaboration (consensus) and local innovation are persistent, i.e., the sequences $\{\beta_{i,u}(t)\}$ and $\{\alpha_{i,u}(t)\}$ sum to $\infty$ -  a standard requirement in stochastic approximation type algorithms to drive the updates to the desired limit from arbitrary initial conditions. Further, the square summability of $\{\alpha_{i,u}(t)\}$ ($\tau_{1}>1/2$) is required to mitigate the effect of stochasticity (due to the randomness involved in the one-stage costs and the state transitions) the innovations. The requirement $\beta_{i,u}(t)/\alpha_{i,u}(t)\rightarrow\infty$ as $t\rightarrow\infty$ ($\tau_{1}>\tau_{2}$), i.e., the asymptotic domination of the consensus potential over the local innovations ensures the right information mixing thus, as shown below, leading to \emph{optimal} convergence. Technically, the different asymptotic decay rates of the two potentials lead to mixed time-scale stochastic recursions whose analyses require new techniques in stochastic approximation as developed in the paper.

We further comment on the constants $a$ and $b$ in~\eqref{def_beta}-\eqref{def_alpha} affecting the weight sequences. While the main results and the proof arguments in this paper will continue to hold for arbitrary positive constants $a$ and $b$, to simplify the exposition that follows we further assume that the constants are small enough, such that, for each time instant $t$ and state-action pair $(i,u)$, the matrix $\left(I_{N}-\beta_{i,u}(t)L_{t}-\alpha_{i,u}(t)I_{N}\right)$ is non-negative definite. Noting that the largest eigenvalue of the Laplacian $L_{t}$, at an instant $t$, is upper-bounded by $N$, the number of network agents, the above condition is ensured by requiring $a$ and $b$ to satisfy $a+Nb\leq 1$. We emphasize that the above requirement on $a$ and $b$ is by no means necessary, but greatly reduces the analytical overhead. In fact, for arbitrary positive $a$ and $b$, ~\textbf{(M.4)}-\textbf{(M.5)} imply that, for each state-action pair $(i,u)$, there exists $t_{0}(i,u)\geq 0$ (possibly random), such that the matrix $\left(I_{N}-\beta_{i,u}(t)L_{t}-\alpha_{i,u}(t)I_{N}\right)$ is non-negative definite for $t\geq t_{0}(i,u)$.
\end{remark}

The rest of the paper is devoted to the convergence analysis of the proposed $\mathcal{QD}$-learning, in which our goal is to show that, for each $n$, $\mathbf{V}^{n}_{t}\rightarrow V^{\ast}$ a.s. as $t\rightarrow\infty$, so that eventually each agent obtains the accurate value function and the corresponding optimal stationary strategy (through~\eqref{up_V}). To this end, for each $n$ define the \emph{local} $\mathcal{QD}$-learning operator $\mathcal{G}^{n}:\RXU\mapsto\RXU$ whose components $\mathcal{G}^{n}_{i,u}:\RXU\mapsto\mathbb{R}$ are given by
\begin{equation}
\label{def_Giu}\mathcal{G}^{n}_{i,u}(Q)=\mathbb{E}\left[c_{n}(i,u)\right]+\gamma\sum_{j\in\mathcal{X}}p_{i,j}^{u}\min_{v\in\mathcal{U}}Q_{j,v},
\end{equation}
for all $Q=\{Q_{i,u}\}\in\RXU$. Noting that under~\textbf{(M.1)}, on $\{\mathbf{x}_{t}=i,\mathbf{u}_{t}=u\}$,
\begin{equation}
\label{def_Giu1}\mathbb{E}\left[c_{n}(\mathbf{x}_{t},\mathbf{u}_{t})~|~\mathcal{F}_{t}\right]=\mathbb{E}\left[c_{n}(i,u)\right],
\end{equation}
and
\begin{equation}
\label{def_Giu2}
\mathbb{E}\left[\min_{v\in\mathcal{U}}Q^{n}_{\mathbf{x}_{t+1},v}(t)~|~\mathcal{F}_{t}\right]=\sum_{j\in\mathcal{X}}p_{i,j}^{u}\min_{v\in\mathcal{U}}Q^{n}_{j,v}(t),
\end{equation}
the recursive update in~\eqref{up_Q}, for each state-action pair $(i,u)$, may be rewritten as
\begin{align}
\label{def_Giu3}
Q^{n}_{i,u}(t+1)=Q^{n}_{i,u}(t)-\beta_{i,u}(t)\sum_{l\in\Omega_{n}(t)}\left(Q^{n}_{i,u}(t)-Q^{l}_{i,u}(t)\right)\\+\alpha_{i,u}(t)\left(\mathcal{G}^{n}_{\mathbf{x}_{t},\mathbf{u}_{t}}(\mathbf{Q}^{n}_{t})-Q^{n}_{\mathbf{x}_{t},\mathbf{u}_{t}}(t)+\mathbf{\nu}^{n}_{\mathbf{x}_{t},\mathbf{u}_{t}}(t)\right),
\end{align}
in which the residual
\begin{equation}
\label{def_Giu4}
\mathbf{\nu}^{n}_{\mathbf{x}_{t},\mathbf{u}_{t}}(t)=c_{n}(\mathbf{x}_{t},\mathbf{u}_{t})+\gamma\min_{v\in\mathcal{U}}Q^{n}_{\mathbf{x}_{t+1},v}(t)-\mathcal{G}^{n}_{\mathbf{x}_{t},\mathbf{u}_{t}}(\mathbf{Q}^{n}_{t})
\end{equation}
plays the role of a martingale difference noise, i.e., $\mathbb{E}[\mathbf{\nu}^{n}_{\mathbf{x}_{t},\mathbf{u}_{t}}(t)~|~\mathcal{F}_{t}]=\mathbf{0}$ for all $t$.

\subsection{Main Result}
\label{subsec:main_res} The main result of the paper concerning the convergence of the proposed $\mathcal{QD}$-learning is stated as follows (proof provided in Section~\ref{sub:proof}):
\begin{theorem}
\label{th:main_res} Let $\{\mathbf{Q}^{n}_{t}\}$ and $\{\mathbf{V}^{n}_{t}\}$ be the successive iterates obtained at agent $n$ through the $\mathcal{QD}$-learning (see~\eqref{up_Q} and~\eqref{up_V}). Then, under~\textbf{(M.1)}-\textbf{(M.5)}, there exists $\mathbf{Q}^{\ast}\in\mathbb{R}^{|\mathcal{X}\times\mathcal{U}|}$, such that,
\begin{equation}
\label{th:main_res1}
\mathbb{P}\left(\lim_{t\rightarrow\infty}\mathbf{Q}^{n}_{t}=\mathbf{Q}^{\ast}\right)=1
\end{equation}
for each network agent $n$.
Further, for each $i\in\mathcal{X}$, we have
\begin{equation}
\label{th:main_res2}\min_{u\in\mathcal{U}}Q^{\ast}_{i,u}=V^{\ast}_{i},
\end{equation}
and, hence, in particular, $\mathbf{V}^{n}_{t}\rightarrow\mathbf{V}^{\ast}$ as $t\rightarrow\infty$ a.s. for each $n$, where $\mathbf{V}^{\ast}$ denotes the desired value function~\eqref{opt-cost}.
\end{theorem}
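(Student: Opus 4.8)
The plan is to decompose the proof into two logically separable phases: first, establishing that all agents asymptotically agree (consensus), and second, that the common limit they reach is precisely the fixed point $\mathbf{Q}^{\ast}$ of the centralized averaged $\mathcal{QD}$-learning operator, which by the Bellman equation recovers $\mathbf{V}^{\ast}$. Since the update~\eqref{up_Q} decouples across state-action pairs $(i,u)$ (each component is updated only when $(\mathbf{x}_{t},\mathbf{u}_{t})=(i,u)$), I would fix a pair $(i,u)$ and stack the scalars $Q^{n}_{i,u}(t)$ over agents into a vector $\mathbf{q}_{i,u}(t)\in\mathbb{R}^{N}$. The recursion~\eqref{def_Giu3} then takes the compact form
\begin{equation}
\label{pf:stacked}
\mathbf{q}_{i,u}(t+1)=\left(I_{N}-\beta_{i,u}(t)L_{t}-\alpha_{i,u}(t)I_{N}\right)\mathbf{q}_{i,u}(t)+\alpha_{i,u}(t)\left(\mathbf{G}_{i,u}(t)+\bniu(t)\right),
\end{equation}
where $\mathbf{G}_{i,u}(t)$ collects the $\mathcal{G}^{n}_{i,u}$ terms and $\bniu(t)$ is the stacked martingale-difference residual from~\eqref{def_Giu4}. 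This is a genuine \emph{consensus + innovations} recursion with two time-scales, since $\beta_{i,u}(t)/\alpha_{i,u}(t)\rightarrow\infty$ by~\textbf{(M.5)}.

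For the consensus phase, I would project onto the disagreement subspace orthogonal to $\mathbf{1}_{N}$. Writing $\mathbf{q}_{i,u}(t)=\overline{q}_{i,u}(t)\mathbf{1}_{N}+\wmz_{i,u}(t)$ with $\wmz_{i,u}(t)$ the component orthogonal to $\mathbf{1}_{N}$, the averaging matrix $I_{N}-\beta_{i,u}(t)L_{t}$ contracts $\wmz_{i,u}(t)$ at a rate governed by $\lambda_{2}(\overline{L})>0$ (using~\textbf{(M.2)} and the independence of $L_{t}$ from $\mathcal{F}_{t}$), while the innovation term injects disturbances of order $\alpha_{i,u}(t)$. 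Because the consensus potential dominates ($\beta_{i,u}(t)$ decays strictly slower than $\alpha_{i,u}(t)$), a supermartingale-type argument or a Lyapunov estimate on $\mathbb{E}\|\wmz_{i,u}(t)\|^{2}$ should show $\|\wmz_{i,u}(t)\|\rightarrow 0$ a.s., so that all agents' iterates collapse onto the consensus subspace asymptotically. I expect to need the intermediate results of Section~\ref{sec:int_app} on distributed mixed time-scale recursions precisely here, and also to first establish pathwise boundedness of $\{\mathbf{q}_{i,u}(t)\}$ (likely via the non-negative-definiteness of $I_{N}-\beta_{i,u}(t)L_{t}-\alpha_{i,u}(t)I_{N}$ from Remark~\ref{rem:weights}, a contraction/stability argument, and the square-summability $\sum\alpha_{i,u}^{2}(t)<\infty$ together with the super-quadratic moment bound~\eqref{M1moment}).

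For the identification phase, once consensus is established I would study the network-averaged iterate $\overline{q}_{i,u}(t)=\frac{1}{N}\mathbf{1}_{N}^{T}\mathbf{q}_{i,u}(t)$. Premultiplying~\eqref{pf:stacked} by $\frac{1}{N}\mathbf{1}_{N}^{T}$ annihilates the Laplacian term (since $\mathbf{1}_{N}^{T}L_{t}=0$), yielding a single scalar stochastic-approximation recursion driven by $\frac{1}{N}\sum_{n}\mathcal{G}^{n}_{i,u}$, which by~\eqref{def_Giu} and~\eqref{def_T} is exactly the centralized averaged operator $\mathcal{G}_{i,u}$ whose fixed point encodes $\mathbf{V}^{\ast}$ through the Bellman equation. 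The residual being a martingale difference (with variance controlled by~\eqref{M1moment}), together with persistent excitation $\sum\alpha_{i,u}(t)=\infty$ and square-summability, places this in the classical $Q$-learning-as-stochastic-approximation regime of~\cite{Tsitsiklis-Q}; I would invoke a contraction-based stochastic approximation theorem to conclude $\overline{\mathbf{q}}(t)\rightarrow\mathbf{Q}^{\ast}$, and then combine with the consensus result to get $\mathbf{Q}^{n}_{t}\rightarrow\mathbf{Q}^{\ast}$ for every $n$. Finally, the continuity of the $\min$ operation gives~\eqref{th:main_res2} and hence $\mathbf{V}^{n}_{t}\rightarrow\mathbf{V}^{\ast}$ via~\eqref{up_V}.

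The main obstacle, I expect, is the coupling between the two phases: the innovation term $\mathbf{G}_{i,u}(t)$ depends on $\min_{v}Q^{n}_{\mathbf{x}_{t+1},v}(t)$, which couples the recursion for pair $(i,u)$ to the iterates of \emph{all other} pairs and, through the per-agent $\min$, to the disagreement components themselves before consensus has set in. Decoupling this requires a careful interleaving: the consensus estimate feeds the scalar identification argument, but the boundedness and contraction properties needed for consensus in turn depend on the iterates not diverging, which the averaged dynamics controls. The non-Markovian, temporally dependent nature of the driving state-action trajectories (allowed under~\textbf{(M.1)}) rules out the simplest stochastic-approximation invariance arguments, so the delicate part is propagating the a.s. contraction estimates through the mixed time-scales uniformly over all $(i,u)$ pairs while handling the nonlinearity of $\min$ — this is exactly where the independent-interest techniques advertised in the abstract must do their work.
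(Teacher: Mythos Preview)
Your proposal is correct and follows essentially the same architecture as the paper's proof: first establish pathwise boundedness of the stacked iterates (the paper's Lemma~\ref{lm:bounded}), then use the Laplacian contraction on $\PC$ together with the time-scale separation $\beta_{i,u}(t)/\alpha_{i,u}(t)\rightarrow\infty$ to prove asymptotic consensus (Lemma~\ref{lm:cons}), and finally analyze the $\frac{1}{N}\mathbf{1}_{N}^{T}$-averaged scalar recursion as a centralized contraction-based stochastic approximation perturbed by a vanishing consensus residual (Lemma~\ref{lm:conv} via Lemma~\ref{lm:intcon}). The interleaving you flag as the main obstacle is handled exactly as you anticipate: boundedness is proved first and independently (via an induction using the order-preserving property of $I_{N}-\beta_{i,u}(t)L_{t}-\alpha_{i,u}(t)I_{N}$ and Corollary~\ref{corr:int-bound}), which then feeds the consensus argument, and the Lipschitz continuity of the $\mathcal{G}^{n}_{i,u}$'s converts the disagreement into an asymptotically negligible additive perturbation $\bviu(t)$ in the averaged dynamics.
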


\section{Intermediate Approximation Results}
\label{sec:int_app} This section provides some approximation results to be used in the sequel for the analysis of $\mathcal{QD}$-learning. In what follows, $\{\mathbf{z}_{t}\}$ will denote a stochastic process that is adapted to a generic filtration $\{\mathcal{H}_{t}\}$ (possibly different from $\{\mathcal{F}_{t}\}$) defined on the probability space $\left(\Omega,\mathcal{F},\mathcal{P}\right)$.

The following result from~\cite{SICON-Kar-Moura-Poor} will be used.

\begin{lemma}[Lemma 4.3 in~\cite{SICON-Kar-Moura-Poor}]
\label{lm:mean-conv} Let $\{z_{t}\}$ be an $\mathbb{R}_{+}$ valued $\{\mathcal{H}_{t}\}$ adapted process that satisfies
\begin{equation}
\label{lm:mean-conv1}
z_{t+1}\leq \left(1-r_{1}(t)\right)z_{t}+r_{2}(t)U_{t}\left(1+J_{t}\right).
\end{equation}
In the above, $\{r_{1}(t)\}$ is an $\{\mathcal{H}_{t+1}\}$ adapted process, such that, for all $t$, $r_{1}(t)$ satisfies $0\leq r_{1}(t)\leq 1$ and
\begin{equation}
\label{lm:JSTSP2}
\frac{a_{1}}{(t+1)^{\delta_{1}}}\leq\mathbb{E}\left[r_{1}(t)~|~\mathcal{H}_{t}\right]\leq 1
\end{equation}
with $a_{1}>0$ and $0\leq \delta_{1}< 1$, whereas, the sequence $\{r_{2}(t)\}$ is deterministic, $\mathbb{R}_{+}$ valued, and satisfies $r_{2}(t)\leq a_{2}/(t+1)^{\delta_{2}}$ with $a_{2}>0$ and $\delta_{2}>0$.
Further, let $\{U_{t}\}$ and $\{J_{t}\}$ be $\mathbb{R}_{+}$ valued $\{\mathcal{H}_{t}\}$ and $\{\mathcal{H}_{t+1}\}$ adapted processes respectively with $\sup_{t\geq 0}\|U_{t}\|<\infty$ a.s.,  and $\{J_{t}\}$ is i.i.d. with $J_{t}$ independent of $\mathcal{H}_{t}$ for each $t$ and satisfies the moment condition $\mathbb{E}\left[\left\|J_{t}\right\|^{2+\varepsilon_{1}}\right]<\kappa<\infty$ for some $\varepsilon_{1}>0$ and a constant $\kappa>0$. Then, for every $\delta_{0}$ such that
\begin{equation}
\label{lm:mean-conv5}
0\leq\delta_{0}<\delta_{2}-\delta_{1}-\frac{1}{2+\varepsilon_{1}},
\end{equation}
we have $(t+1)^{\delta_{0}}z_{t}\rightarrow 0$ a.s. as $t\rightarrow\infty$.
\end{lemma}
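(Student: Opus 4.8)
The plan is to treat this as a polynomial decay-rate estimate for a stochastic approximation recursion whose driving noise decays strictly faster (in the mean) than the contraction it competes against. Fix $\delta_0$ in the admissible range. Since $\delta_0 < \delta_2 - \delta_1 - \frac{1}{2+\varepsilon_1}$, I can choose an auxiliary exponent $\theta$ with $\frac{1}{2+\varepsilon_1} < \theta < \delta_2 - \delta_1 - \delta_0$; this interval is nonempty precisely by the hypothesis, and this is the choice that transfers the moment exponent $2+\varepsilon_1$ into the final rate. I would then separate the three sources of randomness: the a.s.\ finite (but non-uniform) bound on $\{U_t\}$, the heavy-tailed-but-integrable innovations $\{J_t\}$, and the random contraction coefficients $\{r_1(t)\}$, handling each on a common full-measure event.

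The first substantive step controls $\{J_t\}$. By Markov's inequality, $\mathbb{P}(\|J_t\| > (t+1)^{\theta}) \leq \kappa (t+1)^{-\theta(2+\varepsilon_1)}$, and since $\theta(2+\varepsilon_1) > 1$ the right-hand side is summable; Borel--Cantelli then yields a random, a.s.\ finite time $T_0$ beyond which $\|J_t\| \leq (t+1)^{\theta}$. Combining this with $\sup_t \|U_t\| =: C < \infty$ a.s., the driving term in~\eqref{lm:mean-conv1} admits the pathwise bound $r_2(t) U_t(1+J_t) \leq a_2 C (t+1)^{-\delta_2}(1 + (t+1)^{\theta}) \leq \widetilde{C}(t+1)^{-(\delta_2-\theta)}$ for $t \geq T_0$, with $\widetilde{C}$ a.s.\ finite. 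Thus on a full-measure event the original recursion is dominated by one with a deterministic-rate forcing term decaying like $(t+1)^{-(\delta_2-\theta)}$.

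It remains to show that $z_{t+1} \leq (1 - r_1(t)) z_t + \widetilde{C}(t+1)^{-(\delta_2-\theta)}$ forces $(t+1)^{\delta_0} z_t \to 0$, and here the randomness of $r_1(t)$ is the main obstacle: only its conditional mean is bounded below, so the per-step contraction cannot be used pathwise. I would unroll the recursion, expressing $z_{t+1}$ through $z_{T_0}$ and a weighted sum of the forcing terms with weights $\Phi(s+1,t) = \prod_{k=s+1}^{t}(1-r_1(k)) \leq \exp\bigl(-\sum_{k=s+1}^{t} r_1(k)\bigr)$, using $0 \leq r_1(k) \leq 1$. To replace the random exponent by its deterministic order, I would apply a martingale strong law to $M_t = \sum_{s}(r_1(s) - \mathbb{E}[r_1(s)\,|\,\mathcal{H}_s])$: its increments are bounded by $1$ and its predictable quadratic variation is at most $A_t := \sum_s \mathbb{E}[r_1(s)\,|\,\mathcal{H}_s]$ (since $r_1 \in [0,1]$ gives $r_1^2 \leq r_1$), so $M_t = o(A_t)$ a.s. Because $A_t \geq a_1 \sum_s (s+1)^{-\delta_1}$ with $\delta_1 < 1$, this yields $\sum_{k=s+1}^{t} r_1(k) \geq \tfrac{c}{1-\delta_1}\bigl((t+1)^{1-\delta_1} - (s+1)^{1-\delta_1}\bigr)$ eventually, a.s.

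With this lower bound in hand the weights $\Phi(s+1,t)$ decay like a discrete exponential in $(s+1)^{1-\delta_1}$, whose effective window near $s=t$ has width of order $(t+1)^{\delta_1}$. A standard Laplace/Abel-type convolution estimate then gives $\sum_{s=T_0}^{t} \Phi(s+1,t)(s+1)^{-(\delta_2-\theta)} = O\bigl((t+1)^{-(\delta_2-\theta-\delta_1)}\bigr)$, while the homogeneous term $\Phi(T_0,t) z_{T_0}$ decays super-polynomially. Multiplying by $(t+1)^{\delta_0}$ and invoking $\delta_0 < \delta_2 - \theta - \delta_1$ gives $(t+1)^{\delta_0} z_t \to 0$ a.s., completing the argument. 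I expect the delicate points to be the martingale strong law step (ensuring $M_t = o(A_t)$, which is available because $\delta_1 < 1$ makes $A_t \to \infty$) and the convolution estimate, both relying on the strict ordering $\delta_1 < \delta_2 - \theta$ secured by the choice of $\theta$.
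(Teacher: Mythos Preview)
The paper does not prove this lemma; it is quoted from the cited reference (Lemma~4.3 in \cite{SICON-Kar-Moura-Poor}), so there is no in-paper argument to compare against. Your Borel--Cantelli reduction for $\{J_t\}$ and the resulting pathwise bound on the forcing term are correct, and unrolling the recursion into a convolution with weights $\Phi(s+1,t)$ is the natural route.

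The gap is in the martingale step. From $M_t = o(A_t)$ you assert $\sum_{k=s+1}^{t} r_1(k) \geq c\bigl((t+1)^{1-\delta_1} - (s+1)^{1-\delta_1}\bigr)$ eventually, uniformly in $s\leq t$. This does not follow. First, the pointwise inequality fails trivially whenever $r_1(t)=0$ (take $s=t-1$). More substantively, the hypotheses give only the \emph{lower} bound $A_t \geq c(t+1)^{1-\delta_1}$, while $A_t$ may be as large as $t+1$; hence $|M_t|=o(A_t)$ can still be of order $o(t)$, which dominates $(t+1)^{1-\delta_1}$ for every $\delta_1>0$, so $|M_t-M_s|$ is uncontrolled relative to $A_t-A_s$. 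A bounded-increment estimate $|M_t|=O(t^{1/2+\epsilon})$ only rescues the step for $\delta_1<1/2$, whereas the lemma allows $\delta_1\in[0,1)$. To close the argument you need either (i) a sharper maximal/concentration bound (e.g., Freedman-type) controlling $|M_t-M_s|$ against $A_t-A_s$ uniformly in $s$, combined with a near/far split of the convolution in which the ``near'' window (where $\Phi\approx 1$) has width $O(t^{\delta_1})$; or (ii) a direct supermartingale comparison on $w_t=(t+1)^{\delta_0}z_t$ that exploits the conditional-mean lower bound on $r_1$ without attempting to control the random product $\Phi(s+1,t)$ pathwise.
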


The following result from~\cite{SICON-Kar-Moura-Poor}, which provides a stochastic characterization of the contraction properties of random time-varying graph Laplacian matrices,  will be used to quantify the rate of convergence of distributed vector or matrix valued recursions to their network-averaged behavior.
\begin{definition}
\label{def:cons}
For positive integers $N$ and $P$, denote by $\mathcal{C}$ the consensus subspace of $\mathbb{R}^{NP}$, i.e.,
\begin{equation}
\label{lm:conn200}
\mathcal{C}=\left\{\mathbf{y}\in\mathbb{R}^{NP}~:~\mathbf{y}=\mathbf{1}_{N}\otimes\mathbf{y}^{\prime}~~\mbox{for some $\mathbf{y}^{\prime}\in\mathbb{R}^{P}$}\right\}.
\end{equation}
Let $\PC$ be the orthogonal complement of $\mathcal{C}$ and note that any $\mathbf{y}\in\mathbb{R}^{NP}$ admits the orthogonal decomposition, $\mathbf{y}=\mathbf{y}_{\C}+\mathbf{y}_{\PC}$, with $\mathbf{y}_{\C}$ denoting the consensus subspace projection of $\mathbf{y}$.
\end{definition}

\begin{lemma}[Lemma 4.4 in~\cite{SICON-Kar-Moura-Poor}]
\label{lm:conn} Let $\{\mathbf{z}_{t}\}$ be an $\mathbb{R}^{NP}$ valued $\{\mathcal{H}_{t}\}$ adapted process such that $\mathbf{z}_{t}\in\PC$ (see Definition~\ref{def:cons}) for all $t$. Also, let $\{L_{t}\}$ be an i.i.d. sequence of graph Laplacian matrices that satisfies
\begin{equation}
\label{Lap_cond}
\lambda_{2}(\overline{L})=\lambda_{2}\left(\mathbb{E}[L_{t}]\right)>0,
\end{equation}
with $L_{t}$ being $\mathcal{H}_{t+1}$ adapted and independent of $\mathcal{H}_{t}$ for all $t$. Then, there exists a measurable $\{\mathcal{H}_{t+1}\}$ adapted $\mathbb{R}_{+}$ valued process $\{r_{t}\}$ (depending on $\{\mathbf{z}_{t}\}$ and $\{L_{t}\}$) and a constant $c_{r}>0$, such that $0\leq r_{t}\leq 1$ a.s. and
\begin{equation}
\label{lm:conn20}
\left\|\left(I_{NP}-\bar{r}_{t}L_{t}\otimes I_{P}\right)\mathbf{z}_{t}\right\|\leq\left(1-r_{t}\right)\left\|\mathbf{z}_{t}\right\|
\end{equation}
with
\begin{equation}
\label{lm:conn2}
\mathbb{E}\left[r_{t}~|~\mathcal{H}_{t}\right]\geq\frac{c_{r}}{(t+1)^{\delta}}~~\mbox{a.s.}
\end{equation}
for all $t$, where the weight sequence $\{\bar{r}_{t}\}$ satisfies $\bar{r}_{t}\leq\bar{r}/(t+1)^{\delta}$ for some $\bar{r}>0$ and $\delta\in (0,1]$.
\end{lemma}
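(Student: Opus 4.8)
The plan is to extract the contraction directly from a single quadratic-form computation and then to read off the conditional-mean lower bound from the spectral gap of the averaged Laplacian. First I would record the structural identity $I_{NP}-\bar r_{t}L_{t}\otimes I_{P}=(I_{N}-\bar r_{t}L_{t})\otimes I_{P}$ and observe that, since $L_{t}\mathbf{1}_{N}=\mathbf{0}$ and $L_{t}$ is symmetric, this operator fixes the consensus subspace $\C$ pointwise and maps $\PC$ into itself (see Definition~\ref{def:cons}); hence the image of $\mathbf{z}_{t}\in\PC$ remains in $\PC$. I would then define the realized contraction ratio
\begin{equation}
r_{t}=1-\frac{\left\|\left(I_{N}-\bar r_{t}L_{t}\right)\otimes I_{P}\,\mathbf{z}_{t}\right\|}{\left\|\mathbf{z}_{t}\right\|}
\end{equation}
on $\{\mathbf{z}_{t}\neq\mathbf{0}\}$ and $r_{t}=1$ on the (trivial) event $\{\mathbf{z}_{t}=\mathbf{0}\}$, so that~\eqref{lm:conn20} holds by construction (with equality on the nontrivial event). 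Because $\mathbf{z}_{t}$ is $\mathcal{H}_{t}$- (hence $\mathcal{H}_{t+1}$-) measurable and $L_{t}$ is $\mathcal{H}_{t+1}$-adapted, $r_{t}$ is $\mathcal{H}_{t+1}$-adapted as required.

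Second, I would control the norm through the expansion
\begin{equation}
\left\|\left(I_{N}-\bar r_{t}L_{t}\right)\otimes I_{P}\,\mathbf{z}_{t}\right\|^{2}=\left\|\mathbf{z}_{t}\right\|^{2}-2\bar r_{t}\,a_{t}+\bar r_{t}^{2}\,b_{t},
\end{equation}
where $a_{t}=\mathbf{z}_{t}^{T}(L_{t}\otimes I_{P})\mathbf{z}_{t}\geq 0$ and $b_{t}=\mathbf{z}_{t}^{T}(L_{t}^{2}\otimes I_{P})\mathbf{z}_{t}\geq 0$. Using $L_{t}^{2}\preceq\lambda_{N}(L_{t})L_{t}$ together with the Laplacian degree bound $\lambda_{N}(L_{t})\leq N$ (as noted in Remark~\ref{rem:weights}) gives $b_{t}\leq N a_{t}$, whence
\begin{equation}
\left\|\left(I_{N}-\bar r_{t}L_{t}\right)\otimes I_{P}\,\mathbf{z}_{t}\right\|^{2}\leq\left\|\mathbf{z}_{t}\right\|^{2}-\bar r_{t}a_{t}\left(2-\bar r_{t}N\right).
\end{equation}
Choosing $\bar r_{t}=\bar r/(t+1)^{\delta}$ with $\bar r$ fixed and small enough that $\bar r N\leq 1$ forces $2-\bar r_{t}N\geq 1$, so the operator is non-expansive on $\PC$ (giving $0\leq r_{t}\leq 1$) and, via $\sqrt{1-x}\leq 1-x/2$, yields the pointwise lower bound $r_{t}\geq \bar r_{t}a_{t}/(2\|\mathbf{z}_{t}\|^{2})$.

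Third, I would take conditional expectations. Since $L_{t}$ is independent of $\mathcal{H}_{t}$ while $\mathbf{z}_{t}$ is $\mathcal{H}_{t}$-measurable,
\begin{equation}
\mathbb{E}\left[a_{t}\mid\mathcal{H}_{t}\right]=\mathbf{z}_{t}^{T}\left(\overline{L}\otimes I_{P}\right)\mathbf{z}_{t}.
\end{equation}
The eigenvectors of $\overline{L}\otimes I_{P}$ with eigenvalue $0$ are exactly $\mathbf{1}_{N}\otimes\mathbf{e}_{j}$, which span $\C$; hence on $\PC$ every eigenvalue is at least $\lambda_{2}(\overline{L})>0$, so $\mathbf{z}_{t}^{T}(\overline{L}\otimes I_{P})\mathbf{z}_{t}\geq\lambda_{2}(\overline{L})\|\mathbf{z}_{t}\|^{2}$ for $\mathbf{z}_{t}\in\PC$. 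Combining the last two displays gives $\mathbb{E}[r_{t}\mid\mathcal{H}_{t}]\geq \bar r_{t}\lambda_{2}(\overline{L})/2=c_{r}/(t+1)^{\delta}$ with $c_{r}=\bar r\lambda_{2}(\overline{L})/2>0$, which is~\eqref{lm:conn2}.

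The hard part here is conceptual rather than computational: an individual realization $L_{t}$ may correspond to a disconnected graph, so $\lambda_{2}(L_{t})=0$ and the operator $(I_{N}-\bar r_{t}L_{t})\otimes I_{P}$ need not contract $\PC$ at all for that $t$. Thus no deterministic per-step contraction rate is available, and the entire contraction must be packaged into the random, realization-dependent factor $r_{t}$, whose \emph{conditional mean} inherits the spectral gap $\lambda_{2}(\overline{L})>0$ of the averaged Laplacian. The remaining delicacy is the competing requirements on $\bar r_{t}$: it must be small enough for pointwise non-expansiveness (forcing $\bar r N\leq 1$) yet of order $(t+1)^{-\delta}$ so that the mean contraction rate does not decay faster than claimed; the choice $\bar r_{t}=\bar r/(t+1)^{\delta}$ with $\bar r$ a fixed small constant reconciles both, simultaneously giving the asserted upper bound on $\{\bar r_{t}\}$ and the lower bound $c_{r}/(t+1)^{\delta}$ on $\mathbb{E}[r_{t}\mid\mathcal{H}_{t}]$.
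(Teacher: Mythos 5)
This paper never actually proves the statement: it is imported verbatim as Lemma~4.4 of~\cite{SICON-Kar-Moura-Poor}, and the only in-paper guidance is the pointer to Remark~4.1 of that reference for the ``technicalities involved in the construction of $\{r_{t}\}$.'' So your proposal has to be judged on its own merits, and on those merits the core argument is sound. Defining $r_{t}$ as the realized contraction ratio makes~\eqref{lm:conn20} hold by construction and makes $\{r_{t}\}$ $\{\mathcal{H}_{t+1}\}$-adapted; the quadratic expansion with $a_{t}=\mathbf{z}_{t}^{T}(L_{t}\otimes I_{P})\mathbf{z}_{t}$ and $b_{t}=\mathbf{z}_{t}^{T}(L_{t}^{2}\otimes I_{P})\mathbf{z}_{t}$ is exact; the pathwise bound $b_{t}\leq N a_{t}$ is valid since $L_{t}^{2}\preceq\lambda_{N}(L_{t})L_{t}\preceq N L_{t}$ for a positive semidefinite Laplacian; the step from the squared-norm bound to $r_{t}\geq\bar{r}_{t}a_{t}/(2\|\mathbf{z}_{t}\|^{2})$ via $\sqrt{1-x}\leq 1-x/2$ is legitimate because $\bar{r}_{t}a_{t}\leq\|\mathbf{z}_{t}\|^{2}$ follows from non-negativity of the left side; and the conditional-expectation step correctly exploits that $\|\mathbf{z}_{t}\|$ is $\mathcal{H}_{t}$-measurable while $L_{t}$ is independent of $\mathcal{H}_{t}$, together with the fact that $\lambda_{2}(\overline{L})>0$ forces $\ker(\overline{L}\otimes I_{P})=\C$, so that $\mathbf{z}_{t}\in\PC$ gives $\mathbf{z}_{t}^{T}(\overline{L}\otimes I_{P})\mathbf{z}_{t}\geq\lambda_{2}(\overline{L})\|\mathbf{z}_{t}\|^{2}$. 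Your route is also attractively economical: by extracting a \emph{pointwise} lower bound on $r_{t}$ before conditioning, you avoid any appeal to conditional Jensen on $\mathbb{E}[\|\cdot\|\,|\,\mathcal{H}_{t}]$.

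The one genuine discrepancy is between what you prove and what the statement literally asserts. You establish the conclusion only for $\bar{r}_{t}=\bar{r}/(t+1)^{\delta}$ with $\bar{r}N\leq 1$, and your argument cannot do otherwise, because the statement read literally (arbitrary $\bar{r}>0$, and only an \emph{upper} bound on $\bar{r}_{t}$) is not provable: if $\bar{r}_{t}$ is merely bounded above, the choice $\bar{r}_{t}\equiv 0$ makes the left side of~\eqref{lm:conn20} equal to $\|\mathbf{z}_{t}\|$, forcing $r_{t}=0$ whenever $\mathbf{z}_{t}\neq\mathbf{0}$ and killing~\eqref{lm:conn2}; and if $\bar{r}$ is large, then at small $t$ a realization with $\bar{r}_{t}\lambda_{N}(L_{t})>2$ makes $I_{N}-\bar{r}_{t}L_{t}$ expansive on $\PC$, which is incompatible with $r_{t}\geq 0$. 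So the lemma implicitly requires $\bar{r}_{t}$ to be of exact order $(t+1)^{-\delta}$ and initially small enough (or the conclusion to hold only for $t$ large). Your reading is exactly the regime in which this paper invokes the lemma: in the proof of Lemma~\ref{lm:cons} one has $\bar{r}_{t}=\beta_{k}=b/(k+1)^{\tau_{2}}$ precisely, and Remark~\ref{rem:weights} enforces $a+Nb\leq 1$, hence $Nb\leq 1$, which is your hypothesis. You gesture at this tension in your closing paragraph, but it should be stated up front as an explicit (and, in context, harmless) strengthening of the hypotheses rather than presented as a free choice within the stated lemma.
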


For a discussion of the necessary technicalities involved in the construction of the sequence $\{r_{t}\}$, the reader is referred to~\cite{SICON-Kar-Moura-Poor} (Remark 4.1).

\begin{lemma}
\label{lm:int-bound}
For each state-action pair $(i,u)$, let $\{\mathbf{z}_{i,u}(t)\}$ denote the $\{\mathcal{F}_{t}\}$ adapted process evolving as
\begin{equation}
\label{lm:int-bound100}
\mziu(t+1)=\left(I_{N}-\beta_{i,u}(t)L_{t}-\alpha_{i,u}(t)I_{N}\right)\mziu(t)+\alpha_{i,u}(t)\bniu(t),
\end{equation}
where the weight sequences $\{\beta_{i,u}(t)\}$ and $\{\alpha_{i,u}(t)\}$ are given by~\eqref{def_beta}-\eqref{def_alpha} and $\{\bniu(t)\}$ is an $\{\mathcal{F}_{t+1}\}$ adapted process satisfying $\mathbb{E}[\bniu(t)~|~\mathcal{F}_{t}]=\mathbf{0}$ for all $t$ and
\begin{equation}
\label{lm:int-bound200}
\sup_{t\geq 0}\mathbb{E}\left[\left\|\bniu(t)\right\|^{2}~|~\mathcal{F}_{t}\right]<K<\infty,
\end{equation}
$K$ being a constant.
Then, under~\textbf{(M.4)}-\textbf{(M.5)}, we have $\mziu(t)\rightarrow\mathbf{0}$ as $t\rightarrow\infty$ a.s.
\end{lemma}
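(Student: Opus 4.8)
The plan is to reduce~\eqref{lm:int-bound100} to a recursion indexed by the successive visit times of the pair $(i,u)$, and then to split the resulting $\mathbb{R}^{N}$-valued dynamics into a network-average (consensus) part and a deviation (disagreement) part, driving each to $\mathbf{0}$ by a supermartingale (Robbins--Siegmund) argument. Since $\aiu(t)$ and $\biu(t)$ vanish off the sampling instants $T_{i,u}(k)$ (see~\eqref{def_beta}--\eqref{def_alpha}), the process $\mziu(t)$ is constant between consecutive visits to $(i,u)$; hence $\mziu(t)\to\mathbf{0}$ if and only if $\tz_{k}:=\mziu(T_{i,u}(k))\to\mathbf{0}$, with $k\to\infty$ guaranteed by the a.s.\ finiteness of the stopping times in~\textbf{(M.4)}. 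Setting $\mathcal{H}_{k}=\mathcal{F}_{T_{i,u}(k)}$, $a_{k}=a/(k+1)^{\tau_{1}}$ and $b_{k}=b/(k+1)^{\tau_{2}}$, the subsampled process satisfies
\[
\tz_{k+1}=\left(I_{N}-b_{k}L_{T_{i,u}(k)}-a_{k}I_{N}\right)\tz_{k}+a_{k}\tilde{\boldsymbol\nu}_{k},\qquad \tilde{\boldsymbol\nu}_{k}:=\bniu(T_{i,u}(k)).
\]
The first task is an optional-sampling bookkeeping: verify that $\{\tilde{\boldsymbol\nu}_{k}\}$ is an $\{\mathcal{H}_{k+1}\}$-martingale difference with $\sup_{k}\mathbb{E}[\|\tilde{\boldsymbol\nu}_{k}\|^{2}\mid\mathcal{H}_{k}]<K$, that $\{L_{T_{i,u}(k)}\}$ is i.i.d., $\mathcal{H}_{k+1}$-adapted, independent of $\mathcal{H}_{k}$, with $\lambda_{2}(\overline{L})>0$, and -- crucially -- that $L_{T_{i,u}(k)}$ is conditionally independent of $\tilde{\boldsymbol\nu}_{k}$ given $\mathcal{H}_{k}$ (this is where~\textbf{(M.3)} enters).

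Next I decompose $\tz_{k}=\tz_{\C,k}+\tz_{\PC,k}$ in the sense of Definition~\ref{def:cons} with $P=1$. Because $L_{t}\mathbf{1}_{N}=\mathbf{0}$, the symmetric operator $I_{N}-b_{k}L_{T_{i,u}(k)}-a_{k}I_{N}$ leaves both $\C$ and $\PC$ invariant, so the two components evolve independently. Along $\C$ the multiplier is the \emph{deterministic} scalar $1-a_{k}$, giving $\bar{z}_{k+1}=(1-a_{k})\bar{z}_{k}+a_{k}\bar{\nu}_{k}$ for the common coordinate $\bar{z}_{k}$, where $\bar{\nu}_{k}=\tfrac{1}{N}\mathbf{1}_{N}^{T}\tilde{\boldsymbol\nu}_{k}$. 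Squaring and conditioning, the cross term drops out because $1-a_{k}$ is $\mathcal{H}_{k}$-measurable and $\mathbb{E}[\bar{\nu}_{k}\mid\mathcal{H}_{k}]=0$, so $\mathbb{E}[\bar{z}_{k+1}^{2}\mid\mathcal{H}_{k}]\leq(1-a_{k})^{2}\bar{z}_{k}^{2}+a_{k}^{2}K'$. As $\tau_{1}\in(1/2,1]$ yields $\sum_{k}a_{k}=\infty$ and $\sum_{k}a_{k}^{2}<\infty$ by~\textbf{(M.5)}, the Robbins--Siegmund theorem gives $\bar{z}_{k}\to0$ a.s.

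For the disagreement I apply Lemma~\ref{lm:conn} (with $P=1$ and $\bar{r}_{k}=b_{k}$, i.e.\ $\delta=\tau_{2}$) to the $\PC$-valued sequence $\{\tz_{\PC,k}\}$, obtaining $r_{k}\in[0,1]$ with $\mathbb{E}[r_{k}\mid\mathcal{H}_{k}]\geq c_{r}(k+1)^{-\tau_{2}}$ and $\|(I_{N}-b_{k}L_{T_{i,u}(k)})\tz_{\PC,k}\|\leq(1-r_{k})\|\tz_{\PC,k}\|$; the non-negative-definiteness of $I_{N}-b_{k}L_{T_{i,u}(k)}-a_{k}I_{N}$ secured in Remark~\ref{rem:weights} upgrades this (eigenvalue by eigenvalue on $\PC$) to $\|(I_{N}-b_{k}L_{T_{i,u}(k)}-a_{k}I_{N})\tz_{\PC,k}\|\leq(1-r_{k})\|\tz_{\PC,k}\|$. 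Squaring, conditioning on $\mathcal{H}_{k}$, using $\mathbb{E}[(1-r_{k})^{2}\mid\mathcal{H}_{k}]\leq1-\mathbb{E}[r_{k}\mid\mathcal{H}_{k}]$, and killing the cross term by the conditional independence of the Laplacian and $\tilde{\boldsymbol\nu}_{k}$ gives
\[
\mathbb{E}\!\left[\|\tz_{\PC,k+1}\|^{2}\mid\mathcal{H}_{k}\right]\leq\left(1-c_{r}(k+1)^{-\tau_{2}}\right)\|\tz_{\PC,k}\|^{2}+a_{k}^{2}K.
\]
Since $\tau_{2}<1$ makes the contraction budget $\sum_{k}(k+1)^{-\tau_{2}}$ diverge while $\sum_{k}a_{k}^{2}<\infty$, Robbins--Siegmund again yields $\|\tz_{\PC,k}\|^{2}\to0$ a.s. Combining the two components gives $\tz_{k}\to\mathbf{0}$, hence $\mziu(t)\to\mathbf{0}$ a.s.

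The delicate point is not the contraction arithmetic but the preservation of the mean-zero structure of the innovations. Because $\tilde{\boldsymbol\nu}_{k}$ is only a martingale difference (not i.i.d.), the pathwise estimate of Lemma~\ref{lm:mean-conv} is unavailable and one must work with second moments. A crude Cauchy--Schwarz bound on the disagreement cross term would leave a residual of order $a_{k}^{2}(k+1)^{\tau_{2}}$, which over the full admissible range of $(\tau_{1},\tau_{2})$ in~\textbf{(M.5)} need not be summable and would create a spurious noise floor obstructing convergence to $\mathbf{0}$. The argument therefore depends on the cross terms vanishing \emph{exactly} -- through the deterministic multiplier on $\C$ and through the~\textbf{(M.3)} conditional independence on $\PC$ -- and the most error-prone step is checking that these martingale and independence properties genuinely survive the random subsampling at the stopping times $\{T_{i,u}(k)\}$.
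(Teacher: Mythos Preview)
Your argument is correct but follows a genuinely different route from the paper's. The paper does not subsample at the stopping times, does not split along $\C/\PC$, and does not invoke Lemma~\ref{lm:conn}. Instead it works with the single Lyapunov function $V_{t}=\|\mziu(t)\|^{2}$ on the original time axis, expands $\mathbb{E}[V_{t+1}\mid\mathcal{F}_{t}]$ directly, and uses only the elementary eigenvalue inequalities $\lambda_{2}(\overline{L})\|\mathbf{y}_{\PC}\|^{2}\leq\mathbf{y}^{T}\overline{L}\mathbf{y}\leq\lambda_{N}(\overline{L})\|\mathbf{y}_{\PC}\|^{2}$ and $\mathbf{y}^{T}\mathbb{E}[L_{t}^{2}]\mathbf{y}\leq c_{1}\|\mathbf{y}_{\PC}\|^{2}$ to show that the aggregate Laplacian contribution is eventually nonpositive and may simply be dropped. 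All contraction then comes from the $-2\aiu(t)V_{t}$ term alone; the a.s.\ conclusion is obtained by a localization device (multiplying by $\mathbb{I}(T_{i,u}(k_{0})\leq t_{\Vap})$) followed by Proposition~\ref{lm:decsup} and letting $\Vap\downarrow 0$.

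What each approach buys: the paper's proof is shorter and avoids the stopped-filtration bookkeeping altogether; as Remark~\ref{rem:lm:int} makes explicit, it also reveals that the Laplacian is actually superfluous for this particular lemma (the conclusion survives with $\biu(t)\equiv 0$), something your $\PC$-contraction argument obscures. Your route is more modular --- it reuses Lemma~\ref{lm:conn} and Robbins--Siegmund, the same machinery that drives the later consensus Lemma~\ref{lm:cons} --- and makes the two time scales explicit, but at the price of the optional-sampling verification you correctly flag as delicate, a step the paper simply sidesteps by never subsampling. Both proofs rely, exactly as you note, on the cross term $\mathbb{E}[L_{t}\bniu(t)\mid\mathcal{F}_{t}]$ vanishing, i.e., on the conditional independence in~\textbf{(M.3)}; this is equally implicit in the paper's expansion even though the lemma statement lists only~\textbf{(M.4)}--\textbf{(M.5)}.
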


The following result will be used in the proof of Lemma~\ref{lm:int-bound}.
\begin{proposition}
\label{lm:decsup} Let $\{z_{t}\}$ be a real-valued deterministic process, such that,
\begin{equation}
\label{lm:decsup1}
z_{t+1}\leq\left(1-\alpha_{t}\right)z_{t}+\alpha_{t}\Vap_{t},
\end{equation}
where the deterministic sequences $\{\alpha_{t}\}$ and $\{\Vap_{t}\}$ satisfy $\alpha_{t}\in [0,1]$ for all $t$, $\sum_{t\geq 0}\alpha_{t}=\infty$, and there exists a constant $R>0$, such that,
\begin{equation}
\label{lm:decsup2}\limsup_{t\rightarrow\infty}\Vap_{t}\leq R.
\end{equation}
Then, $\limsup_{t\rightarrow\infty}z_{t}\leq R$.
\end{proposition}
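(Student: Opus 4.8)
The plan is to reduce the statement to the elementary fact that a nonnegative sequence which is repeatedly contracted by factors whose logarithms sum to $-\infty$ must converge to zero. First I would fix an arbitrary $\epsilon>0$ and use the hypothesis \eqref{lm:decsup2} to choose $t_{0}$ such that $\Vap_{t}\leq R+\epsilon$ for all $t\geq t_{0}$. Substituting this bound into \eqref{lm:decsup1} and introducing the shifted sequence $y_{t}=z_{t}-(R+\epsilon)$, the inhomogeneous term gets absorbed into the shift: since $(1-\alpha_{t})z_{t}+\alpha_{t}(R+\epsilon)-(R+\epsilon)=(1-\alpha_{t})\bigl(z_{t}-(R+\epsilon)\bigr)$, one obtains for all $t\geq t_{0}$ the purely multiplicative inequality
\begin{equation}
y_{t+1}\leq(1-\alpha_{t})y_{t}.
\end{equation}
It then suffices to show $\limsup_{t\to\infty}y_{t}\leq 0$, since letting $\epsilon\downarrow 0$ afterwards yields $\limsup_{t\to\infty}z_{t}\leq R$.

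The key device, and what I expect to be the main obstacle, is that $\{y_{t}\}$ need not be sign-definite nor monotone, so one cannot telescope the multiplicative inequality directly. I would circumvent this by passing to the positive part $y_{t}^{+}=\max(y_{t},0)$. Using that $\alpha_{t}\in[0,1]$ forces each factor $1-\alpha_{t}$ into $[0,1]$, a short case check establishes $y_{t+1}^{+}\leq(1-\alpha_{t})y_{t}^{+}$ for all $t\geq t_{0}$: when $y_{t}\geq 0$ the right-hand side is nonnegative and dominates $y_{t+1}$, hence also $y_{t+1}^{+}$; when $y_{t}<0$ both $y_{t}^{+}$ and (since $(1-\alpha_{t})y_{t}\leq 0$) $y_{t+1}^{+}$ vanish. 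Thus $\{y_{t}^{+}\}$ is a nonnegative sequence obeying the same contraction, with the sign ambiguity eliminated.

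Finally I would telescope this inequality to get $y_{t}^{+}\leq y_{t_{0}}^{+}\prod_{s=t_{0}}^{t-1}(1-\alpha_{s})$, which is legitimate precisely because the factors are nonnegative. Invoking $1-x\leq e^{-x}$ gives $\prod_{s=t_{0}}^{t-1}(1-\alpha_{s})\leq\exp\bigl(-\sum_{s=t_{0}}^{t-1}\alpha_{s}\bigr)$, and the divergence $\sum_{s\geq 0}\alpha_{s}=\infty$ drives this product to $0$ as $t\to\infty$. Hence $y_{t}^{+}\to 0$, so $\limsup_{t\to\infty}y_{t}\leq 0$, i.e.\ $\limsup_{t\to\infty}z_{t}\leq R+\epsilon$; since $\epsilon>0$ was arbitrary, the claim $\limsup_{t\to\infty}z_{t}\leq R$ follows. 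Note the two structural hypotheses on $\{\alpha_{t}\}$ play distinct roles here: $\alpha_{t}\leq 1$ keeps the factors nonnegative so the truncation and telescoping are valid, while $\sum\alpha_{t}=\infty$ is exactly what forces the accumulated contraction to annihilate the residual.
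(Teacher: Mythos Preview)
Your proof is correct and follows essentially the same route as the paper: shift by $R+\epsilon$ to obtain a purely multiplicative recursion, then use $1-\alpha_{t}\leq e^{-\alpha_{t}}$ together with $\sum\alpha_{t}=\infty$ to drive the product to zero, and finally let $\epsilon\downarrow 0$. The only difference is that the paper telescopes $y_{t+1}\leq(1-\alpha_{t})y_{t}$ directly without passing to positive parts---your concern that sign issues block the telescoping is unfounded, since the factors $1-\alpha_{t}\geq 0$ preserve the inequality direction at each step regardless of the sign of $y_{t}$, yielding $y_{t}\leq y_{t_{0}}\prod_{s=t_{0}}^{t-1}(1-\alpha_{s})$ and hence $\limsup y_{t}\leq 0$ in either sign case for $y_{t_{0}}$.
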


Variants of the above result may be found in the literature. We provide a simple self-contained proof in the following.
\begin{proof}
Consider $\delta>0$ and note that, by hypothesis, there exists $t_{\delta}>0$, such that, $\Vap_{t}\leq (R+\delta)$ for all $t\geq t_{\delta}$. Hence, for $t\geq t_{\delta}$, we have
\begin{equation}
\label{lm:decsup3}
z_{t+1}\leq\left(1-\alpha_{t}\right)z_{t}+\alpha_{t}\left(R+\delta\right).
\end{equation}
Hence, denoting by $\{\widehat{z}_{t}\}$ the sequence with $\widehat{z}_{t}=z_{t}-(R+\delta)$ for all $t$, we have, for $t\geq t_{\delta}$,
\begin{equation}
\label{lm:decsup4}\widehat{z}_{t+1}\leq\left(1-\alpha_{t}\right)\widehat{z}_{t}.
\end{equation}
Since $\sum_{t\geq t_{\delta}}\alpha_{t}=\infty$, we conclude that
\begin{equation}
\label{lm:decsup5}
\limsup_{t\rightarrow\infty}\prod_{s=t_{\delta}}^{t-1}\left(1-\alpha_{s}\right)\leq\limsup_{t\rightarrow\infty}e^{-\sum_{s=t_{\delta}}^{t-1}\alpha_{s}}=0,
\end{equation}
and hence, by~\eqref{lm:decsup4}, $\limsup_{t\rightarrow\infty}\widehat{z}_{t}\leq 0$. We thus obtain
\begin{equation}
\label{lm:decsup6}
\limsup_{t\rightarrow\infty}z_{t}\leq R+\delta,
\end{equation}
from which the desired assertion follows by taking $\delta$ to zero.
\end{proof}

We now complete the proof of Lemma~\ref{lm:int-bound}.
\begin{proof}[Lemma~\ref{lm:int-bound}] Recall the consensus subspace $\mathcal{C}$ of $\mathbb{R}^{N}$ (see Definition~\ref{def:cons} with $P=1$). By properties of the Laplacian, we obtain the following inequalities for each $\mathbf{y}\in\mathbb{R}^{N}$:
\begin{equation}
\label{lm:int-bound1}\lambda_{2}(\OL)\left\|\mathbf{y}_{\PC}\right\|^{2}\leq\mathbf{y}^{T}\OL\mathbf{y}\leq\lambda_{N}(\OL)\left\|\mathbf{y}_{\PC}\right\|^{2},
\end{equation}
and
\begin{equation}
\label{lm:int-bound2}
\mathbf{y}^{T}\mathbb{E}\left[L_{t}^{2}\right]\mathbf{y}\leq c_{1}\left\|\mathbf{y}_{\PC}\right\|^{2}
\end{equation}
for each $t$, where $c_{1}>0$ is a constant. Now consider the $\{\mathcal{F}_{t}\}$ adapted process $\{V_{t}\}$, such that $V_{t}=\|\mziu(t)\|^{2}$ for each $t$, and note that under the hypotheses of Lemma~\ref{lm:int-bound} we have,
\begin{align}
\label{lm:int-bound3}
\mathbb{E}\left[V_{t+1}~|~\mathcal{F}_{t}\right]=V_{t}-2\aiu(t)V_{t}-2\biu(t)\mziu^{T}(t)\OL\mziu(t)\\ +\biu^{2}(t)\mziu^{T}(t)\mathbb{E}[L_{t}^{2}]\mziu(t)
+2\aiu(t)\biu(t)\mziu^{T}(t)\OL\mziu(t)+\aiu^{2}(t)V_{t}\\+\aiu^{2}(t)\mathbb{E}\left[\|\bniu(t)\|^{2}~|~\mathcal{F}_{t}\right]\\
\leq \left(1-2\aiu(t)+\aiu^{2}(t)\right)V_{t}\\
-\left(2\biu(t)\lambda_{2}(\OL)-\biu^{2}(t)c_{1}-2\aiu(t)\biu(t)\lambda_{N}(\OL)\right)\left\|(\mziu(t))_{\PC}\right\|^{2}+\aiu^{2}(t)c_{2},
\end{align}
where $c_{2}>0$ is a constant and in the last step we make use of~\eqref{lm:int-bound1}-\eqref{lm:int-bound2}.

Recall the stopping times $\{T_{i,u}(k)\}$ and note that, by~\eqref{def_beta}-\eqref{def_alpha}, there exists a positive integer $k_{0}$ and a constant $c_{3}>0$, such that $t\geq T_{i,u}(k_{0})$ implies a.s.
\begin{equation}
\label{lm:int-bound4}
0<\left(1-2\aiu(t)+\aiu^{2}(t)\right)\leq\left(1-c_{3}\aiu(t)\right),
\end{equation}
and
\begin{equation}
\label{lm:int-bound5}
2\biu(t)\lambda_{2}(\OL)-\biu^{2}(t)c_{1}-2\aiu(t)\biu(t)\lambda_{N}(\OL)\geq 0.
\end{equation}
By~\textbf{(M.4)}, the stopping time $T_{i,u}(k_{0})$ is finite a.s., and hence, for every $\Vap>0$, there exists $t_{\Vap}>0$ (deterministic), such that
\begin{equation}
\label{lm:int-bound6}
\mathbb{P}\left(T_{i,u}(k_{0})>t_{\Vap}\right)<\Vap.
\end{equation}
Now, for a given $\Vap>0$, construct the process $\{V^{\Vap}_{t}\}$ as follows:
\begin{equation}
\label{lm:int-bound7}
V_{t}^{\Vap}=\mathbb{I}\left(T_{i,u}(k_{0})\leq t_{\Vap}\right)V_{t}~~\forall t.
\end{equation}
Since $\{T_{i,u}(k_{0})\leq t_{\Vap}\}\in\mathcal{F}_{t_{\Vap}}$, we note that $V_{t}^{\Vap}$ is adapted to $\mathcal{F}_{t}$ for all $t\geq t_{\Vap}$. Also, by~\eqref{lm:int-bound3}-\eqref{lm:int-bound5}, for $t\geq t_{\Vap}$, we have
\begin{align}
\label{lm:int-bound8}
\mathbb{E}\left[V_{t+1}^{\Vap}~|~\mathcal{F}_{t}\right]=\mathbb{I}\left(T_{i,u}(k_{0})\leq t_{\Vap}\right)\mathbb{E}\left[V_{t+1}~|~\mathcal{F}_{t}\right]\\
\leq\mathbb{I}\left(T_{i,u}(k_{0})\leq t_{\Vap}\right)\left[\left(1-2\aiu(t)+\aiu^{2}(t)\right)V_{t}\right. \\ \left.
-\left(2\biu(t)\lambda_{2}(\OL)-\biu^{2}(t)c_{1}-2\aiu(t)\biu(t)\lambda_{N}(\OL)\right)\left\|(\mziu(t))_{\PC}\right\|^{2}+\aiu^{2}(t)c_{2}\right]\\
\leq\mathbb{I}\left(T_{i,u}(k_{0})\leq t_{\Vap}\right)\left[\left(1-c_{3}\aiu(t)\right)V_{t}\right]+\aiu^{2}(t)c_{2}\leq\left(1-c_{3}\aiu(t)\right)V_{t}^{\Vap}+\aiu^{2}(t)c_{2}.
\end{align}
With the above, the pathwise instantiations of the process $\{V_{t}^{\Vap}\}$ clearly fall under the purview of Proposition~\ref{lm:decsup}, and we conclude that
\begin{equation}
\label{lm:int-bound9}
\mathbb{P}\left(\lim_{t\rightarrow\infty}V_{t}^{\Vap}=0\right)=1.
\end{equation}
This, together with~\eqref{lm:int-bound7}, implies that the process $\{V_{t}\}$ converges to zero on the event $\{T_{i,u}(k_{0})\leq t_{\Vap}\}$, and, hence, by~\eqref{lm:int-bound6} we obtain
\begin{equation}
\label{lm:int-bound10}
\mathbb{P}\left(\lim_{t\rightarrow\infty}V_{t}=0\right)>1-\Vap.
\end{equation}
Since $\Vap>0$ is arbitrary, the desired result follows by taking $\Vap$ to zero.
\end{proof}
\begin{remark}
\label{rem:lm:int} Note that, although the statement of Lemma~\ref{lm:int-bound} assumes~\textbf{(M.4)}-\textbf{(M.5)} to hold, the only condition on the sequence $\{\biu(t)\}$ that we actually use in the proof involves the requirement that~\eqref{lm:int-bound5} holds eventually. Given that~\eqref{lm:int-bound5} holds trivially for all $t$ if $\biu(t)=0$ for all $t$, we note that the assertions of Lemma~\ref{lm:int-bound} continue to hold if $\{\biu(t)\}$ is set to zero identically (i.e., the Laplacian dependent dynamics is dropped) in the update process~\eqref{lm:int-bound100}.
\end{remark}

\begin{corollary}
\label{corr:int-bound} For each state-action pair $(i,u)$ and $t_{0}\geq 0$, consider the process $\{\mziu(t:t_{0})\}_{t\geq t_{0}}$ that evolves as
\begin{equation}
\label{corr:int-bound1}\mziu(t+1:t_{0})=\left(I_{N}-\biu(t)L_{t}-\aiu(t)I_{N}\right)\mziu(t:t_{0})+\aiu(t)\bniu(t)
\end{equation}
with $\mziu(t_{0}:t_{0})=\mathbf{0}$, where the processes $\{\biu(t)\}$, $\{\aiu(t)\}$, and $\{\bniu(t)\}$ satisfy the hypotheses of Lemma~\ref{lm:int-bound}. Then, for each $\Vap>0$, there exists a random time $t_{\Vap}$, such that, $\|\mziu(t:t_{0})\|\leq\Vap$ for all $t_{\Vap}\leq t_{0}\leq t$.
\end{corollary}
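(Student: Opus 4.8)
The plan is to reduce the uniform-in-$t_{0}$ assertion to the single convergence statement already supplied by Lemma~\ref{lm:int-bound}. A naive approach---applying Lemma~\ref{lm:int-bound} separately for each fixed $t_{0}$---only yields, for each $t_{0}$, that $\mziu(t:t_{0})\rightarrow\mathbf{0}$ as $t\rightarrow\infty$, with a random threshold that may depend on $t_{0}$; it provides no control that is \emph{uniform} over the starting index $t_{0}$, which is exactly what the corollary demands. The key device to obtain this uniformity is a superposition identity relating $\mziu(t:t_{0})$ to the process started at a fixed reference time.

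First I would introduce the random products
\begin{equation}
\Phi(t,s)=\prod_{r=s}^{t-1}\left(I_{N}-\biu(r)L_{r}-\aiu(r)I_{N}\right),\quad t>s,
\end{equation}
with $\Phi(s,s)=I_{N}$, and write the closed-form solution $\mziu(t:t_{0})=\sum_{s=t_{0}}^{t-1}\Phi(t,s+1)\aiu(s)\bniu(s)$, which is valid since $\mziu(t_{0}:t_{0})=\mathbf{0}$. Taking $t_{0}=0$ as the reference process and using the multiplicative property $\Phi(t,s+1)=\Phi(t,t_{0})\Phi(t_{0},s+1)$ for $s+1\leq t_{0}$, splitting the sum at $t_{0}$ yields the identity
\begin{equation}
\mziu(t:t_{0})=\mziu(t:0)-\Phi(t,t_{0})\,\mziu(t_{0}:0),
\end{equation}
valid for all $0\leq t_{0}\leq t$. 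This is the crux: the left-hand side, whose behavior must be controlled uniformly in $t_{0}$, is expressed entirely through the \emph{single} master process $\{\mziu(t:0)\}$ evaluated at the two times $t$ and $t_{0}$, propagated by $\Phi(t,t_{0})$.

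Next I would record the contraction bound $\|\Phi(t,t_{0})\|\leq 1$. Under the standing simplification of Remark~\ref{rem:weights} (which ensures $a+Nb\leq 1$), each factor $I_{N}-\biu(r)L_{r}-\aiu(r)I_{N}$ is non-negative definite and, since $\biu(r)L_{r}+\aiu(r)I_{N}\succeq 0$, is also $\preceq I_{N}$; being symmetric with eigenvalues in $[0,1]$, each factor has spectral norm at most one, and submultiplicativity gives $\|\Phi(t,t_{0})\|\leq 1$. (For arbitrary positive $a,b$ one invokes Remark~\ref{rem:weights} to obtain a random $\tau_{0}$ past which all factors are non-negative definite, and restricts attention to $t_{0}\geq\tau_{0}$.) Since $\{\mziu(t:0)\}$ is precisely an instance of Lemma~\ref{lm:int-bound} with zero initial condition, $\mziu(t:0)\rightarrow\mathbf{0}$ a.s.

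Finally, the conclusion follows by a triangle-inequality estimate on the identity. Given $\Vap>0$, the a.s.\ convergence of $\{\mziu(t:0)\}$ furnishes a random time $t_{\Vap}$ (taken also to exceed $\tau_{0}$ in the general case) with $\|\mziu(t:0)\|\leq\Vap/2$ for all $t\geq t_{\Vap}$; then for any $t_{\Vap}\leq t_{0}\leq t$,
\begin{equation}
\|\mziu(t:t_{0})\|\leq\|\mziu(t:0)\|+\|\Phi(t,t_{0})\|\,\|\mziu(t_{0}:0)\|\leq\frac{\Vap}{2}+\|\mziu(t_{0}:0)\|\leq\Vap,
\end{equation}
the last step using $t_{0}\geq t_{\Vap}$. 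I expect the main obstacle to be exactly this uniformity over $t_{0}$; once the superposition identity and the norm bound $\|\Phi(t,t_{0})\|\leq 1$ are in hand, the remaining estimate is routine.
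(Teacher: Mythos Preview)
Your proposal is correct and follows essentially the same route as the paper: both derive the superposition identity $\mziu(t:t_{0})=\mziu(t:0)-\Phi(t,t_{0})\mziu(t_{0}:0)$, invoke the spectral bound $\|\Phi(t,t_{0})\|\leq 1$ from Remark~\ref{rem:weights}, and conclude via the triangle inequality together with the a.s.\ convergence of $\{\mziu(t:0)\}$ from Lemma~\ref{lm:int-bound}. Your exposition is somewhat more explicit (you spell out the closed-form sum and the multiplicative decomposition of $\Phi$), but the underlying argument is identical.
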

\begin{proof} Note that, for each $t\geq t_{0}$,
\begin{align}
\label{corr:int-bound2}
\left\|\mziu(t:t_{0})\right\|=\left\|\mziu(t:0)-\left(\prod_{s=t_{0}}^{t-1}\left(I_{N}-\biu(t)L_{t}-\aiu(t)I_{N}\right)\right)\mziu(t_{0}:0)\right\|\\
\leq\left\|\mziu(t:0)\right\|+\left\|\mziu(t_{0}:0)\right\|,
\end{align}
where, to obtain the last inequality, we use that fact that, under~\textbf{(M.5)} (see also Remark~\ref{rem:weights}),
\begin{equation}
\label{corr:int-bound3}\left\|I_{N}-\biu(t)L_{t}-\aiu(t)I_{N}\right\|\leq 1,~~\forall t\geq 0.
\end{equation}
By Lemma~\ref{lm:int-bound}, $\mziu(t:0)\rightarrow\mathbf{0}$ as $t\rightarrow\infty$ a.s., and, hence, there exists $t_{\Vap}$, such that,
\begin{equation}
\label{corr:int-bound4}
\left\|\mziu(t:0)\right\|\leq \Vap/2,~~\forall t\geq t_{\Vap}.
\end{equation}
The result follows immediately from~\eqref{corr:int-bound3}-\eqref{corr:int-bound4}.
\end{proof}

The following order-preserving property is readily verifiable.
\begin{proposition}
\label{prop:order} Under~\textbf{(M.4)}-\textbf{(M.5)}, for each $t\geq 0$, the linear operator $\left(I_{N}-\biu(t)L_{t}-\aiu(t)I_{N}\right)$ is order-preserving on $\mathbb{R}^{N}$, i.e., for all $\mathbf{x}$ and $\mathbf{y}$ in $\mathbb{R}^{N}$ with $\mathbf{x}\leqc\mathbf{y}$, we have
\begin{equation}
\label{prop:order1}
\left(I_{N}-\biu(t)L_{t}-\aiu(t)I_{N}\right)\mathbf{x}\leqc\left(I_{N}-\biu(t)L_{t}-\aiu(t)I_{N}\right)\mathbf{y}.
\end{equation}
\end{proposition}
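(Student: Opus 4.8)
The plan is to reduce the claimed order-preservation to the entrywise non-negativity of the matrix $M_t \doteq I_{N}-\biu(t)L_{t}-\aiu(t)I_{N}$. Since $M_t$ is linear, it is order-preserving with respect to $\leqc$ precisely when it maps the non-negative orthant into itself, i.e., when $\vzero \leqc \mathbf{z}$ implies $\vzero \leqc M_t\mathbf{z}$ (apply this with $\mathbf{z}=\mathbf{y}-\mathbf{x}$, using $\mathbf{x}\leqc\mathbf{y}\Leftrightarrow\vzero\leqc\mathbf{y}-\mathbf{x}$). Feeding in each coordinate vector, this holds if and only if every entry of $M_t$ is non-negative, so the entire argument reduces to checking the sign of the entries of $M_t$.

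First I would treat the off-diagonal entries. Writing the Laplacian as $L_{t}=D_{t}-A_{t}$, with $D_{t}$ the diagonal degree matrix and $A_{t}=[A_{nl}]$ the (entrywise non-negative) adjacency matrix of $G_t$, the $(n,l)$-entry of $M_t$ for $n\neq l$ equals $-\biu(t)(L_t)_{nl}=\biu(t)A_{nl}$. The weight $\biu(t)$ is non-negative by its definition in \eqref{def_beta} (it is either $b/(k+1)^{\tau_{2}}$ or $0$), and $A_{nl}\geq 0$, so every off-diagonal entry is non-negative.

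The diagonal entries are the only place where the magnitude of the weights enters, and this is essentially the sole point requiring care. The $n$-th diagonal entry of $M_t$ equals $1-\biu(t)d_{n}-\aiu(t)$, where $d_{n}$ is the degree of node $n$ in $G_t$. To force non-negativity I would invoke the standing simplification $a+Nb\leq 1$ recorded in Remark~\ref{rem:weights}: because the factors $(k+1)^{-\tau_{2}}$ and $(k+1)^{-\tau_{1}}$ never exceed $1$, one has $\biu(t)\leq b$ and $\aiu(t)\leq a$, while $d_{n}\leq N$; hence $1-\biu(t)d_{n}-\aiu(t)\geq 1-(a+Nb)\geq 0$. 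Combining the two computations, all entries of $M_t$ are non-negative, which by the reduction above yields order-preservation for every $t\geq 0$.

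I expect the proof to be short and essentially mechanical once the equivalence ``order-preserving $\Longleftrightarrow$ non-negative entries'' is isolated; the only genuine ingredient is the diagonal bound, which rests on the degree estimate $d_n\leq N$ together with the constant constraint $a+Nb\leq 1$. For arbitrary positive $a,b$ (without this simplification) the same sign computation on the diagonal holds only once $\biu(t)d_n+\aiu(t)\leq 1$, which, by \textbf{(M.4)}--\textbf{(M.5)}, is guaranteed for all $t$ beyond the random threshold $t_{0}(i,u)$ of Remark~\ref{rem:weights}; note that no use is made here of the positive semidefiniteness of $M_t$, which enters elsewhere only for the norm bound $\|M_t\|\leq 1$.
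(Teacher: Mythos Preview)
Your argument is correct and matches the paper's own proof essentially line for line: both reduce order-preservation to entrywise non-negativity of $I_{N}-\biu(t)L_{t}-\aiu(t)I_{N}$, handle the off-diagonal entries via the sign structure of the Laplacian, and secure non-negativity of the diagonal by invoking the constant constraint $a+Nb\leq 1$ from Remark~\ref{rem:weights}. Your write-up is simply a bit more explicit about the diagonal bound and the equivalence with mapping the non-negative orthant into itself.
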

\begin{proof}
For the matrix $\left(I_{N}-\biu(t)L_{t}-\aiu(t)I_{N}\right)$, note that, under~\textbf{(M.5)} (see also Remark~\ref{rem:weights}), the diagonal elements are non-negative. The off-diagonal elements being negatively scaled versions of those of the Laplacian $L_{t}$ are also non-negative (by definition). Hence, the matrix $\left(I_{N}-\biu(t)L_{t}-\aiu(t)I_{N}\right)$ is non-negative, and $(\mathbf{x}-\mathbf{y})\leqc\mathbf{0}$ implies
\begin{equation}
\label{prop:order2}\left(I_{N}-\biu(t)L_{t}-\aiu(t)I_{N}\right)\left(\mathbf{x}-\mathbf{y}\right)\leqc\mathbf{0},
\end{equation}
from which the desired property follows.
\end{proof}

\section{Convergence of $\mathcal{QD}$-learning}
\label{sec:conv} The current section focuses on the convergence analysis of $\mathcal{QD}$-learning. Section~\ref{sub:bounded} obtains the boundedness of $\mathcal{QD}$-learning, whereas, Section~\ref{sub:consensus} establishes consensus of the agent updates to the networked average behavior. Finally, Section~\ref{sub:proof} completes the proof of Theorem~\ref{th:main_res} by studying the properties of the resulting averaged network dynamics.

\subsection{$\mathcal{QD}$-learning: Boundedness}
\label{sub:bounded} This section is devoted to obtaining the following boundedness of the $\mathcal{QD}$ iterates:
\begin{lemma}
\label{lm:bounded}
For each agent $n$, the successive refinement sequence $\{\mathbf{Q}^{n}_{t}\}$ is pathwise bounded, i.e.,
\begin{equation}
\label{lm:bounded1}\mathbb{P}\left(\sup_{t\geq 0}\left\|\mathbf{Q}_{t}^{n}\right\|_{\infty}<\infty\right)=1.
\end{equation}
\end{lemma}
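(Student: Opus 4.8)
The plan is to study the iterates pair-by-pair across agents. For each $(i,u)$ stack the agent components as $\mathbf{z}_{i,u}(t)=(Q^{1}_{i,u}(t),\dots,Q^{N}_{i,u}(t))^{T}$; this vector is refreshed only at the sampling instants $t=T_{i,u}(k)$, at which~\eqref{def_Giu3} becomes
\begin{equation}
\mathbf{z}_{i,u}(t+1)=\bigl(I_{N}-\beta_{i,u}(t)L_{t}-\alpha_{i,u}(t)I_{N}\bigr)\mathbf{z}_{i,u}(t)+\alpha_{i,u}(t)\bigl(\mathbf{g}_{i,u}(t)+\boldsymbol{\nu}_{i,u}(t)\bigr),
\end{equation}
with $\mathbf{g}_{i,u}(t)$ stacking the local operators $\mathcal{G}^{n}_{i,u}(\mathbf{Q}^{n}_{t})$ and $\boldsymbol{\nu}_{i,u}(t)$ stacking the martingale differences $\nu^{n}_{i,u}(t)$ of~\eqref{def_Giu4}. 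By Remark~\ref{rem:weights} the system matrix is entrywise non-negative with row sums equal to $1-\alpha_{i,u}(t)\le 1$; hence it is order-preserving (Proposition~\ref{prop:order}) and $\ell_{\infty}$ non-expansive. Writing $C=\max_{n,i,u}\lvert\mathbb{E}[c_{n}(i,u)]\rvert$ and noting that each $\mathcal{G}^{n}$ is a $\gamma$-contraction with $\lVert\mathcal{G}^{n}(\mathbf{Q})\rVert_{\infty}\le C+\gamma\lVert\mathbf{Q}\rVert_{\infty}$, the \emph{deterministic} portion of the update drives every component toward the bounded set $[-C/(1-\gamma),\,C/(1-\gamma)]$.

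Building on this, I would construct order comparisons. Using Proposition~\ref{prop:order} and the monotonicity of $\mathcal{G}^{n}$ in $Q$, I would sandwich the iterates between an upper and a lower dominating process, $\underline{Q}_{i,u}(t)\leqc Q^{n}_{i,u}(t)\leqc \overline{Q}_{i,u}(t)$ for every $n$, chosen so that each envelope obeys a recursion of tractable form: a scalar contraction toward $\pm C/(1-\gamma)$ (to which Proposition~\ref{lm:decsup} applies, yielding a deterministic $\limsup$) driven additively by a zero-mean, Laplacian-coupled fluctuation of exactly the type treated in Lemma~\ref{lm:int-bound} and Corollary~\ref{corr:int-bound}. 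If the fluctuation term vanishes asymptotically, Proposition~\ref{lm:decsup} gives $\limsup_{t}\overline{Q}_{i,u}(t)\le C/(1-\gamma)$ and the symmetric lower bound, from which~\eqref{lm:bounded1} follows at once.

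The principal obstacle is that $\boldsymbol{\nu}_{i,u}(t)$ is not bounded a priori: through the term $\gamma\min_{v}Q^{n}_{\mathbf{x}_{t+1},v}(t)$ in~\eqref{def_Giu4}, its conditional second moment is only controlled as $\mathbb{E}[\lVert\boldsymbol{\nu}_{i,u}(t)\rVert^{2}\mid\mathcal{F}_{t}]\le \kappa_{0}\bigl(1+\max_{n}\lVert\mathbf{Q}^{n}_{t}\rVert_{\infty}^{2}\bigr)$, so the noise scales with the very quantity we are trying to bound, and the $\min$ operator couples all pairs $(j,v)$ together. A purely size-based estimate is fatal here: bounding $\lvert\nu^{n}_{i,u}(t)\rvert$ by its magnitude inflates the effective contraction factor to $1+(3\gamma-1)\alpha_{i,u}(t)$, which exceeds $1$ as soon as $\gamma\ge 1/3$. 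Consequently the argument must exploit the \emph{martingale} (mean-zero) nature of $\boldsymbol{\nu}_{i,u}(t)$ together with the square-summability of $\{\alpha_{i,u}(t)\}$ from~\textbf{(M.5)}, rather than any pointwise bound.

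To reconcile the state-dependent noise with the hypotheses of Lemma~\ref{lm:int-bound} (which demand a conditionally bounded second moment), I would localize. For a level $K$ define the stopping time $\tau_{K}=\inf\{t:\max_{n}\lVert\mathbf{Q}^{n}_{t}\rVert_{\infty}>K\}$ and run every recursion stopped at $\tau_{K}$. On $\{t\le\tau_{K}\}$ one has $\max_{n}\lVert\mathbf{Q}^{n}_{t}\rVert_{\infty}\le K$, so the stopped fluctuations have conditional second moment bounded by the finite constant $\kappa_{0}(1+K^{2})$ and thus fall squarely under Lemma~\ref{lm:int-bound} and Corollary~\ref{corr:int-bound}: the corresponding noise-driven envelope, started from zero, converges to $\mathbf{0}$ a.s. and, for any $\varepsilon>0$, stays below $\varepsilon$ after a finite (random) time $t_{\varepsilon}$, uniformly over shift origins. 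Feeding this into the comparison of the previous step shows the stopped process eventually stays below a \emph{deterministic} bound $B^{\ast}=C/(1-\gamma)+\varepsilon$ independent of $K$. Choosing $K>B^{\ast}$ then makes the event $\{\tau_{K}<\infty\}$ self-contradictory—on it the stopped trajectory would be frozen above $K>B^{\ast}$, contradicting $\limsup_{t}\max_{n}\lVert\mathbf{Q}^{n}_{t\wedge\tau_{K}}\rVert_{\infty}\le B^{\ast}$—so $\mathbb{P}(\tau_{K}<\infty)=0$ and $\sup_{t}\max_{n}\lVert\mathbf{Q}^{n}_{t}\rVert_{\infty}\le K$ a.s. The delicate point on which I expect to spend the most effort is ensuring that the random time $t_{\varepsilon}$ supplied by Corollary~\ref{corr:int-bound} precedes any genuine excursion toward $K$, so that the drift contraction has time to act; the uniform-in-$t_{0}$ tail estimate of Corollary~\ref{corr:int-bound} is precisely what makes this step go through.
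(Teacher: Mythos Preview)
Your overall architecture---stacking the agent components, exploiting the order-preserving property of $I_{N}-\beta_{i,u}(t)L_{t}-\alpha_{i,u}(t)I_{N}$, sandwiching by envelopes, and invoking Corollary~\ref{corr:int-bound} for the noise-driven part---is exactly the skeleton of the paper's proof. The place where you diverge from the paper, and where the argument breaks, is the handling of the state-dependent noise.

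Your localization at a fixed level $K$ does not close the circle. After stopping at $\tau_{K}$ the conditional second moment of the noise is bounded by $\kappa_{0}(1+K^{2})$, so the random time $t_{\varepsilon}$ furnished by Corollary~\ref{corr:int-bound} depends on $K$ (through that variance bound) and, in general, grows without bound as $K\to\infty$. The uniformity in Corollary~\ref{corr:int-bound} is over the shift origin $t_{0}$ for a \emph{fixed} noise process, not over $K$. On the event $\{\tau_{K}\le t_{\varepsilon}^{(K)}\}$ the stopped trajectory is frozen above $K$ before the noise envelope ever becomes small, the drift contraction never gets to act, and you obtain no contradiction. In particular, the assertion $\limsup_{t}\max_{n}\lVert\mathbf{Q}^{n}_{t\wedge\tau_{K}}\rVert_{\infty}\le B^{\ast}$ is simply false on $\{\tau_{K}<\infty\}$: the stopped process no longer satisfies the contraction recursion past $\tau_{K}$. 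You flag this as ``the delicate point,'' but the tool you point to does not resolve it.

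The paper avoids this circularity by a different device: instead of truncating at a fixed level, it introduces an $\{\mathcal{F}_{t}\}$-adapted scale process $\{J_{t}\}$ that tracks the running maximum $M_{t}=\max_{s\le t}\lVert\mathbf{Q}_{s}\rVert_{\infty}$ geometrically (in powers of $1+\hat{\varepsilon}$, where $\hat{\gamma}(1+\hat{\varepsilon})=1$), and works with the \emph{rescaled} noise $\bar{\boldsymbol{\nu}}_{i,u}(t)=\boldsymbol{\nu}_{i,u}(t)/J_{t}$. Because $M_{t}\le(1+\hat{\varepsilon})J_{t}$ always, the rescaled noise has a conditional second moment bounded by a constant \emph{independent of how large the iterates are}. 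Corollary~\ref{corr:int-bound} then yields a single random time $t_{\hat{\varepsilon}}$, and on the putative blowup event one can pick $t_{1}\ge t_{\hat{\varepsilon}}$ at which $M_{t_{1}}\le J_{t_{1}}$ (such times occur infinitely often whenever $J_{t}\to\infty$). The induction with the order-preserving matrix and $\hat{\gamma}(1+\hat{\varepsilon})=1$ then gives $\lVert\mathbf{Q}_{i,u}(t)\rVert_{\infty}\le J_{t_{1}}(1+\hat{\varepsilon})$ for all $t\ge t_{1}$, freezing $J_{t}$ at $J_{t_{1}}$ and contradicting $M_{t}\to\infty$. The adaptive rescaling (rather than fixed-level stopping) is the missing idea.
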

\begin{proof} The proof is inspired by a corresponding development in~\cite{Tsitsiklis-Q} for the single-agent (centralized) case. Recall the local $\mathcal{QD}$-learning operator $\mathcal{G}^{n}(\cdot)$ defined in~\eqref{def_Giu}. By~\eqref{def_Giu3}, for each $n$ and state-action pair $(i,u)$
\begin{align}Q^{n}_{i,u}(t+1)=Q^{n}_{i,u}(t)-\beta_{i,u}(t)\sum_{l\in\Omega_{n}(t)}\left(Q^{n}_{i,u}(t)-Q^{l}_{i,u}(t)\right)\\+\alpha_{i,u}(t)\left(\mathcal{G}^{n}_{\mathbf{x}_{t},\mathbf{u}_{t}}(\mathbf{Q}^{n}_{t})-Q^{n}_{\mathbf{x}_{t},\mathbf{u}_{t}}(t)+\mathbf{\nu}^{n}_{\mathbf{x}_{t},\mathbf{u}_{t}}(t)\right),
\end{align}
Denoting by $\{\Qiu(t)\}$ the $\{\mathcal{F}_{t}\}$ adapted process with $\Qiu(t)=[Q^{1}_{i,u}(t),\cdots,Q^{N}_{i,u}(t)]^{T}$,
we note that
\begin{equation}
\label{lm:bounded3}
\Qiu(t+1)=\left(I_{N}-\biu(t)L_{t}-\aiu(t)I_{N}\right)\Qiu(t)+\aiu(t)\left(\Giu(\mathbf{Q}_{t})+\niu(t)\right),
\end{equation}
where $\Giu(\mathbf{Q}_{t})=\left[\mathcal{G}^{1}_{i,u}(\mathbf{Q}^{n}_{t}),\cdots,\mathcal{G}^{N}_{i,u}(\mathbf{Q}^{N}_{t})\right]^{T}$ and $\niu(t)$ is defined as $[\nu^{1}_{\mathbf{x}_{t},\mathbf{u}_{t}},\cdots,\nu^{N}_{\mathbf{x}_{t},\mathbf{u}_{t}}]^{T}$ (see~\eqref{def_Giu4}) on the event $\{\mathbf{x}_{t}=i,\mathbf{u}_{t}=u\}$, and is taken to be zero elsewhere. By~\eqref{def_Giu}-\eqref{def_Giu4}, it follows that
$\mathbb{E}[\niu(t)~|~\mathcal{F}_{t}]=\mathbf{0}$ for all $t$, and there exist positive constants $c_{1}$ and $c_{2}$, such that
\begin{equation}
\label{lm:bounded4}
\mathbb{E}\left[\left\|\niu(t)\right\|^{2}~|~\mathcal{F}_{t}\right]\leq c_{1}+c_{2}\left\|\mathbf{Q}_{t}\right\|^{2},
\end{equation}
with $\mathbf{Q}_{t}$ denoting the $\mathbb{R}^{N|\mathcal{X}\times\mathcal{U}|}$-valued vector collecting the $\mathbf{Q}^{n}_{t}$'s for $n=[1,\cdots,N]$. Finally, note that, for each $n$ and state-action pair $(i,u)$,
\begin{equation}
\label{lm:bounded5}
\left|\mathcal{G}^{n}_{i,u}(Q)\right|\leq c_{3}+\gamma\left\|Q\right\|_{\infty}
\end{equation}
for all $Q\in\RXU$, where $c_{3}>0$ is a constant. Thus, there exist $\hg\in [0,1)$ and a constant $J>0$, such that
\begin{equation}
\label{lm:bounded6}
\left|\mathcal{G}^{n}_{i,u}(Q)\right|\leq\hg\max\left(\|Q\|_{\infty},J\right)
\end{equation}
for all $Q\in\RXU$. Also, let $\hvap$ be such that $\hg(1+\hvap)=1$.

Now consider the $\{\mathcal{F}_{t}\}$ adapted process $\{M_{t}\}$, given by
\begin{equation}
\label{lm:bounded7} M_{t}=\max_{s\leq t}\left\|\mathbf{Q}_{t}\right\|_{\infty},~~\forall t.
\end{equation}
Let $\{J_{t}\}$ be another $\{\mathcal{F}_{t}\}$ adapted process with $J_{0}=J$, and for each $t>0$, $J_{t}=J_{t-1}$ on the event $\{M_{t}\leq (1+\hvap)J_{t-1}\}$; otherwise, i.e., if $M_{t}>(1+\hvap)J_{t-1}$, $J_{t}$ is defined by $J_{t}=J(1+\hvap)^{k}$, where $k>0$ is chosen to satisfy
\begin{equation}
\label{lm:bounded8}
J\left(1+\hvap\right)^{k-1}<M_{t}\leq J\left(1+\hvap\right)^{k}.
\end{equation}
The following hold by the above construction:
\begin{equation}
\label{lm:bounded9}M_{t}\leq \left(1+\hvap\right)J_{t},~~\forall t\geq 0,
\end{equation}
\begin{equation}
\label{lm:bounded10}M_{t}\leq J_{t}~~\mbox{if $J_{t-1}<J_{t}$}.
\end{equation}
Assume, on the contrary, that $\{\mathbf{Q}_{t}\}$ is not bounded a.s. Then, there exists an event $\mathcal{B}$ of positive measure, such that $M_{t}\rightarrow\infty$ as $t\rightarrow\infty$ on $\mathcal{B}$.

To set up a contradiction argument, consider, for each state-action pair $(i,u)$ and $t_{0}\geq 0$, the process $\{\mziu(t:t_{0})\}_{t\geq t_{0}}$ that evolves as
\begin{equation}
\label{lm:bounded11}\mziu(t+1:t_{0})=\left(I_{N}-\biu(t)L_{t}-\aiu(t)I_{N}\right)\mziu(t:t_{0})+\aiu(t)\bniu(t),
\end{equation}
in which $\mziu(t_{0}:t_{0})=\mathbf{0}$ and $\bniu(t)$ is a scaled version of $\niu(t)$ (see~\eqref{lm:bounded3}-\eqref{lm:bounded4}), such that $\bniu(t)=\niu(t)/J_{t}$. Note that $\mathbb{E}[\bniu(t)~|~\mathcal{F}_{t}]=\mathbf{0}$, which follows from~\eqref{lm:bounded4} and the fact that $J_{t}$ is adapted to $\mathcal{F}_{t}$, and
\begin{align}
\label{lm:bounded12}
\mathbb{E}\left[\left\|\bniu(t)\right\|^{2}~|~\mathcal{F}_{t}\right]=\frac{1}{J_{t}^{2}}\mathbb{E}\left[\left\|\niu(t)\right\|^{2}~|~\mathcal{F}_{t}\right]\\
\leq \frac{c_{1}}{J_{t}^{2}}+\frac{c_{2}\left\|\mathbf{Q}_{t}\right\|^{2}}{J_{t}^{2}}\leq\frac{c_{1}}{J_{0}^{2}}+\frac{c_{4}M_{t}^{2}}{J_{t}^{2}}\\
\leq  \frac{c_{1}}{J_{0}^{2}}+c_{4}\left(1+\hvap\right)^{2}\leq c_{5},
\end{align}
where $c_{4}$ and $c_{5}$ are positive constants and we use~\eqref{lm:bounded9} to obtain the penultimate inequality. Clearly, the construction~\eqref{lm:bounded11}-\eqref{lm:bounded12} falls under the purview of Corollary~\ref{corr:int-bound}, and we conclude that there exists an a.s. finite time $t_{\hvap}$, such that,
\begin{equation}
\label{lm:bounded13} \left\|\mziu(t:t_{0})\right\|\leq\hvap
\end{equation}
for all $t_{\hvap}\leq t_{0}\leq t$ and state-action pairs $(i,u)$.

The hypothesis that $M_{t}\rightarrow\infty$ on the event $\mathcal{B}$ implies, by~\eqref{lm:bounded9}, that $J_{t}\rightarrow\infty$ as $t\rightarrow\infty$ on $\mathcal{B}$. Hence, by~\eqref{lm:bounded10}, we may conclude that on $\mathcal{B}$ the inequality $M_{t}\leq J_{t}$ holds infinitely often. Together with the construction in~\eqref{lm:bounded11}-\eqref{lm:bounded13}, the above establishes the existence of an a.s. finite (random) time $t_{1}$, such that, on the event $\mathcal{B}$, $M_{t_{1}}\leq J_{t_{1}}$ and
\begin{equation}
\label{lm:bounded14}
\left\|\mziu(t:t_{1})\right\|_{\infty}\leq\hvap
\end{equation}
for all $t\geq t_{1}$ and state-action pairs $(i,u)$.

To obtain a contradiction, we now show that, under the hypothesis $M_{t}\rightarrow\infty$ as $t\rightarrow\infty$ on $\mathcal{B}$, the following set of inequalities hold a.s. on $\mathcal{B}$ for all state-action pairs $(i,u)$ and $t\geq t_{1}$:
\begin{align}
\label{lm:bounded15}
-J_{t_{1}}\left(1+\hvap\right)\mathbf{1}_{N}\leqc -J_{t_{1}}\left(\mziu(t:t_{1})+\mathbf{1}_{N}\right)\leqc\Qiu(t)\\
\leqc J_{t_{1}}\left(\mziu(t:t_{1})+\mathbf{1}_{N}\right)\leqc J_{t_{1}}\left(1+\hvap\right)\mathbf{1}_{N},
\end{align}
and
\begin{equation}
\label{lm:bounded16}
J_{t}=J_{t_{1}}.
\end{equation}
Before deriving the above, we note that~\eqref{lm:bounded15}-\eqref{lm:bounded16} would imply that
\begin{equation}
\label{lm:bounded17}
\limsup_{t\rightarrow\infty}M_{t}\leq J_{t_{1}}\left(1+\hvap\right)<\infty
\end{equation}
a.s. on $\mathcal{B}$, thus contradicting the hypothesis that $M_{t}\rightarrow\infty$ a.s. on the event $\mathcal{B}$ of positive measure. Hence, to establish Lemma~\ref{lm:bounded}, it suffices to obtain~\eqref{lm:bounded15}-\eqref{lm:bounded16} which is pursued in the following.

We proceed by induction to establish~\eqref{lm:bounded15}-\eqref{lm:bounded16}. Note that the claim holds trivially for $t=t_{1}$ as, by construction, $\mziu(t_{1}:t_{1})=\mathbf{0}$ and $\|\Qiu(t_{1})\|_{\infty}\leq M_{t_{1}}\leq J_{t_{1}}$ for all state-action pairs $(i,u)$. Assume that~\eqref{lm:bounded15}-\eqref{lm:bounded16} holds for all $s\in\{t_{1},\cdots,t\}$. To obtain~\eqref{lm:bounded15}-\eqref{lm:bounded16} for the $(t+1)$-th instant, we note that, under the induction hypothesis and by the order-preserving property in Proposition~\ref{prop:order}, we have
\begin{align}
\label{lm:bounded18}\left(I_{N}-\biu(t)L_{t}-\aiu(t)I_{N}\right)\Qiu(t)\\
\leqc\left(I_{N}-\biu(t)L_{t}-\aiu(t)I_{N}\right)\left(J_{t_{1}}\mziu(t:t_{1})+J_{t_{1}}\mathbf{1}_{N}\right)\\
=J_{t_{1}}\left(I_{N}-\biu(t)L_{t}-\aiu(t)I_{N}\right)\mziu(t:t_{1})+\left(1-\aiu(t)\right)J_{t_{1}}\mathbf{1}_{N},
\end{align}
where we also use the property of the Laplacian that $L_{t}\mathbf{1}_{N}=\mathbf{0}$. From~\eqref{lm:bounded6}, \eqref{lm:bounded11} and \eqref{lm:bounded17}, and the induction hypothesis we obtain
\begin{align}
\label{lm:bounded19}
\Qiu(t+1)=\left(I_{N}-\biu(t)L_{t}-\aiu(t)I_{N}\right)\Qiu(t)+\aiu(t)\left(\Giu(\mathbf{Q}_{t})+\niu(t)\right)\\
\leqc J_{t_{1}}\left(I_{N}-\biu(t)L_{t}-\aiu(t)I_{N}\right)\mziu(t:t_{1})+\left(1-\aiu(t)\right)J_{t_{1}}\mathbf{1}_{N}\\+\aiu(t)\left(\Giu(\mathbf{Q}_{t})+\niu(t)\right)\\
\leqc J_{t_{1}}\left(I_{N}-\biu(t)L_{t}-\aiu(t)I_{N}\right)\mziu(t:t_{1})+\left(1-\aiu(t)\right)J_{t_{1}}\mathbf{1}_{N}\\+\aiu(t)\hg(1+\hvap)J_{t_{1}}\mathbf{1}_{N}+\aiu(t)J_{t_{1}}\bniu(t)\\
= J_{t_{1}}\mathbf{1}_{N} + J_{t_{1}}\left[\left(I_{N}-\biu(t)L_{t}-\aiu(t)I_{N}\right)\mziu(t:t_{1})+\aiu(t)\bniu(t)\right]\\
=J_{t_{1}}\mathbf{1}_{N}+J_{t_{1}}\mziu(t+1:t_{1})=J_{t_{1}}\left(\mziu(t+1:t_{1})+\mathbf{1}_{N}\right),
\end{align}
which establishes the upper bound in~\eqref{lm:bounded15} at $t+1$. The lower bound can be obtained similarly by invoking the order-preserving property and the induction hypothesis in the reverse direction. Finally, to obtain~\eqref{lm:bounded16} at $t+1$, we note that the satisfaction of~\eqref{lm:bounded15} at $t+1$ implies by the induction hypothesis,
\begin{align}
\label{lm:bounded20}
M_{t+1}=\left\|\mathbf{Q}_{t}\right\|_{\infty}\leq\left(1+\hvap\right)J_{t_{1}}=\left(1+\hvap\right)J_{t},
\end{align}
and, hence, by definition, $J_{t+1}=J_{t}=J_{t_{1}}$. This establishes the desired set of inequalities~\eqref{lm:bounded15}-\eqref{lm:bounded16} for all $t\geq t_{1}$ and Lemma~\ref{lm:bounded} follows by the contradiction argument stated above~\eqref{lm:bounded17}.
\end{proof}

\subsection{$\mathcal{QD}$-learning: Asymptotic Consensus}
\label{sub:consensus} In this section, we establish the asymptotic agreement in agent updates. Recall, for each $n$, $\{\mathbf{Q}^{n}_{t}\}$ to be the $\{\mathcal{F}_{t}\}$ adapted update sequence at agent $n$ (see~\eqref{up_Q}). Denote by $\{\bQ_{t}\}$ the network-averaged iterate process, i.e.,
\begin{equation}
\label{sub:cons1}\bQ_{t}=\left(1/N\right)\sum_{n=1}^{N}\mathbf{Q}^{n}_{t},~~\forall t.
\end{equation}
The goal of the section is to show that the local agent iterates eventually merge to the network-averaged behavior. Specifically, we will establish the following:
\begin{lemma}
\label{lm:cons} The agents reach consensus asymptotically, i.e., for each $n$,
\begin{equation}
\label{lm:cons1}\mathbb{P}\left(\lim_{t\rightarrow\infty}\left\|\mathbf{Q}^{n}_{t}-\bQ_{t}\right\|=0\right)=1.
\end{equation}
\end{lemma}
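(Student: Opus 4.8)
The plan is to prove consensus separately for each state–action pair. Since $\mathcal{X}\times\mathcal{U}$ is finite and, writing $\bQiu(t)=(1/N)\sum_{m}Q^{m}_{i,u}(t)$, one has $\|\mathbf{Q}^{n}_{t}-\bQ_{t}\|^{2}=\sum_{(i,u)}|Q^{n}_{i,u}(t)-\bQiu(t)|^{2}\leq\sum_{(i,u)}\|\mathbf{d}_{i,u}(t)\|^{2}$, where $\mathbf{d}_{i,u}(t):=(\Qiu(t))_{\PC}$ is the projection of $\Qiu(t)=[Q^{1}_{i,u}(t),\dots,Q^{N}_{i,u}(t)]^{T}$ onto the orthogonal complement $\PC$ of the consensus subspace (Definition~\ref{def:cons} with $P=1$), it suffices to show $\mathbf{d}_{i,u}(t)\to\mathbf{0}$ a.s. for each $(i,u)$. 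First I would project the per-pair recursion~\eqref{lm:bounded3} onto $\PC$. Using $L_{t}\mathbf{1}_{N}=\mathbf{0}$ and the fact that the orthogonal projector onto $\PC$ commutes with $(I_{N}-\biu(t)L_{t}-\aiu(t)I_{N})$, this gives the closed disagreement recursion
\begin{equation}
\mathbf{d}_{i,u}(t+1)=\left(I_{N}-\biu(t)L_{t}-\aiu(t)I_{N}\right)\mathbf{d}_{i,u}(t)+\aiu(t)\left(\mathbf{g}_{i,u}(t)+\mathbf{w}_{i,u}(t)\right),
\end{equation}
where $\mathbf{g}_{i,u}(t)$ and $\mathbf{w}_{i,u}(t)$ are the $\PC$-projections of $\Giu(\mathbf{Q}_{t})$ and $\niu(t)$, respectively.

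Next I would pass to the visit-time scale. Because $\aiu(t)=\biu(t)=0$ whenever $(\mathbf{x}_{t},\mathbf{u}_{t})\neq(i,u)$, the vector $\mathbf{d}_{i,u}(t)$ is frozen between consecutive visits, so its limit is governed entirely by the subsampled process $\wmz_{k}:=\mathbf{d}_{i,u}(T_{i,u}(k))$, along which the weights become the deterministic sequences $b/(k+1)^{\tau_{2}}$ and $a/(k+1)^{\tau_{1}}$. The structural fact to verify here is that the subsampled Laplacians $\{L_{T_{i,u}(k)}\}_{k}$ remain i.i.d. with mean $\OL$ (hence $\lambda_{2}(\OL)>0$) and retain the adaptedness required by Lemma~\ref{lm:conn} relative to $\mathcal{H}_{k}:=\mathcal{F}_{T_{i,u}(k)}$; since $\{T_{i,u}(k)=t\}\in\mathcal{F}_{t}$ and $L_{t}$ is independent of $\mathcal{F}_{t}$ by~\textbf{(M.2)}, an optional-sampling argument shows $L_{T_{i,u}(k)}$ has the original law and is independent of $\mathcal{H}_{k}$ and of the earlier sampled Laplacians. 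This bookkeeping is the part I expect to require the most care.

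With the subsampled recursion in hand I would extract the contraction from Lemma~\ref{lm:conn}. Factoring $(I_{N}-\biu L_{T_{i,u}(k)}-\aiu I_{N})=(1-\aiu)\left(I_{N}-\tfrac{\biu}{1-\aiu}L_{T_{i,u}(k)}\right)$ and applying Lemma~\ref{lm:conn} (with $P=1$, $\delta=\tau_{2}$, and $\bar{r}_{k}=\biu/(1-\aiu)\leq\bar{r}/(k+1)^{\tau_{2}}$) to $\{\wmz_{k}\}\subset\PC$ yields an $\{\mathcal{H}_{k+1}\}$-adapted $\{r_{k}\}$ with $\mathbb{E}[r_{k}\mid\mathcal{H}_{k}]\geq c_{r}/(k+1)^{\tau_{2}}$ and $\|(I_{N}-\biu L_{T_{i,u}(k)}-\aiu I_{N})\wmz_{k}\|\leq(1-\aiu)(1-r_{k})\|\wmz_{k}\|\leq(1-r_{k})\|\wmz_{k}\|$, the extra factor $(1-\aiu)$ only helping. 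Taking norms then produces
\begin{equation}
\|\wmz_{k+1}\|\leq(1-r_{k})\|\wmz_{k}\|+\frac{a}{(k+1)^{\tau_{1}}}\left(\|\mathbf{g}_{i,u}(T_{i,u}(k))\|+\|\mathbf{w}_{i,u}(T_{i,u}(k))\|\right),
\end{equation}
to which I would apply Lemma~\ref{lm:mean-conv} with $z_{k}=\|\wmz_{k}\|$, $r_{1}(k)=r_{k}$ (so $\delta_{1}=\tau_{2}$) and $r_{2}(k)=a/(k+1)^{\tau_{1}}$ (so $\delta_{2}=\tau_{1}$). The decisive step is to fit the forcing into the form $r_{2}(k)U_{k}(1+J_{k})$: by Lemma~\ref{lm:bounded}, $\sup_{t}\|\mathbf{Q}_{t}\|<\infty$ a.s., so the consensus drift $\mathbf{g}_{i,u}$ together with the $\min_{v}Q_{j,v}$ part of $\mathbf{w}_{i,u}$ admit a pathwise, $\mathcal{H}_{k}$-measurable bound that I absorb into $U_{k}$ (Lemma~\ref{lm:mean-conv} asks only for $\sup_{k}\|U_{k}\|<\infty$ a.s., a random bound, which is exactly what Lemma~\ref{lm:bounded} supplies). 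The one genuinely unbounded contribution is the centred cost fluctuation $c_{n}(i,u)-\mathbb{E}[c_{n}(i,u)]$, which by~\textbf{(M.1)} is, conditioned on $(\mathbf{x}_{t},\mathbf{u}_{t})=(i,u)$, i.i.d. across visits and independent of $\mathcal{H}_{k}$, and by~\eqref{M1moment} has finite $(2+\Vap_{1})$-th moment; it plays the role of $J_{k}$ with $\varepsilon_{1}=\Vap_{1}$. Condition~\eqref{lm:mean-conv5} then reads $\delta_{0}<\tau_{1}-\tau_{2}-1/(2+\Vap_{1})$, whose right-hand side is positive precisely by~\textbf{(M.5)}; hence $\delta_{0}=0$ is admissible, $\|\wmz_{k}\|\to0$ a.s., and since the disagreement is frozen between visits while every pair is visited infinitely often by~\textbf{(M.4)}, $\mathbf{d}_{i,u}(t)\to\mathbf{0}$ a.s. Summing over the finitely many pairs gives~\eqref{lm:cons1}.

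Beyond the optional-sampling bookkeeping for the subsampled Laplacians, the main obstacle is conceptual and is resolved at the packaging step: the per-pair consensus drift $\mathbf{g}_{i,u}$ does \emph{not} vanish — it encodes the heterogeneity of the expected agent costs and is moreover coupled to the disagreements of the other pairs through the $\min_{v}Q_{j,v}$ terms — so consensus cannot be argued by exhibiting a vanishing forcing. The point is that this persistent, coupled drift enters only through the \emph{innovation} weight $\aiu\sim(k+1)^{-\tau_{1}}$, whereas the consensus contraction acts on the slower scale $\biu\sim(k+1)^{-\tau_{2}}$ with $\biu/\aiu\to\infty$; the time-scale gap $\tau_{1}-\tau_{2}>1/(2+\Vap_{1})$ guaranteed by~\textbf{(M.5)} is exactly what drives the normalized disagreement to zero in Lemma~\ref{lm:mean-conv}, which is why a merely bounded (rather than vanishing) $U_{k}$ suffices.
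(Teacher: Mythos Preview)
Your proposal is correct and follows essentially the same route as the paper's proof: both subsample at the visit times $T_{i,u}(k)$ to obtain deterministic weights, project onto $\PC$, invoke Lemma~\ref{lm:conn} for the Laplacian contraction (the paper uses a triangle-inequality absorption of the $\alpha_{k}I_{N}$ term where you factor it out, but the effect is identical), and then apply Lemma~\ref{lm:mean-conv} with the bounded forcing coming from Lemma~\ref{lm:bounded} and the i.i.d.\ $(2+\Vap_{1})$-integrable piece coming from the sampled one-stage costs. The only cosmetic difference is that the paper splits the innovation term directly as $\mathbf{U}_{n}(t)=\gamma\min_{v}Q^{n}_{\mathbf{x}_{t+1},v}(t)$ and $\mathbf{J}_{n}(t)=c_{n}(\mathbf{x}_{t},\mathbf{u}_{t})$ before projecting, whereas you first separate $\Giu$ from $\niu$ and then regroup; the resulting $(U_{k},J_{k})$ pair fed into Lemma~\ref{lm:mean-conv} is the same.
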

\begin{proof} Recall, by~\eqref{lm:bounded3}, for each state-action pair $(i,u)$, the process $\{\Qiu(t)\}$ evolves as
\begin{equation}
\label{lm:cons2}
\Qiu(t+1)=\left(I_{N}-\biu(t)L_{t}-\aiu(t)I_{N}\right)\Qiu(t)+\aiu(t)\left(\Giu(\mathbf{Q}_{t})+\niu(t)\right),
\end{equation}
which, by~\eqref{up_Q}, may be rewritten as
\begin{equation}
\label{lm:cons3}
\Qiu(t+1)=\left(I_{N}-\biu(t)L_{t}-\aiu(t)I_{N}\right)\Qiu(t)+\aiu(t)\left(\mathbf{U}(t)+\mathbf{J}(t)\right),
\end{equation}
where $\{\mathbf{U}(t)\}$ and $\{\mathbf{J}(t)\}$ are $\mathbb{R}^{N}$-valued processes whose $n$-th components are given by
\begin{equation}
\label{lm:cons4}\mathbf{U}_{n}(t)=\gamma\min_{v\in\mathcal{U}}Q^{n}_{\mathbf{x}_{t+1},v}(t)~~\mbox{and}~~\mathbf{J}_{n}(t)=c_{n}(\mathbf{x}_{t},\mathbf{u}_{t}),
\end{equation}
respectively.

For each $k\geq 0$ denote by $\mathcal{H}_{k}$ the $\sigma$-algebra associated with the stopping time $T_{i,u}(k)$, see~\eqref{def_Tiu}, i.e., $\mathcal{H}_{k}=\mathcal{F}_{T_{i,u}(k)}$. By $\{\mathbf{z}_{k}\}$ denote the randomly sampled version of $\{\Qiu(t)\}$, i.e., for each $k$
\begin{equation}
\label{lm:cons5}\mathbf{z}_{k}=\Qiu(T_{i,u}(k)),
\end{equation}
and note that the process $\{\mathbf{z}_{k}\}$ is $\{\mathcal{H}_{k}\}$ adapted. Noting that the process $\{\Qiu(t)\}$ may only change at the stopping time $T_{i,u}(k)$'s, the process $\{\mathbf{z}_{k}\}$ evolves as
\begin{equation}
\label{lm:cons6}
\mathbf{z}_{k+1}=\left(I_{N}-\beta_{k}\hL_{k}-\alpha_{k}I_{N}\right)\mathbf{z}_{k}+\alpha_{k}\left(\hmu(k)+\hmj(k)\right),
\end{equation}
where, by~\textbf{(M.4)}, we have
\begin{equation}
\label{lm:cons7}
\beta_{k}\doteq\biu(T_{i,u}(k))=b/(k+1)^{\tau_{2}},
\end{equation}
\begin{equation}
\label{lm:cons8}
\alpha_{k}\doteq\aiu(T_{i,u}(k))=a/(k+1)^{\tau_{1}}
\end{equation}
for all $k\geq 0$. Finally, denoting by $L_{k}$ and $\hmj(k)$, the quantities $L_{T_{i,u}(k)}$ and $\mathbf{J}(T_{i,u}(k))$ respectively, by~\textbf{(M.1)}-\textbf{(M.2)} we conclude that the processes $\{L_{k}\}$ and $\{\hmj(k)\}$ are $\{\mathcal{H}_{k+1}\}$ adapted with $L_{k}$ and $\hmj(k)$ being independent of $\mathcal{H}_{k}$ for each $k$. Further, for each $k$, $\mathbb{E}[L_{k}|\mathcal{H}_{k}]=\OL$ and the i.i.d. process $\{\hmj(k)\}$ satisfies the moment condition
\begin{equation}
\label{lm:cons9}\mathbb{E}\left[\left\|\hmj(k)\right\|^{2+\Vap_{1}}\right]<\infty,
\end{equation}
for a constant $\Vap_{1}>0$ (see~\eqref{M1moment}).

Let $\oz_{k}=(1/N)\mathbf{1}_{N}^{T}\mathbf{z}_{k}$ denote the average of the components of $\mathbf{z}_{k}$. Using standard properties of the Laplacian $L_{k}$ and~\eqref{lm:cons6}, it follows that the residual $\wmz_{k}=\mathbf{z}_{k}-\oz_{k}\mathbf{1}_{N}$ evolves as
\begin{equation}
\label{lm:cons10}
\wmz_{k+1}=\left(I_{N}-\beta_{k}L_{k}-\alpha_{k}I_{N}\right)\wmz_{k}+\alpha_{k}\left(\wmu_{k}+\wmj_{k}\right),
\end{equation}
where
\begin{equation}
\label{lm:cons11}
\wmu_{k}=\left(I-(1/N)\mathbf{1}_{N}\mathbf{1}_{N}^{T}\right)\hmu(k)~~\mbox{and}~~\wmj_{k}=\left(I-(1/N)\mathbf{1}_{N}\mathbf{1}_{N}^{T}\right)\hmj(k)
\end{equation}
for all $k$. Noting that, by construction, $\wmz_{k}\in\PC$ (see Definition~\ref{def:cons}) for all $k$, and hence, by Lemma~\ref{lm:conn}, there exists a measurable $\{\mathcal{H}_{k+1}\}$ adapted $\mathbb{R}_{+}$ valued process $\{r_{k}\}$ and a constant $c_{r}>0$, such that $0\leq r_{k}\leq 1$ a.s. and
\begin{align}
\label{lm:cons12}
\left\|\left(I_{N}-\beta_{k}L_{k}-\alpha_{k}I_{N}\right)\wmz_{k}\right\|\leq\left\|\left(I_{N}-\beta_{k}L_{k}\right)\wmz_{k}\right\|+\alpha_{k}\wmz_{k}\\
\leq \left(1-r_{k}\right)\left\|\wmz_{k}\right\|+\alpha_{k}\left\|\wmz_{k}\right\|
\end{align}
with
\begin{equation}
\label{lm:cons120}
\mathbb{E}\left[r_{k}~|~\mathcal{H}_{k}\right]\geq\frac{c_{r}}{(k+1)^{\tau_{2}}}
\end{equation}
for all $k$. Since $\tau_{2}<\tau_{1}$ (see Assumption~\textbf{(M.5)}), there exists $k_{0}$ (deterministic) and another constant $c_{2}\in (0,1)$, such that,
\begin{equation}
\label{lm:cons13}
\left\|\left(I_{N}-\beta_{k}L_{k}-\alpha_{k}I_{N}\right)\wmz_{k}\right\|\leq\left(1-c_{2}r_{k}\right)\left\|\wmz_{k}\right\|
\end{equation}
for $k\geq k_{0}$. By~\eqref{lm:cons10} and~\eqref{lm:cons13} we obtain for all $k\geq k_{0}$
\begin{equation}
\label{lm:cons14}
\left\|\wmz_{k+1}\right\|\leq\left(1-c_{2}r_{k}\right)\left\|\wmz_{k}\right\|+\alpha_{k}\left(\left\|\wmu_{k}\right\|+\left\|\wmj_{k}\right\|\right).
\end{equation}
Note that the process $\{\|\wmu_{k}\|\}$ is pathwise bounded by Lemma~\ref{lm:bounded} and $\{\|\wmj_{k}\|\}$ is i.i.d. satisfying the moment condition in~\eqref{lm:cons9}. Hence, the update in~\eqref{lm:cons14} falls under the purview of Lemma~\ref{lm:mean-conv}, and we conclude that
$(k+1)^{\tau}\wmz_{k}\rightarrow\mathbf{0}$ as $k\rightarrow\infty$ a.s. for all $\tau\in\left(0,\tau_{1}-\tau_{2}-1/(2+\Vap_{1})\right)$. In particular, $\wmz_{k}\rightarrow\mathbf{0}$ as $k\rightarrow\infty$ a.s., and, since $\{\Qiu(t)\}$ is a piecewise constant interpolation of $\{\mathbf{z}_{k}\}$, we obtain
\begin{equation}
\label{lm:cons15}
\mathbb{P}\left(\lim_{t\rightarrow\infty}\left|Q^{n}_{i,u}(t)-\overline{Q}_{i,u}(t)\right|=0\right)=1,
\end{equation}
for each agent $n$, with $\overline{Q}_{i,u}(t)=(1/N)\mathbf{1}_{N}^{T}\Qiu(t)$ denoting the component-wise average of $\Qiu(t)$. Since the above can be shown for each state-action pair $(i,u)$, the assertion follows.
\end{proof}

\subsection{$\mathcal{QD}$-learning: Averaged Dynamics}
\label{sub:proof} This section investigates the asymptotics of the network-averaged iterate $\{\bQ_{t}\}$ (see~\eqref{sub:cons1}). Since the agents reach consensus asymptotically (Lemma~\ref{lm:cons}), it suffices to establish the convergence of $\{\bQ_{t}\}$ in order to obtain the main result of this paper.

To this end, consider the (centralized) $Q$-learning operator $\bG:\RXU\mapsto\RXU$, whose $(i,u)$-th component $\bGiu:\RXU\mapsto\mathbb{R}$ is given by
\begin{equation}
\label{sub:proof1}
\bGiu\left(\mathbf{Q}\right)=(1/N)\sum_{n=1}^{N}\mathbb{E}\left[c_{n}(i,u)\right]+\gamma\sum_{j\in\mathcal{X}}p_{i,j}^{u}\min_{v\in\mathcal{U}}Q_{j,v}
\end{equation}
for all $\mathbf{Q}\in\mathbb{R}^{|\mathcal{X}\times\mathcal{U}|}$. Note that, informally, $\bG(\cdot)$ is the average of the local $Q$-learning operators, i.e., for each $\mathbf{Q}\in\RXU$ and state-action pair $(i,u)$, we have
\begin{equation}
\label{sub:proof2}\bGiu(\mathbf{Q})=(1/N)\sum_{n=1}^{N}\mathcal{G}^{n}_{i,u}(\mathbf{Q}),
\end{equation}
where the operators $\mathcal{G}^{n}_{i,u}(\cdot)$ are defined in~\eqref{def_Giu}. It is readily seen that the following assertion holds:
\begin{proposition}
\label{prop:G} The (centralized) $Q$-learning operator is contraction. Specifically, we have
\begin{equation}
\label{prop:G1}\left\|\bG\left(\mathbf{Q}\right)-\bG\left(\mathbf{Q}^{\prime}\right)\right\|_{\infty}\leq\gamma\left\|\mathbf{Q}-\mathbf{Q}^{\prime}\right\|_{\infty}~~\forall\mathbf{Q},\mathbf{Q}^{\prime}\in\RXU.
\end{equation}
Also, denoting by $\mathbf{Q}^{\ast}$ the unique fixed point of $\bG(\cdot)$, we note that $\min_{u\in\mathcal{U}}Q^{\ast}_{i,u}=V^{\ast}_{i}$ for each $i\in\mathcal{X}$, where $V^{\ast}_{i}$ denotes the $i$-th component of the unique fixed point $\mathbf{V}^{\ast}$ of the (centralized) dynamic programming operator $\mathcal{T}(\cdot)$, see~\eqref{opt-cost}-\eqref{def_T}.
\end{proposition}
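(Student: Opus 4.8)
The plan is to treat the two assertions in turn: first establish that $\bG$ is a $\gamma$-contraction in the $\mathcal{L}_{\infty}$ norm, which by the Banach fixed point theorem immediately yields existence and uniqueness of $\mathbf{Q}^{\ast}$; then identify the component-wise minimum of $\mathbf{Q}^{\ast}$ with the value function $\mathbf{V}^{\ast}$ by showing that this minimum solves the Bellman fixed point equation for $\mathcal{T}(\cdot)$ and invoking uniqueness.

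For the contraction, I would fix $\mathbf{Q},\mathbf{Q}'\in\RXU$ and a state-action pair $(i,u)$. Since the cost term $(1/N)\sum_{n}\mathbb{E}[c_{n}(i,u)]$ appearing in $\bGiu$ does not depend on the argument, it cancels in the difference, leaving $\bGiu(\mathbf{Q})-\bGiu(\mathbf{Q}')=\gamma\sum_{j}p_{i,j}^{u}\left(\min_{v}Q_{j,v}-\min_{v}Q'_{j,v}\right)$. I would then invoke the elementary non-expansiveness of the minimum, $\left|\min_{v}Q_{j,v}-\min_{v}Q'_{j,v}\right|\leq\max_{v}\left|Q_{j,v}-Q'_{j,v}\right|\leq\|\mathbf{Q}-\mathbf{Q}'\|_{\infty}$, together with the fact that the transition law is a probability distribution, $p_{i,j}^{u}\geq 0$ and $\sum_{j}p_{i,j}^{u}=1$, to obtain $\left|\bGiu(\mathbf{Q})-\bGiu(\mathbf{Q}')\right|\leq\gamma\|\mathbf{Q}-\mathbf{Q}'\|_{\infty}$ uniformly in $(i,u)$. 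Taking the supremum over $(i,u)$ gives~\eqref{prop:G1}. As $0<\gamma<1$, the map is a strict contraction on the complete space $(\RXU,\|\cdot\|_{\infty})$, so a unique fixed point $\mathbf{Q}^{\ast}$ exists.

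For the second claim, I would define the candidate value vector $\widetilde{\mathbf{V}}$ by $\widetilde{V}_{i}=\min_{u\in\mathcal{U}}Q^{\ast}_{i,u}$ and show that $\mathcal{T}(\widetilde{\mathbf{V}})=\widetilde{\mathbf{V}}$. The key observation is that the fixed-point relation $Q^{\ast}_{i,u}=\bGiu(\mathbf{Q}^{\ast})$ reads $Q^{\ast}_{i,u}=(1/N)\sum_{n}\mathbb{E}[c_{n}(i,u)]+\gamma\sum_{j}p_{i,j}^{u}\widetilde{V}_{j}$, precisely because $\min_{v}Q^{\ast}_{j,v}=\widetilde{V}_{j}$ by definition. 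Substituting this into the definition~\eqref{def_T} of $\mathcal{T}_{i}$ then gives $\mathcal{T}_{i}(\widetilde{\mathbf{V}})=\min_{u}\left\{(1/N)\sum_{n}\mathbb{E}[c_{n}(i,u)]+\gamma\sum_{j}p_{i,j}^{u}\widetilde{V}_{j}\right\}=\min_{u}Q^{\ast}_{i,u}=\widetilde{V}_{i}$. Hence $\widetilde{\mathbf{V}}$ is a fixed point of $\mathcal{T}(\cdot)$; since $\mathcal{T}(\cdot)$ is itself a strict contraction for $\gamma<1$ with unique fixed point $\mathbf{V}^{\ast}$ (as recalled after~\eqref{def_T}), uniqueness forces $\widetilde{\mathbf{V}}=\mathbf{V}^{\ast}$, i.e. $\min_{u\in\mathcal{U}}Q^{\ast}_{i,u}=V^{\ast}_{i}$ for every $i\in\mathcal{X}$.

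I do not expect a genuine obstacle here: the proposition is essentially bookkeeping that links the $Q$-factor and value-function forms of the Bellman equation. The only point requiring mild care is the non-expansiveness of the $\min$ operator and the observation that the $\mathcal{L}_{\infty}$ norm is the natural metric rendering both $\bG(\cdot)$ and $\mathcal{T}(\cdot)$ contractions of the same modulus $\gamma$; everything else follows by direct substitution and the uniqueness of fixed points of strict contractions.
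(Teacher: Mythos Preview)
Your proposal is correct and complete. The paper does not actually supply a proof of this proposition: it is introduced with the phrase ``It is readily seen that the following assertion holds'' and then simply stated, so your argument fills in exactly the standard details the authors leave implicit.
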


The convergence of the network-averaged iterate process $\{\bQ_{t}\}$ (see~\eqref{sub:cons1}) will be established in this section as follows:
\begin{lemma}
\label{lm:conv} Under~\textbf{(M.1)-(M.5)} we have
\begin{equation}
\label{lm:conv1}
\mathbb{P}\left(\lim_{t\rightarrow\infty}\left\|\bQ_{t}-\mathbf{Q}^{\ast}\right\|=0\right)=1,
\end{equation}
where $\mathbf{Q}^{\ast}$ is the fixed point of $\mathcal{G}(\cdot)$ (see Proposition~\ref{prop:G}).
\end{lemma}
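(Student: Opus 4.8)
The plan is to show that the network-averaged process $\{\bQ_{t}\}$ obeys, component-by-component, an asynchronous stochastic-approximation recursion for the sup-norm contraction $\bG$, and then to drive it to the fixed point $\mathbf{Q}^{\ast}$ by peeling off the martingale noise and closing a self-referential $\limsup$ estimate with the scalar lemmas already established. First I would derive the scalar recursion for each coordinate $\bQiu(t)$. Starting from the vectorized update~\eqref{lm:cons2} and left-multiplying by $(1/N)\mathbf{1}_{N}^{T}$, the consensus term collapses because $\mathbf{1}_{N}^{T}L_{t}=\mathbf{0}$, leaving
\begin{equation}
\bQiu(t+1)=\left(1-\aiu(t)\right)\bQiu(t)+\aiu(t)\left[\tfrac{1}{N}\sum_{n=1}^{N}\mathcal{G}^{n}_{i,u}(\mathbf{Q}^{n}_{t})+\overline{\nu}_{i,u}(t)\right],
\end{equation}
where $\overline{\nu}_{i,u}(t)=(1/N)\mathbf{1}_{N}^{T}\niu(t)$ is an $\{\mathcal{F}_{t}\}$ martingale difference. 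Adding and subtracting $\bGiu(\bQ_{t})=(1/N)\sum_{n}\mathcal{G}^{n}_{i,u}(\bQ_{t})$ (see~\eqref{sub:proof2}) isolates the consensus error $e_{i,u}(t)=(1/N)\sum_{n}[\mathcal{G}^{n}_{i,u}(\mathbf{Q}^{n}_{t})-\mathcal{G}^{n}_{i,u}(\bQ_{t})]$. Since each $\mathcal{G}^{n}_{i,u}$ is $\gamma$-Lipschitz in $\|\cdot\|_{\infty}$ (the expected-cost term cancels and $Q\mapsto\min_{v}Q_{j,v}$ is non-expansive), we have $|e_{i,u}(t)|\leq\gamma\max_{n}\|\mathbf{Q}^{n}_{t}-\bQ_{t}\|_{\infty}$, which tends to $0$ a.s. by the consensus Lemma~\ref{lm:cons}.

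Next I would remove the martingale noise. Define the scalar auxiliary process $\{p_{i,u}(t)\}$ with $p_{i,u}(0)=0$ and $p_{i,u}(t+1)=(1-\aiu(t))p_{i,u}(t)+\aiu(t)\overline{\nu}_{i,u}(t)$; this is exactly the recursion~\eqref{lm:int-bound100} with the Laplacian dynamics dropped, so by Lemma~\ref{lm:int-bound} together with Remark~\ref{rem:lm:int} (which permits $\biu\equiv 0$) we obtain $p_{i,u}(t)\to 0$ a.s. The conditional second-moment hypothesis~\eqref{lm:int-bound200} holds after a routine localization on the a.s. finite random bound $\sup_{t}\|\mathbf{Q}_{t}\|_{\infty}$ furnished by Lemma~\ref{lm:bounded} (truncating $\mathbf{Q}_{t}$ at level $m$ and letting $m\to\infty$), which renders $\mathbb{E}[\overline{\nu}_{i,u}^{2}(t)\,|\,\mathcal{F}_{t}]$ uniformly bounded via~\eqref{lm:bounded4}. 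Setting $r_{i,u}(t)=\bQiu(t)-Q^{\ast}_{i,u}-p_{i,u}(t)$ and using $Q^{\ast}_{i,u}=\bGiu(\mathbf{Q}^{\ast})$ (Proposition~\ref{prop:G}) yields
\begin{equation}
r_{i,u}(t+1)=\left(1-\aiu(t)\right)r_{i,u}(t)+\aiu(t)\left[\bGiu(\bQ_{t})-\bGiu(\mathbf{Q}^{\ast})+e_{i,u}(t)\right].
\end{equation}
Because $\bQ_{t}-\mathbf{Q}^{\ast}=r(t)+p(t)$, the contraction estimate~\eqref{prop:G1} gives $|\bGiu(\bQ_{t})-\bGiu(\mathbf{Q}^{\ast})|\leq\gamma(\|r(t)\|_{\infty}+\|p(t)\|_{\infty})$, so $|r_{i,u}(t+1)|\leq(1-\aiu(t))|r_{i,u}(t)|+\aiu(t)\bigl[\gamma(\|r(t)\|_{\infty}+\|p(t)\|_{\infty})+|e_{i,u}(t)|\bigr]$.

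Finally I would close the argument coordinate-wise. By Lemma~\ref{lm:bounded} and $p(t)\to 0$ the quantity $L^{\ast}=\limsup_{t}\|r(t)\|_{\infty}$ is finite a.s. For each fixed $(i,u)$ the above scalar inequality has the form of Proposition~\ref{lm:decsup} with step sizes $\aiu(t)\in[0,1]$ satisfying $\sum_{t}\aiu(t)=\infty$ (the per-pair persistence from~\textbf{(M.4)}--\textbf{(M.5)}), and with perturbation whose $\limsup$ is at most $\gamma L^{\ast}$ since $p(t),e_{i,u}(t)\to 0$; hence $\limsup_{t}|r_{i,u}(t)|\leq\gamma L^{\ast}$. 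Taking the maximum over the finitely many pairs $(i,u)$ gives $L^{\ast}\leq\gamma L^{\ast}$, and as $\gamma\in(0,1)$ and $L^{\ast}<\infty$ this forces $L^{\ast}=0$; thus $r(t)\to\mathbf{0}$ and, with $p(t)\to\mathbf{0}$, we conclude $\bQ_{t}\to\mathbf{Q}^{\ast}$ a.s. The main obstacle is the asynchronous, sup-norm nature of the coupling: the contraction mixes all coordinates while only the sampled pair is updated at any instant, so the reduction to a well-posed contractive recursion genuinely relies on applying Proposition~\ref{lm:decsup} separately in each coordinate using the per-pair divergence of the step sizes, after the noise and consensus perturbations have been shown to vanish via Lemma~\ref{lm:int-bound} and Lemma~\ref{lm:cons}. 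The localization needed to meet the uniform moment bound in Lemma~\ref{lm:int-bound} is the only other nontrivial technicality.
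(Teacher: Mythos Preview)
Your proposal is correct and follows essentially the same route as the paper: derive the averaged recursion by killing the Laplacian, peel off a vanishing auxiliary process via Lemma~\ref{lm:int-bound} (with the localization the paper packages as Lemma~\ref{lm:intcon} using Egorov and stopping times), and close a self-referential $\limsup$ estimate coordinate-wise through Proposition~\ref{lm:decsup} and the $\gamma$-contraction of $\bG$. The only cosmetic differences are that the paper absorbs both the martingale noise $\bniu$ and the consensus residual $\bviu$ into a single auxiliary process $\sziu$ (your $p_{i,u}$ carries only the noise, leaving $e_{i,u}$ in the residual recursion), and the paper phrases the final step as a contradiction with a margin $\gamma(1+\delta)<1$ rather than your direct inequality $L^{\ast}\leq\gamma L^{\ast}$.
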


The following result will be used in the proof of Lemma~\ref{lm:conv}.
\begin{lemma}
\label{lm:intcon}
For each state-action pair $(i,u)$, let $\{\sziu(t)\}$ denote the $\{\mathcal{F}_{t}\}$ adapted real-valued process evolving as
\begin{equation}
\label{lm:intcon1}
\mziu(t+1)=\left(1-\alpha_{i,u}(t)\right)\sziu(t)+\alpha_{i,u}(t)\left(\bniu(t)+\bviu(t)\right),
\end{equation}
where the weight sequence $\{\alpha_{i,u}(t)\}$ is given by~\eqref{def_alpha} and $\{\bniu(t)\}$ is an $\{\mathcal{F}_{t+1}\}$ adapted process satisfying $\mathbb{E}[\bniu(t)~|~\mathcal{F}_{t}]=0$ for all $t$ and
\begin{equation}
\label{lm:intcon2}
\mathbb{P}\left(\sup_{t\geq 0}\mathbb{E}\left[\bniu^{2}(t)~|~\mathcal{F}_{t}\right]<\infty\right)=1.
\end{equation}
Further, the process $\{\bviu(t)\}$ is $\{\mathcal{F}_{t}\}$ adapted, such that, $\bviu(t)\rightarrow 0$ as $t\rightarrow\infty$ a.s.
Then, under~\textbf{(M.4)}-\textbf{(M.5)}, we have $\sziu(t)\rightarrow 0$ as $t\rightarrow\infty$ a.s.
\end{lemma}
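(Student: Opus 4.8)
The plan is to exploit the linearity of the recursion~\eqref{lm:intcon1} in its forcing term, splitting the dynamics into a purely martingale-driven part and a purely bias-driven part, and treating each with a result already in hand. Concretely, I would write $\sziu(t)=y_{t}+w_{t}$, where $\{y_{t}\}$ and $\{w_{t}\}$ are the $\{\mathcal{F}_{t}\}$ adapted processes solving
\begin{equation}
y_{t+1}=\left(1-\aiu(t)\right)y_{t}+\aiu(t)\bniu(t),\qquad y_{0}=\sziu(0),
\end{equation}
\begin{equation}
w_{t+1}=\left(1-\aiu(t)\right)w_{t}+\aiu(t)\bviu(t),\qquad w_{0}=0.
\end{equation}
By uniqueness of the solution to~\eqref{lm:intcon1} one has $\sziu(t)=y_{t}+w_{t}$ for all $t$, so it suffices to show $y_{t}\rightarrow 0$ and $w_{t}\rightarrow 0$ a.s.\ separately.

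For the bias-driven part $\{w_{t}\}$, I would argue pathwise. On the a.s.\ event on which $\bviu(t)\rightarrow 0$, each pair $(i,u)$ is sampled infinitely often (\textbf{(M.4)}), so that $\sum_{t}\aiu(t)=\infty$ by \textbf{(M.5)}, and $\aiu(t)\in[0,1]$ by the smallness of $a$ (see Remark~\ref{rem:weights}). Hence the hypotheses of Proposition~\ref{lm:decsup} hold with $\Vap_{t}=\bviu(t)$ and $R=0$, yielding $\limsup_{t}w_{t}\leq 0$. Applying Proposition~\ref{lm:decsup} to $\{-w_{t}\}$, which obeys the same recursion with forcing $-\bviu(t)\rightarrow 0$, gives $\liminf_{t}w_{t}\geq 0$, so $w_{t}\rightarrow 0$ a.s.

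For the martingale-driven part $\{y_{t}\}$, the defining recursion is, componentwise, exactly~\eqref{lm:int-bound100} with the Laplacian dynamics dropped ($\beta_{i,u}(t)\equiv 0$), a case covered by Remark~\ref{rem:lm:int}. The one gap is that Lemma~\ref{lm:int-bound} requires a \emph{deterministic} bound $K$ on $\mathbb{E}[\bniu^{2}(t)\,|\,\mathcal{F}_{t}]$, whereas~\eqref{lm:intcon2} only guarantees that this conditional second moment is a.s.\ finite, i.e.\ bounded by a possibly random constant. I would close this by localization, in the spirit of the indicator construction used to prove Lemma~\ref{lm:int-bound}: for each integer $K$ define the $\{\mathcal{F}_{t}\}$ stopping time $\sigma_{K}=\inf\{t:\mathbb{E}[\bniu^{2}(t)\,|\,\mathcal{F}_{t}]>K\}$ and the truncated noise $\bniu^{K}(t)=\bniu(t)\mathbb{I}(t<\sigma_{K})$. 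Since $\{t<\sigma_{K}\}\in\mathcal{F}_{t}$, the process $\{\bniu^{K}(t)\}$ remains a martingale difference and now satisfies $\mathbb{E}[(\bniu^{K}(t))^{2}\,|\,\mathcal{F}_{t}]\leq K$ deterministically, so the solution $\{y_{t}^{K}\}$ of the correspondingly truncated recursion tends to zero a.s.\ by Lemma~\ref{lm:int-bound}. On the event $\mathcal{A}_{K}=\{\sup_{t}\mathbb{E}[\bniu^{2}(t)\,|\,\mathcal{F}_{t}]\leq K\}$ one has $\sigma_{K}=\infty$, hence $y_{t}^{K}=y_{t}$ for all $t$, and therefore $y_{t}\rightarrow 0$ on $\mathcal{A}_{K}$. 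Since $\mathcal{A}_{K}\uparrow$ and $\mathbb{P}(\bigcup_{K}\mathcal{A}_{K})=1$ by~\eqref{lm:intcon2}, letting $K\rightarrow\infty$ gives $y_{t}\rightarrow 0$ a.s.

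Combining the two pieces yields $\sziu(t)=y_{t}+w_{t}\rightarrow 0$ a.s. The main obstacle I anticipate is precisely the mismatch between the random, a.s.-finite second-moment bound in~\eqref{lm:intcon2} and the deterministic bound demanded by Lemma~\ref{lm:int-bound}; the stopping-time truncation above is the crux, and one must verify that truncation preserves both the martingale-difference property and the now-deterministic conditional-variance bound. Everything else -- the linear decomposition and the two-sided application of Proposition~\ref{lm:decsup} -- is routine given the results already established.
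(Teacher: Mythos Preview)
Your proposal is correct and is essentially the paper's own argument: split into a bias-driven part handled pathwise via Proposition~\ref{lm:decsup} and a martingale-driven part handled by Lemma~\ref{lm:int-bound} (with $\beta_{i,u}\equiv 0$, cf.\ Remark~\ref{rem:lm:int}) after a stopping-time truncation to upgrade the a.s.\ random variance bound to a deterministic one. The only cosmetic differences are that the paper applies Proposition~\ref{lm:decsup} to $|y_{i,u}(t)|$ in one shot rather than to $\pm w_{t}$, and invokes Egorov's theorem to produce the large-probability level sets, whereas your direct use of the exhausting events $\mathcal{A}_{K}=\{\sup_{t}\mathbb{E}[\bniu^{2}(t)\,|\,\mathcal{F}_{t}]\leq K\}$ is slightly cleaner and avoids that unnecessary detour.
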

\begin{proof} Consider the auxiliary process $\{y_{i,u}(t)\}$ with
\begin{equation}
\label{lm:intcon3}
y_{i,u}(t+1)=\left(1-\aiu(t)\right)y_{i,u}(t)+\aiu(t)\bviu(t)
\end{equation}
for all $t$. Note that $\bviu(t)\rightarrow 0$ as $t\rightarrow\infty$ a.s. and, hence, for every $\delta>0$, there exists $t_{\delta}$ (random), such that, $\bviu(t)\leq\delta$ for all $t\geq t_{\delta}$. Hence, by~\eqref{lm:intcon3} for $t\geq t_{\delta}$, we obtain
\begin{equation}
\label{lm:intcon4}|y_{i,u}(t+1)|\leq\left(1-\aiu(t)\right)|y_{i,u}(t)|+\aiu(t)\delta,
\end{equation}
which, by a pathwise application of Proposition~\ref{lm:decsup}, leads to
\begin{equation}
\label{lm:intcon5}
\limsup_{t\rightarrow\infty}|y_{i,u}(t)|\leq\delta.
\end{equation}
Since $\delta>0$ is arbitrary in~\eqref{lm:intcon5}, we conclude that $y_{i,u}(t)\rightarrow 0$ as $t\rightarrow\infty$ a.s. Now, let us denote by $\{\wziu(t)\}$ the $\{\mathcal{F}_{t}\}$ adapted process that satisfies $\wziu(t)=\sziu(t)-y_{i,u}(t)$ for all $t$. Then,
\begin{equation}
\label{lm:intcon6}
\wziu(t+1)=\left(1-\aiu(t)\right)\wziu(t)+\aiu(t)\bniu(t)
\end{equation}
for all $t$. The hypothesis~\eqref{lm:intcon2} and Egorov's theorem implies that, for every $\pdelta>0$, there exists a constant $K_{\pdelta}>0$, such that
\begin{equation}
\label{lm:intcon7}
\mathbb{P}\left(\sup_{t\geq 0}\mathbb{E}\left[\bniu^{2}(t)~|~\mathcal{F}_{t}\right]\leq K_{\pdelta}\right)>1-\pdelta.
\end{equation}
Let $\tau_{\pdelta}$ denote
\begin{equation}
\label{lm:intcon8}
\tau_{\pdelta}=\min\left\{t\geq 0~:~\mathbb{E}\left[\bniu^{2}(t)~|~\mathcal{F}_{t}\right]>\delta\right\},
\end{equation}
and note that it readily follows that $\tau_{\pdelta}$ is a stopping time w.r.t. $\{\mathcal{F}_{t}\}$ (see~\cite{Jacod-Shiryaev}). Also, let $\{\bniu^{\pdelta}(t)\}$ be the $\{\mathcal{F}_{t+1}\}$ adapted process, such that, $\bniu^{\pdelta}(t)=\bniu(t)\mathbb{I}(t<\tau_{\pdelta})$ for all $t$, and note that
\begin{equation}
\label{lm:intcon9}
\mathbb{E}\left[\bniu^{\pdelta}(t)~|~\mathcal{F}_{t}\right]=\mathbb{I}(t<\tau_{\pdelta})\mathbb{E}\left[\bniu(t)~|~\mathcal{F}_{t}\right]=0,
\end{equation}
and
\begin{equation}
\label{lm:intcon10}\mathbb{E}\left[\left(\bniu^{\pdelta}(t)\right)^{2}~|~\mathcal{F}_{t}\right]=\mathbb{I}(t<\tau_{\pdelta})\mathbb{E}\left[\bniu^{2}(t)~|~\mathcal{F}_{t}\right]\leq K_{\pdelta},
\end{equation}
for all $t$, where the last inequality uses the definition of $\tau_{\pdelta}$, see~\eqref{lm:intcon8}. Finally, introduce the $\{\mathcal{F}_{t}\}$ adapted process $\{\wziu^{\pdelta}(t)\}$ that evolves as
\begin{equation}
\label{lm:intcon11}
\wziu^{\pdelta}(t+1)=\left(1-\aiu(t)\right)\sziu^{\pdelta}(t)+\aiu(t)\bniu^{\pdelta}(t)
\end{equation}
for all $t$, and note that by~\eqref{lm:intcon7}, we have
\begin{equation}
\label{lm:intcon12}
\mathbb{P}\left(\sup_{t\geq 0}\left|\wziu(t)-\wziu^{\pdelta}(t)\right|=0\right)>1-\pdelta.
\end{equation}
With~\eqref{lm:intcon9}-\eqref{lm:intcon10} we note that the process $\{\wziu^{\pdelta}(t)\}$ reduces to a scalar instantiation of the process in the hypothesis of Lemma~\ref{lm:int-bound} (with $\biu(t)$ set to zero for all $t$, see also Remark~\ref{rem:lm:int}), and we obtain $\wziu^{\pdelta}(t)\rightarrow 0$ as $t\rightarrow\infty$ a.s. Hence, by~\eqref{lm:intcon12} we have
\begin{equation}
\label{lm:intcon13}
\mathbb{P}\left(\lim_{t\rightarrow\infty}\wziu(t)=0\right)>1-\pdelta.
\end{equation}
Noting that $\pdelta>0$ above is arbitrary, we obtain $\sziu(t)\rightarrow 0$ as $t\rightarrow\infty$ a.s., which together with the fact that $y_{i,u}(t)\rightarrow 0$ as $t\rightarrow\infty$ a.s. yield $\sziu(t)\rightarrow 0$ as $t\rightarrow\infty$ a.s.
\end{proof}

We now complete the proof of Lemma~\ref{lm:conv}.
\begin{proof}[Lemma~\ref{lm:conv}] Noting that $\mathbf{1}_{N}^{T}L_{t}=\mathbf{0}$ and by~\eqref{lm:cons3} and~\eqref{sub:proof2}, we have for each state-action pair $(i,u)$
\begin{equation}
\label{lm:conv10}\bQiu(t+1)=\left(1-\aiu(t)\right)\bQiu(t)+\aiu(t)\left(\bGiu(\bQ_{t})+\bniu(t)+\bviu(t)\right),
\end{equation}
where $\{\bniu(t)\}$ and $\{\bviu(t)\}$ are $\{\mathcal{F}_{t+1}\}$ and $\{\mathcal{F}_{t}\}$ adapted processes, respectively, such that $\bniu(t)=(1/N)\mathbf{1}^{T}_{N}\niu(t)$ and
\begin{equation}
\label{lm:conv11}
\bviu(t)=(1/N)\sum_{n=1}^{N}\left(\mathcal{G}^{n}_{i,u}(\mathbf{Q}^{n}_{t})-\mathcal{G}^{n}_{i,u}(\bQ_{t})\right)
\end{equation}
for all $t$. Note that $\mathbb{E}[\bniu(t)|\mathcal{F}_{t}]=0$ and the boundedness of the iterate process $\{\mathbf{Q}_{t}\}$ (Lemma~\ref{lm:bounded}) implies
\begin{equation}
\label{lm:conv12}
\mathbb{P}\left(\sup_{t\geq 0}\mathbb{E}\left[\bniu^{2}(t)~|~\mathcal{F}_{t}\right]<\infty\right)=1.
\end{equation}
Observing that the functionals $\mathcal{G}^{n}_{i,u}(\cdot)$ are Lipschitz, we have
\begin{equation}
\label{lm:conv13}
|\bviu(t)|\leq c_{1}\sum_{n=1}^{N}\left\|\mathbf{Q}^{n}_{t}-\bQ_{t}\right\|
\end{equation}
for all $t$, and, hence, by Lemma~\ref{lm:cons}, we conclude that $\bviu(t)\rightarrow 0$ as $t\rightarrow\infty$ a.s.

Now consider the auxiliary $\{\mathcal{F}_{t}\}$ adapted process $\{\sziu(t)\}$ for each state-action pair $(i,u)$, such that
\begin{equation}
\label{lm:conv14}
\sziu(t+1)=\left(1-\aiu(t)\right)\sziu(t)+\aiu(t)\left(\bniu(t)+\bviu(t)\right)
\end{equation}
for all $t$. Based on the above discussion on the properties of the processes $\{\bniu(t)\}$ and $\{\bviu(t)\}$ and Lemma~\ref{lm:intcon}, we conclude that the process $\{\sziu(t)\}$, so constructed, satisfies $\sziu(t)\rightarrow 0$ as $t\rightarrow\infty$ a.s.

By Lemma~\ref{lm:bounded} the process $\{\bQ_{t}\}$ is bounded and hence there exists an a.s. finite random variable $R$, such that
\begin{equation}
\label{lm:conv15} R=\limsup_{t\rightarrow\infty}\left\|\bQ_{t}-\mathbf{Q}^{\ast}\right\|_{\infty}.
\end{equation}
Assume on the contrary that $R\neq 0$ a.s. Then there exists an event $\mathcal{B}$ of positive measure such that $R>0$ on $\mathcal{B}$. To derive a contradiction, let $\delta>0$ be a constant, such that, $\gamma(1+\delta)<1$ and consider the process $\{\hQiu(t)\}$, for each state-action pair $(i,u)$, such that $\hQiu(t)=\bQiu(t)-\sziu(t)-Q^{\ast}_{i,u}$ for all $t$. Noting that $\mathbf{Q}^{\ast}$ is a fixed point of the operator $\bG(\cdot)$, we have using~\eqref{lm:conv10} and~\eqref{lm:conv14}
\begin{equation}
\label{lm:conv16}
\hQiu(t+1)=\left(1-\aiu(t)\right)\hQiu(t)+\aiu(t)\left(\bGiu(bQ_{t})-\bGiu(\mathbf{Q}^{\ast})\right)
\end{equation}
for all $t$. Hence, there exists $t_{\delta}$ (random), such that,
\begin{equation}
\label{lm:conv17}\left\|\bQ_{t}-\mathbf{Q}^{\ast}\right\|_{\infty}\leq R(1+\delta)
\end{equation}
on $\mathcal{B}$ a.s. for all $t\geq t_{\delta}$, Thus, by~\eqref{lm:conv16},
\begin{equation}
\label{lm:conv18}
|\hQiu(t+1)|\leq\left(1-\aiu(t)\right)|\hQiu(t)|+\aiu(t)\gamma(1+\delta)R
\end{equation}
on $\mathcal{B}$ a.s. for all $t\geq t_{\delta}$, where we use the fact that, for each $(i,u)$, the functional $\bGiu(\cdot)$ is a contraction with coefficient $\gamma$, see~\eqref{prop:G1}. A pathwise application of Proposition~\ref{lm:decsup} on~\eqref{lm:conv18} then yields
\begin{equation}
\label{lm:conv19}
\mathbb{P}\left(\limsup_{t\rightarrow\infty}\left|\hQiu(t)\right|\leq\gamma(1+\delta)R\right)\geq\mathbb{P}\left(\mathcal{B}\right)>0.
\end{equation}
Since, the above holds for each state-action pair $(i,u)$ and $\gamma(1+\delta)<1$, we conclude that,
\begin{equation}
\label{lm:conv20}
\limsup_{t\rightarrow\infty}\left\|\bQ_{t}-\mathbf{Q}^{\ast}\right\|_{\infty}<R~~~\mbox{a.s. on $\mathcal{B}$}.
\end{equation}
Since, $\mathcal{B}$ has positive measure, the above contradicts with the hypothesis~\eqref{lm:conv15} and we conclude that $R=0$ a.s. This completes the proof.
\end{proof}

\textbf{Proof of Theorem~\ref{th:main_res}}: The first part of Theorem~\ref{th:main_res} follows from the fact that, for each $n$, $\mathbf{Q}^{n}_{t}\rightarrow\mathbf{Q}^{\ast}$ a.s. as $t\rightarrow\infty$ by Lemma~\ref{lm:cons} and Lemma~\ref{lm:conv}. The second part is an immediate consequence of the characterization of the limiting consensus value $\mathbf{Q}^{\ast}$ achieved in Proposition~\ref{prop:G}.

\section{Simulation Studies}
\label{sec:sim} In this section we simulate the convergence rate behavior of the proposed $\mathcal{QD}$-learning for an example setup. The setup consists of a network of $N=40$ agents with binary-valued state and action spaces, i.e., the cardinality of the state-action space $\mathcal{X}\times\mathcal{U}$ is 4. Thus, in all there are 8 controlled transition parameters $p_{i,j}^{u}$, $i,j\in\mathcal{X}$ and $u\in\mathcal{U}$; 4 of these probabilities were chosen independently by uniformly sampling the interval $[0,1]$, which also fixes the values of the remaining 4. For each $n$ and state-action pair $(i,u)$, the random one stage cost $c_{n}(i,u)$ is assumed to follow a Gaussian distribution with variance 40, the mean (or expected one-stage cost) $\mathbb{E}[c_{n}(i,u)]$ being another random sample (generated independently for each $n$ and state-action pair $(i,u)$) from the uniform distribution on $[0,400]$. The discounting factor was taken to be $\gamma=0.7$. For the purpose of simulating the distributed $\mathcal{QD}$ scheme, we considered a 2-nearest neighbor inter-agent communication topology with a $0.5$ probability of link erasure (for all network links), i.e., the resulting communication network may be viewed as the $N$ agents being symmetrically placed on a circle with each agent exchanging information with its two neighbors on either side. The performance of both the distributed $\mathcal{QD}$-learning and centralized $Q$-learning were simulated on a single (random) state-action trajectory $\{\mathbf{x}_{t},\mathbf{u}_{t}\}$ instantiation - specifically, the state-action trajectory was generated by independently uniformly sampling control actions from $\mathcal{U}$ over time, whereas, the state trajectory $\{\mathbf{x}_{t}\}$ instantiation was generated by sampling $\mathbf{x}_{t+1}$ from the probability distribution $p_{\mathbf{x}_{t},\cdot}^{\mathbf{u}_{t}}$ independently of the past at each time $t$. Fig.~\ref{fig:comp} illustrates the typical sample path convergence behavior of the distributed $\mathcal{QD}$-scheme and the centralized $Q$-learning, in which the distributed $Q$-factors, $Q^{n}_{i,u}(t)$, corresponding to $\mathcal{QD}$-learning were obtained using the recursions~\eqref{up_Q}-\eqref{def_alpha}, whereas, the centralized $Q$-factors, $Q^{c}_{i,u}(t)$, were generated using the centralized $Q$-learning recursions
\begin{equation}
\label{cent-Q-rec}
Q^{c}_{i,u}(t+1)=Q^{c}_{i,u}(t)+\alpha_{i,u}(t)\left((1/N)\sum_{n=1}^{N}c_{n}(\mathbf{x}_{t},\mathbf{u}_{t})+\gamma\min_{v\in\mathcal{U}}Q^{c}_{\mathbf{x}_{t+1},v}(t)-Q^{c}_{i,u}(t)\right)
\end{equation}
\begin{figure}[ptb]
\begin{center}
\includegraphics[height=2.5in, width=3in]{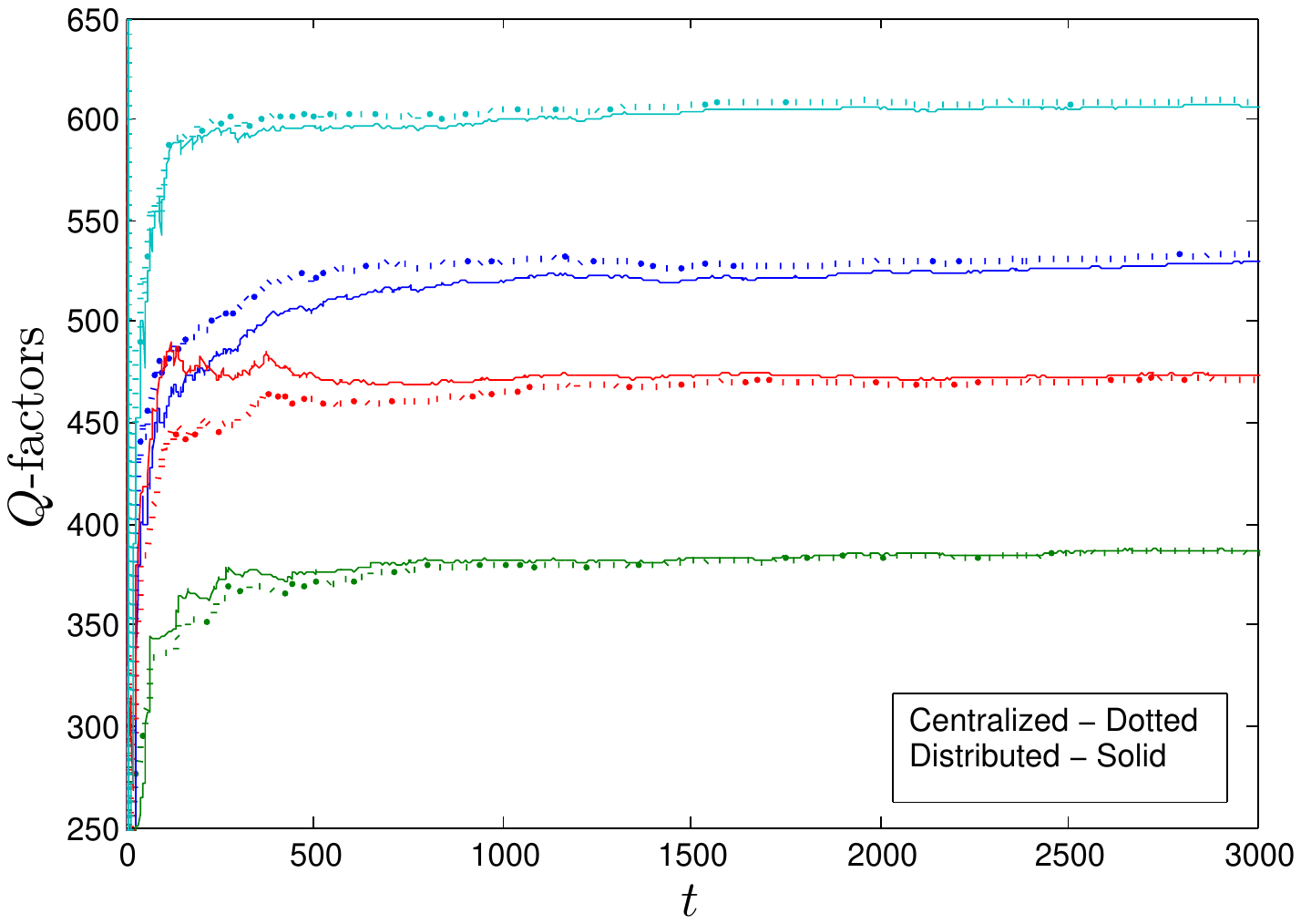}
\includegraphics[height=2.5in, width=3in]{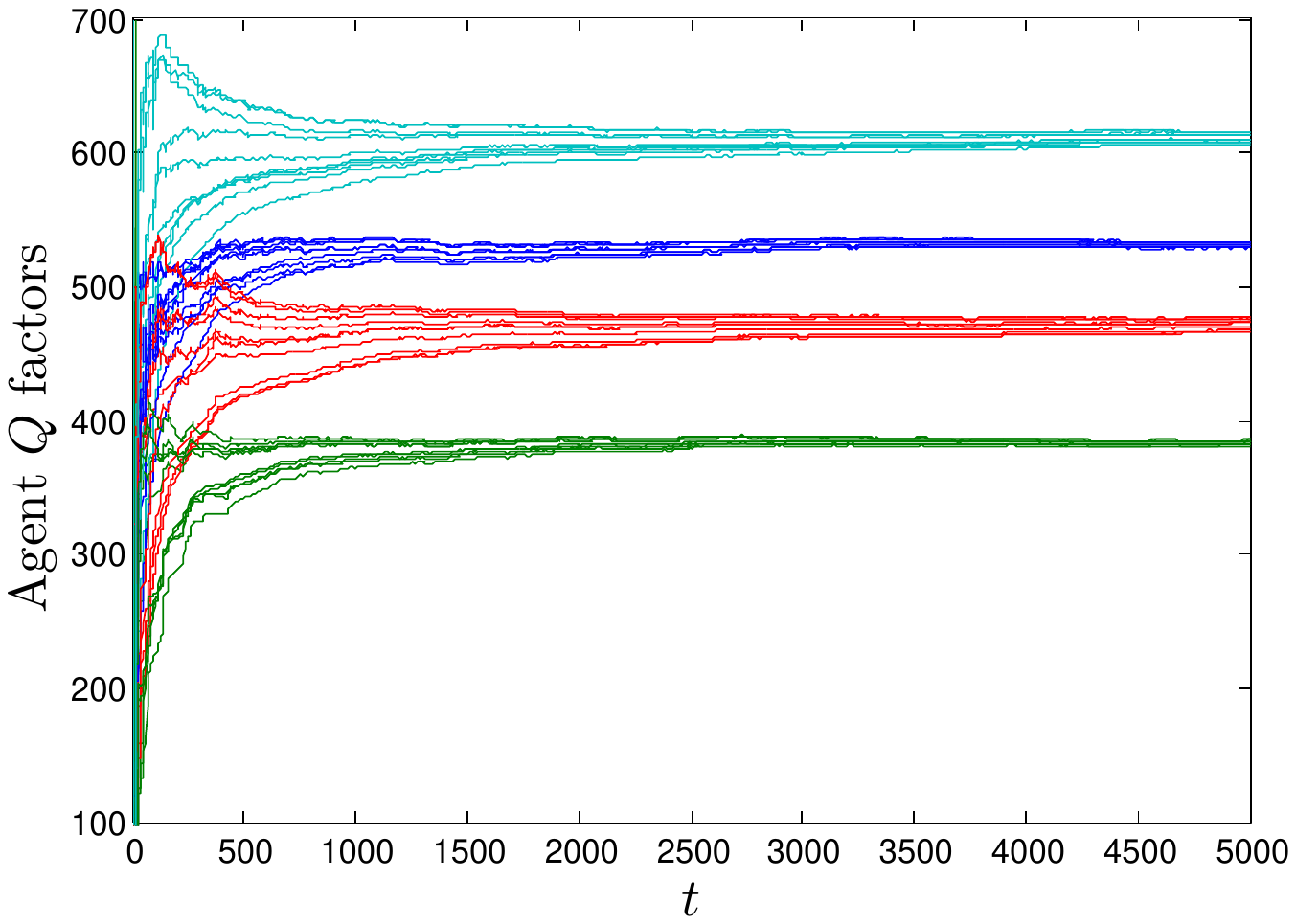}
\caption{Left: Centralized (dotted lines) and distributed $\mathcal{QD}$ (at a randomly uniformly selected agent, solid lines) $Q$-factors. Right: Consensus among distributed $Q$-factors.}
\label{fig:comp}
\label{fig:dist}
\end{center}
\end{figure}
for each state-action pair $(i,u)\in\mathcal{X}\times\mathcal{U}$. The exponent $\tau_{1}$ (see~\textbf{(M.5)}) corresponding to the consensus weight sequence $\{\beta_{i,u}(t)\}$ (in the distributed $\mathcal{QD}$) was set to $0.2$, whereas, the innovation weight sequence $\{\alpha_{i,u}(t)\}$ (for both the $\mathcal{QD}$ and centralized $Q$-learning~\eqref{cent-Q-rec}) exponent $\tau_{1}$ was taken to be 1. In Fig.~\ref{fig:comp} (on the left) we compare the evolution of the centralized $Q$ factors with that of the distributed generated by the $\mathcal{QD}$ recursions at a randomly selected agent, i.e., for each pair $(i,u)\in\mathcal{X}\times\mathcal{U}$ (with a total of 4 such pairs), we plot the trajectories $\{Q^{c}_{i,u}(t)\}$ (depicted by dotted lines) and $\{Q^{n}_{i,u}(t)\}$, at a randomly uniformly selected agent $n$ (depicted by solid lines), corresponding to the centralized and distributed respectively; whereas, Fig.~\ref{fig:dist} (on the right) illustrates the evolution of the $Q$-factors at $10$ randomly (uniformly) selected network agents (for the distributed $\mathcal{QD}$), verifying that they reach consensus on each state-action pair $(i,u)$.

From Fig.~\ref{fig:comp} we readily infer that the convergence rate of $\mathcal{QD}$ is reasonably close to that of centralized $Q$-learning - more importantly, Fig.~\ref{fig:comp} demonstrates that the per-step convergence factor (i.e., the improvement over successive time steps) of $\mathcal{QD}$ approach that of the centralized $Q$-learning asymptotically (in the limit of large time $t$). While the above inference is drawn from the low-dimensional (4 state-action pairs) simulation setup considered (and while the absolute convergence rates of both the centralized and distributed will decline with increasing dimension of the state-action space), we expect the same relative convergence rate trend (i.e., the asymptotic equivalence of the distributed and centralized per-step convergence factors) to hold for more general (higher dimensional) setups. Intuitively, the negligible asymptotic convergence rate loss of the distributed with respect to the centralized is attributed to the asymptotic domination of the consensus potential over the innovations (recall $\beta_{i,u}(t)/\alpha_{i,u}(t)\rightarrow\infty$ as $t\rightarrow\infty$), which enables each agent to essentially track the \emph{network aggregated innovation} instantaneously in the limit of large time.

\section{Conclusion}
\label{conclusion} The paper has investigated a distributed multi-agent reinforcement learning setup in a networked environment, in which the agents (for instance, temperature sensors in smart thermostatically controlled building applications, or, more generally, autonomous entities in social computing and decision making applications) respond differently to a global environmental signal or trend. Our setup is collaborative and non-competitive, with the overall network objective being global welfare, i.e., specifically, the network is interested in learning and evaluating the optimal stationary control strategy that minimizes the network-average infinite horizon discounted one-stage costs. Rather than considering a centralized solution methodology that requires each network agent to forward its instantaneous (random) one-stage cost to a remote centralized supervisor at all times, we have focused on a distributed approach in which the network agents engage in in-network processing (learning) by means of local communication and computation. The resulting distributed version of $Q$-learning, the $\mathcal{QD}$ scheme, has been shown to achieve optimal learning performance asymptotically, i.e., the network agents reach consensus on the desired value function and the corresponding optimal control strategy, under minimal connectivity assumptions on the underlying communication graph. Similar to direct adaptive control formulations (see, for example,~\cite{Tsitsiklis-Q}), we have allowed generic statistical dependence on the state-action trajectories (processes) that drive the learning, which, in turn, in our distributed setting leads to mixed time-scale stochastic evolutions that are non-Markovian (see~\eqref{up_Q} and note that the state $\mathbf{x}_{t}$ and control $\mathbf{u}_{t}$ are general $\{\mathcal{F}_{t}\}$ processes). The analysis methods developed in the paper are of independent interest and we expect our techniques to be applicable to broader classes of distributed information processing and control problems with memory. For a low dimensional (in the size of the state-action space) example, the simulations in Section~\ref{sec:sim} indicate that the convergence rate of the proposed distributed $\mathcal{QD}$ scheme is reasonably close to that of the centralized implementation. While, in the same section it was argued that the per-step convergence rate of the distributed scheme should asymptotically approach that of the centralized in more general scenarios (higher dimensional setups) due to the asymptotic domination of the consensus potential over the innovations, an important future direction would consist of analytically characterizing the convergence rate of $\mathcal{QD}$-learning under further assumptions on the state-action generation, for instance, by imposing specific statistical structure on the simulated state-action pairs, a commonly used approach being simulating the system response by i.i.d. generation of state-action pairs~\cite{Szepesvari-Q}. In such cases, or more generally, cases in which the convergence rate of centralized $Q$-learning
may be characterized~\cite{Dar-Q}, it would be interesting to see whether the proposed distributed $\mathcal{QD}$-learning entails any loss of performance (with respect to convergence rate) or not. Two other practically motivating and challenging future research topics concern the partial state information case, in which the global state process may not be perfectly observable at the local agent level, and the distributed actuation case, in which, instead of a remote controller acting on the global signal, the agents are themselves responsible for local actuations.

\bibliographystyle{IEEEtran}
\bibliography{IEEEabrv,CentralBib}

\end{document}